\newtcolorbox{revise}{colback=red!5!white, colframe=red!75!black}
\theoremstyle{theorem}
\newtheorem{theorem}{Theorem}
\newtheorem{corollary}[theorem]{Corollary}
\newtheorem{lemma}[theorem]{Lemma}
\newtheorem{proposition}[theorem]{Proposition}
\newtheorem{conjecture}[theorem]{Conjecture}
\theoremstyle{definition}
\newtheorem{definition}[theorem]{Definition}
\newtheorem{example}[theorem]{Example}
\begin{document}
\title{Using activation histograms to bound the number of affine regions in ReLU feed-forward neural networks}
\author{Peter Hinz
}

\maketitle
\begin{abstract}
Several current bounds on the maximal number of affine regions of a ReLU feed-forward neural network are special cases of the framework~\cite{hinz} which relies on layer-wise activation histogram  bounds. We analyze and partially solve a problem in algebraic topology the solution of which would fully exploit this framework. Our partial solution already induces slightly tighter bounds and suggests insight in how parameter initialization methods can affect the number of regions. Furthermore, we extend the framework to allow the composition of subnetwork instead of layer-wise activation histogram bounds to reduce the number of required compositions which negatively affect the tightness of the resulting bound.
\end{abstract}
\section{Introduction}
While today's machine learning success is largely driven by deep neural networks, the reason for their superior performance is still not sufficiently understood. Feed-forward neural networks using the rectified linear unit (ReLU) activation function $x\mapsto x^+=\max(0,x)$ are often used in practice and offer an interesting object to study since they are piece-wise affine-linear on convex regions referred to as \emph{affine regions}. Recent research covers both, empirical and theoretical studies on these regions. In \cite{zhang2020empirical} the local properties such as directions of the corresponding hyperplanes and the decision boundaries for networks trained with different optimization algorithms were analyzed. Of particular interest is also number of affine regions due to the following reasons:
\begin{itemize}
	\item The number of regions are a natural measure of function complexity and expressivity. Insight in the dependency of the number of affine regions on network architectures can be helpful for the construction of networks better suited for specific purposes. For example the work~\cite{pmlr-v119-xiong20a} analyses the number of regions per parameter in convolutional neural networks (CNNs) and compares the results to fully connected networks to conclude a higher expressive power per parameter.
	\item A detailed analysis on the number or regions on initialization, after training and their theoretical maximum might provide an approach to explain the success of deep learning in practice. Works such as \cite{hanin2019complexity} and \cite{hanin2019deep} suggest that practically, the theoretical maximum is not achieved even after training concluding that deep neural networks typically do not exploit their full expressivity thus avoiding overfitting to some extent.
	\item Since the activation function and the network weight initialization play a crucial role for the training process and the quality of the trained network~\cite{HowToStartTraining}, \cite{ImpactActivation}, proposals for new parameter initialization techniques and their theoretical properties are topics of active research~\cite{DNNDecisionTrees}, \cite{kumar2017weight}. Concerning deep ReLU networks, it has been noted that starting with large flexibility (over-parameterization) at initialization is beneficial~\cite{Zou2020} for training process whereas parameter initializations inducing vanishing information (dying ReLU) are harmful~\cite{CiCP-28-1671}. 
		In this light, it is a natural question to ask how the number of regions of the initialized network can be maximized. The search for tighter upper bounds on the number of regions in our work combines sophisticated methodology from several fields such as linear algebra, hyperplane arrangements and algebraic topology, and provides deeper insight in how layers have to be combined and parameters have to be chosen to achieve a high number of regions. Such insight might be relevant for the construction of new parameter initialization methods. For example, our analysis suggest that layer parameters (weights and biases) inducing a ``hot center'' where many neurons are active at the same time, favor more regions.
\end{itemize}

The use of activation histograms has proven to provide a fruitful ansatz to upper bound the number of affine regions of fully connected ReLU feed-forward neural networks. In the work~\cite{hinz} we presented a framework that generalizes previous bounds~\cite{Montufar:2014:NLR:2969033.2969153}, \cite{DBLP:BoundingCounting} and allows to construct even tighter bounds. Its basic idea is to push an upper bound histogram on the histogram of the regions' dimensionalities through the layers starting with a histogram representing the input space. When passing through a network layer, this upper bound histogram is affected by linear transformations which are based on a worst-case analysis of how many regions of which dimensionality can arise from a region of a certain dimension when passed through this network layer. To this end, it is necessary to analyze activation histograms of the hyperplanes induced by the kink of the ReLU activation function used in each neuron.

In this work we want to further exploit and extend the aforementioned framework. This requires tighter worst-case bounds on activation histograms and leads to a problem in the fields of algebraic topology and hyperplane arrangements: What is the tightest upper bound on the activation histograms induced by $p_1$ oriented hyperplanes in $\mathbb{R}^{n_0}$ for $n_0,n_1\in \mathbb{N}_+$? The solution of this problem in the form of explicit histograms will induce the best bound obtainable via framework~\cite{hinz}, thus making it obsolete as a framework. We will examine and solve this problem for $n_1=1$ and also for $n_0\ge n_1$. Via a recursive bound on the activation histograms it is possible to conclude slightly tighter bounds on the number of regions than previously done using the framework. For the case $n_1=2$ we motivate a conjecture for the tightest histogram bounds. Both, our solution for the case $n_1=1$ and our conjecture suggest that a high number of regions is achieved when the transition functions from one layer to the next form a ``hot center'', a central region where all neurons are active.

The bounds on the number of regions obtained from the framework are not tight, even when optimal worst-case activation histograms bounds are used. For every layer transformation of the histogram of region dimensionalities, the worst-case activation histogram is used to bound the histogram of region dimensionalities in the next layer. However, it is too conservative to assume the worst case jointly for all regions such that for every composition, tightness is lost. To reduce the number of compositions necessary, we extend the previous framework to allow the composition of worst-case activation histogram bounds corresponding to whole subnetworks instead of such bounds per layer.

This work is structured as follows. We briefly introduce necessary mathematical definitions and concepts in Section~\ref{sec:preliminaries}. In Section~\ref{sec:firstOrder} we focus on the algebraic topology problem that allows to fully exploit the previous framework~\cite{hinz} and present our partial solution together with a conjecture for input dimension $p_0=2$. A recursive histogram bound allows us to make use of the cases $p_0=1$ and $p_0=2$ for larger input dimensions $p_0$ such that we are able to provide slightly improve on previous bounds on the number of regions. Then we generalize the framework for the composition of subnetwork activation histogram bounds in Section~\ref{sec:higherOrder}. In Section~\ref{sec:summary} we summarize our results.

\section{Preliminaries}
\label{sec:preliminaries}
We denote the non-negative and positive integers by $\mathbb{N}$, $\mathbb{N}_+$ respectively. For a depth $L\in\mathbb{N}_+$, a feed-forward neural net $f:\mathbb{R}^{n_0}\to\mathbb{R}^{n_L}$ with ReLU activation functions, input dimensionality $n_0\in\mathbb{R}$ hidden layer widths $n_1,\dots,n_{L-1}\in\mathbb{N}_+$ and output layer width $n_L\in\mathbb{N}_+$ is a function composition of the form
\begin{equation}
	\label{eq:neuralNet}
	f=f_{\mathbf{h}}=h_L\circ\dots\circ h_1
\end{equation} of \emph{layer transition functions} $\mathbf{h}=(h_1,\dots,h_L)$ with $h_i\in\textnormal{RL}(n_{i-1},n_i)$ for $i\in\left\{ 1,\dots,L \right\}$,
\begin{equation}
	\textnormal{RL}(p_0,p_1)=\left\{h:
		\begin{aligned}
			\mathbb{R}^{p_0}&\to\mathbb{R}^{p_1}\\
			x&\mapsto \textnormal{ReLU}(W_hx+b_h)
		\end{aligned}
	\middle\vert 
	\begin{array}{l}
	W_h\in\mathbb{R}^{p_i\times p_{i-1}}\\
b_h\in\mathbb{R}^{p_i}
	\end{array}\right\}\quad p_0,p_1\in\mathbb{N}_+
\end{equation}and activation function $\textnormal{ReLU}:\mathbb{R}\to\mathbb{R}, x\mapsto \max(0,x)$ which is applied component-wise. Here, $W_h$ is called the \emph{weight matrix} and $b_h$ the \emph{bias vector}. Usually a final affine map is applied to the output of the last ReLU layer, however such a map cannot increase the number of regions such that we omit it in our definition in equation~\eqref{eq:neuralNet}. 

For $p_0,p_1\in\mathbb{N}_+$, a layer transition function $h\in\textnormal{RL}(p_0,p_1)$ and $i\in\left\{ 1,\dots,p_1 \right\}$ we say that the $i$-th neuron is \emph{active} at input $x\in\mathbb{R}^{p_0}$ if $h(x)_i>0$ otherwise, it is \emph{inactive}. We can encode this activity of the neurons in a binary $p_1$-tuple $S_h(x)\in\left\{ 0,1 \right\}^{p_1}$ via the condition $S_h(x)_i=1$ if and only if $h(x)_i>0$. We call $S_h(x)\in\left\{ 0,1 \right\}^{p_1}$ the \emph{activation pattern} of $h$ at $x$ and $\mathcal{S}_h=\left\{ S_h(x)\mid\;x\in\mathbb{R}^{p_0} \right\}$ the \emph{attained activation patterns} of $h$. The function $S_h(x)$ partitions its input space into different regions of constant activation pattern. These regions are separated by the $p_1$ possibly degenerated hyperplanes $H_i=\left\{ x\in\mathbb{R}^{n}\mid\; (W_hx+b_h)_i=0 \right\}$ for $i\in\left\{ 1,\dots,p_1 \right\}$. In other words, the number of attained activation patterns $|\mathcal{S}_h|$ is equal to the number of regions that the space is partitioned into by these hyperplanes. The sharp bound 
\begin{equation}
	\label{eq:schlaefliBound}
	|\mathcal{S}_h|\le\sum_{j=0}^{\min(p_0,p_1)}\tbinom{p_1}{j}
\end{equation}
for this number was discovered by L. Schläfli \cite{Schlafli1950} and is attained when the hyperplanes are non-degenerated and are in an arrangement called \emph{general position}. Since $h$ is itself of the form~\eqref{eq:neuralNet}, this gives a sharp upper bound for the number of regions for ReLU neural networks for $L=1$. For $L>1$ we can define the activation pattern by
\begin{equation}
	S_{\mathbf{h}}(x)=(S_{h_1}(x),S_{h_2}(h_1(x)),\dots,S_{h_L}(h_{L-1}\circ\dots\circ h_1(x)))\in\left\{ 0,1 \right\}^{n_1}\times\dots\times \left\{ 0,1 \right\}^{n_L},
\end{equation}
which is the $L$-tuple of the activation patterns of the individual layer transition functions. It specifies which neurons of the neural net $f_\mathbf{h}$ are active or inactive for some input $x\in\mathbb{R}^{n_0}$. Similarly we define the attained activation patterns of $f_\mathbf{h}$ by $\mathcal{S}_{\mathbf{h}}=\left\{ S_{\mathbf{h}}(x)\mid\;x\in\mathbb{R}^{n_0} \right\}$. Note that for $i\in\left\{ 1,\dots,L \right\}$, the transition function $h_i$ satisfies
\begin{equation*}
	h_i(x)=\textnormal{ReLU}(W_{h_i}x+b_{h_i})=\textnormal{diag}(S_{h_i}(x))\left( W_h x+b_h \right)\quad \textnormal{ for }x\in\mathbb{R}^{n_{i-1}}.
\end{equation*}In particular for every $s\in\left\{ 0,1 \right\}^{n_1}\times\dots\times \left\{ 0,1 \right\}^{n_L}$ and all $x\in\mathbb{R}^{n_0}$ such that $S_{\mathbf{h}}(x)=s$, the neural net $f_{\mathbf{h}}$ satisfies 
$f_{\mathbf{h}}(x)=\tilde{h}^{(s_L)}_L\circ\dots\circ\tilde{h}_{1}^{(s_1)}(x)$ for affine linear functions $\tilde{h}^{(s_i)}_i:\mathbb{R}^{n_{i-1}}\to\mathbb{R}^{n_i}, x\mapsto \textnormal{diag}(s_i)(W_{h_i}x+b_{i})$. This means that $f_{\mathbf{h}}$ is affine linear on every set of the form $\left\{ x\in\mathbb{R}^{n_0}\mid\;S_{\mathbf{h}}(x)=s \right\}$ for $s\in\mathcal{S}_{\mathbf{h}}$ and hence we define the \emph{number of affine-linear regions} of a neural network $f_{\mathbf{h}}$ by $|\mathcal{S}_{\mathbf{h}}|$. In this paper, we are aiming to find bounds $N(n_0,\mathbf{n})$ that satisfy
\begin{equation}
	\label{eq:goal}
	|\mathcal{S}_{\mathbf{h}}|\le N(n_0,\mathbf{n})\quad\textnormal{ for all }\mathbf{h}\in\textnormal{RL}(n_0,\mathbf{n})
\end{equation}
with $\textnormal{RL}(n_0,\mathbf{n})=\textnormal{RL}(n_0,n_1)\times\dots\times\textnormal{RL}(n_{L-1},n_L)$ and $n_{0},L\in\mathbb{N}_+$, $\mathbf{n}=(n_1,\dots,n_L)\in\mathbb{N}_+^L$. In the next Section~\ref{sec:firstOrder} we will use the framework~\cite{hinz} to improve on existing bounds using a layer-wise histogram based approach to count the number of regions and their dimensionality. As a byproduct our approach also provides insight in how the arrangement of the layer-generated hyperplanes control the number of regions. In Section~\ref{sec:higherOrder} we will present a generalization of this framework that allows the composition on subnetwork instead of layer-wise histograms bounds. 
\section{Composing layer-wise activation histograms}
\label{sec:firstOrder}
\subsection{Previous framework and results}
\subsubsection{Intuitive background and definitions}
We first briefly recall the intuition behind and the main results of framework~\cite{hinz} to which the reader may be referred for details.  It relies on layer-wise bounds on the histogram of the number of active neurons of a layer transition function. More precisely, let $p_0,p_1\in\mathbb{N}_+$ be the input and output dimension of a layer transition function $h\in\textnormal{RL}(p_0,p_1)$. Note that $h$ is affine linear $h(x)=\tilde{h}^{(s)}(x)$ on $\left\{ x\in\mathbb{R}^{n}\mid\; S_h(x)=s\right\}$ for each $s\in\mathcal{S}_h$ with $\tilde{h}^{(s)}:\mathbb{R}^{p_0}\to\mathbb{R}^{p_1}, x\mapsto\textnormal{diag}(s)(W_hx+b_h)$.  Using $|s|=\sum_{i=1}^{p_1}s_i$, the rank of the affine linear map $\tilde{h}^{(s)}$ is bounded by $\min(p_0,|s|)$, i.e.
\begin{equation}
	\label{eq:rankBound}
	\textnormal{Rank}(\tilde{h}^{(s)})\le \min(p_0,|s|)
\end{equation}
Now let $n_0,L\in\mathbb{N}_+$, $\mathbf{n}=(n_1,\dots,n_L)\in\mathbb{N}_+^L$ and $\mathbf{h}\in\textnormal{RL}(n_0,\mathbf{n})$. The fundamental idea for the construction of upper bounds on $|\mathcal{S}_{\mathbf{h}}|$ is the above rank bound and an analysis of the transformation of a dimension histogram when it is pushed through the layers of the neural network. 

Before we go into detail, we need to define the set of such histograms as $V:=\big\{v\in\mathbb{N}^\mathbb{N}\mid\;\sum_{j=0}^\infty v_j<\infty\big\}$ with the canonical elements ${\rm e}_i$, $i\in\mathbb{N}$ defined by $({\rm e}_i)_j=\delta_{ij}$ for $i,j\in \mathbb{N}$. With this definition each $v\in V$ satisfies $v=\sum_{i=0}^\infty v_i {\rm e}_i$. Furthermore we can introduce an order relation ``$\preceq$'' on $V$ by
\begin{equation*}
	v\preceq w\iff \forall J\in\mathbb{N}:\;\sum_{j=J}^\infty v_j\le \sum_{j=J}^{\infty}w_j\quad \textnormal{ for }v,w\in V.
\end{equation*}With this definition $V$ becomes a \emph{join-semilattice}: For every two histograms $v,w\in V$ there exists a smallest upper bound histogram $v\vee w\in V$, called \emph{join} which is determined by the conditions $\sum_{j=J}(v\vee w)_j=\max(\sum_{j=J}^\infty v_j,\sum_{j=J}^\infty w_j)$ for $J\in\mathbb{N}$. The join $\bigvee_{i\in I} v_i$ of finitely many elements $(v_i)_{i\in I}$ is defined inductively. In our interpretation, an element $v\in V$ counts the number of regions $v_i$ of each dimensionality $i\in\mathbb{N}$. 

In this sense, we can encode the input space as one region of dimension $n_0$, i.e. as ${\rm e}_{n_0}$. This dimension histogram is then pushed through the layers with certain transformation rules $\varphi_l:V\to V$, $l\in\left\{ 1,\dots,L \right\}$ such that for every layer index $l\in\left\{ 1,\dots,L \right\}$, the histogram $\varphi_l\circ\dots\circ\varphi_1({\rm e}_{n_0})$ bounds the output dimension histogram of the dimensions of the occurring regions of the subnetwork $h_l\circ\dots\circ h_1$ with respect to ``$\preceq$''. 
Since the final bound~\eqref{eq:goal} shall only depend on the network topology $n_0,\dots,n_L$ and not on the actual weights and bias vectors of the neural network, we need to apply a worst case analysis with respect to ``$\preceq$'', i.e. a join in $V$.

To this end we consider a layer transition function $h\in\textnormal{RL}(p_0,p_1)$ for an input dimension $p_0\in\mathbb{N}$ and output dimension $p_1\in\mathbb{N}_+$. Since there may exist regions with dimensionality $0$ we need to allow extend our previous definition for $p_0=0$ by 
\begin{equation*}
	\textnormal{RL}(0,p_1)=\{h:\{0\}\to \mathbb{R}^{p_1}, 0\mapsto \textnormal{ReLU}(b_1)|b_1\in\mathbb{R}^{p_1}\}.
\end{equation*}
By equation~\eqref{eq:rankBound} we can $\preceq$ bound the histogram of occurring output dimensions on the affine regions by
\begin{equation}
	\sum_{s\in\mathcal{S}_h}{\rm e}_{\textnormal{rank}(\tilde h^{(s)})}\preceq \sum_{s\in\mathcal{S}_h}{\rm e}_{\min(p_0,|s|)}=\textnormal{cl}_{p_0}(\sum_{s\in\mathcal{S}_h}{\rm e}_{|s|})
\end{equation}
with the \emph{clipping functions} $\textnormal{cl}_j:V\to V, v\mapsto \sum_{i=0}^j v_i{\rm e}_{\min(i,j)}$ for  $j\in\mathbb{N}$. We call $\sum_{s\in\mathcal{S}_h}{\rm e}_{\textnormal{rank}(\tilde h^{(s)})}$ the \emph{dimension histogram} and $\sum_{s\in\mathcal{S}_{h}}{\rm e}_{|s|}$ the \emph{activation histogram} of $h$. The \emph{activation histogram join} 
	 \begin{equation}
		 \label{eq:activationHistogramJoin}
	 \tau_{p_0}^{p_1}:= \bigvee_{h'\in \textnormal{RL}(p_0,p_1)}\sum_{s\in\mathcal{S}_{h'}}{\rm e}_{|s|}\in V\quad \textnormal{for }p_0,p_1\in\mathbb{N}_+
	 \end{equation}
	with the convention $\tau_{0}^{p_1}:={\rm e_{p+1}}$ for $p_1\in\mathbb{N}_+$ plays an important role in our theory because it allows to construct the tightest worst-case dimension histogram bound on all dimension histograms induced by layer transition functions in $\textnormal{RL}(p_0,p_1)$:
	 \begin{equation}
		 \label{eq:sharpestBound}
		 \forall h \in \textnormal{RL}(p_0,p_1)\;\sum_{s\in\mathcal{S}_h}{\rm e}_{\textnormal{rank}(\tilde h^{(s)})}\preceq\textnormal{cl}_{p_0}(\sum_{s\in\mathcal{S}_h}{\rm e}_{|s|})
	\preceq
	\bigvee_{h'\in \textnormal{RL}(p_0,p_1)}\textnormal{cl}_{p_0}(\sum_{s\in\mathcal{S}_{h'}}{\rm e}_{|s|} )
	=
	\textnormal{cl}_{p_0}\left( \tau_{p_0}^{p_1} \right)
\end{equation}
We will analyze the activation histogram join in detail in Section~\ref{sec:activationHistogramProblem}. By the above inequality for every $\gamma_{p_0,p_1}\in V$ with $\tau_{p_0}^{p_1}\preceq \gamma_{p_0,p_1}$ we can bound the dimension histogram $\sum_{s\in\mathcal{S}_h}{\rm e}_{\textnormal{rank}(\tilde h^{(s)})} \preceq \textnormal{cl}_{p_0}\left(\gamma_{p_0,p_1} \right)$ for $h \in \textnormal{RL}(p_0,p_1)$. 

The idea for the construction of upper bounds on the number of affine regions is based on the layer-wise application of such histogram inequalities for all affine regions induced by the previous layers, where their respective dimensions take the role of the input dimension $p_0$ above for the next layer transformation function. In this sense $\textnormal{cl}_{p_0}\left( \gamma_{p_0,n_{l}} \right)$ is a reasonable choice for the images $\varphi_{l}({\rm e}_{p_0})$ of the transformation rules $\varphi_l$ above for $l\in\left\{ 1,\dots,L \right\}$, $p_0\in\mathbb{N}$ and a collection $\gamma_{p_0,p_1}\in V$ with $\tau_{p_0}^{p_1}\preceq\gamma_{p_0,p_1}$ for $p_0\in\mathbb{N},p_1\in\mathbb{N}_+$. 
	 \subsubsection{Previous framework's result}

\begin{definition}
  \label{def:boundCondition}
  We say that a collection $(\gamma_{p_0,p_1})_{p_1\in\mathbb{N}_+,p_0\in\left\{ 0,\dots,p_1 \right\}}$ of elements in $V$ satisfies the \emph{layer-wise bound condition} if the following statements are true:
  \begin{enumerate}
	  \item $\forall p_1\in\mathbb{N}_+,p_0\in\left\{ 0,\dots,p_1 \right\}\quad \tau_{p_0}^{p_1}\preceq \gamma_{p_0,p_1}$
	  \item $\forall p_1\in \mathbb{N}_+,p_0,\tilde{p}_0\in\left\{ 0,\dots,p_1 \right\}\quad p_0\le \tilde {p}_0\implies \gamma_{p_0,p_1}\preceq \gamma_{\tilde{p}_0,p_1}$
  \end{enumerate}
  The first condition states that the histograms in the collection bound all activation histograms because they bound the activation histogram joins whereas the second condition requires the collection to be increasing in the input dimension which is necessary for our worst-case analysis.
  The set of all such collections $\gamma$ is denoted by 
  \begin{equation}
	\label{eq:Gamma}
	\Gamma:=\left\{ (\gamma_{p_0,p_1})_{p_1\in\mathbb{N}_+,n\in\left\{ 0,\dots,p_1 \right\}}\mid \gamma \text{ satisfies the layer-wise bound condition}  \right\}
  \end{equation}
\end{definition}
We also call these collections \emph{layer-wise activation histogram bounds} because they bound the activation histograms of how often how many neurons are active for the regions induced by layer transition functions.
For $p_1\in\mathbb{N}_+$ and a collection of layer-wise activation histogram bounds $\gamma\in \Gamma$ we define the \emph{transformation rule} 
   \begin{equation}
	   \label{eq:transformationRule}
	   \varphi^{(\gamma)}_{p_1}: V\to V,\;v\mapsto \sum_{p_0=0}^{\infty}v_{p_0} \textnormal{cl}_{\min(p_0,p_1)}(\gamma_{\min(p_0,p_1),p_1}).
   \end{equation}
	 We can now state the main results of the framework~\cite{hinz}.
\begin{theorem}
	\label{thm:mainFirstOrder}
 Every $\gamma\in\Gamma$ induces a bound on the number of regions via
 \begin{equation}
	 \label{eq:firstOrderBound}
   |\mathcal{S}_{\mathbf{h}}|\le\|\varphi^{(\gamma)}_{n_L}\circ\dots\circ\varphi^{(\gamma)}_{n_1}({\rm e}_{n_0})\|_1.
 \end{equation}
 Equivalently, in matrix formulation it holds that
	\begin{equation}
	 \label{eq:MainRecursiveMatrix}
	 |\mathcal{S}_{\mathbf{h}}|\le \|B^{(\gamma)}_{n_L}M_{n_{L-1},n_L}\dots B^{(\gamma)}_{n_1}M_{n_{0},n_{1}}e_{n_0+1}\|_1
	\end{equation}
	with the canonical basis vector $e_{n_0+1}\in\mathbb{R}^{n_0+1}$ and matrices $B^{(\gamma)}_{p_1}\in\mathbb{N}^{(p_1+1)\times (p_1+1)}$ and $M_{p_0,p_1}\in\mathbb{R}^{p_1+1\times p_0+1}$ defined by
   \begin{align}
	   \label{eq:boundMatrix}
	   (B^{(\gamma)}_{p_1})_{i,j}&= \left(\varphi_{p_1}^{(\gamma)}({\rm e}_{j-1})\right)_{i-1}=\left( \textnormal{cl}_{j-1}(\gamma_{j-1,p_1}) \right)_{i-1}\quad i,j\in\left\{ 1,\dots,p_1+1 \right\},\\
	   \label{eq:M}
	   M_{i,j}&= \delta_{i,\min(j,p_1+1)}, \quad i\in\left\{ 1,\dots,p_1+1 \right\},j\in\left\{ 1,\dots,p_0+1 \right\},\\
	   \label{eq:e}
	(e_{n_0+1})_i&= \delta_{n_0+1,i}, \quad i\in\left\{ 1,\dots,n_0+1 \right\}
\end{align}for $p_0,p_1\in\mathbb{N}$.
\end{theorem}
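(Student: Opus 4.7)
The plan is to argue by induction on the depth $L$, strengthening the induction hypothesis from a bound on the region count to a $\preceq$-bound on the entire dimension histogram carried by the composed network so far, and then recover the region count by taking the $\ell^1$-norm at $l=L$. Concretely, for each prefix $h_l\circ\dots\circ h_1$ let
\[
v^{(l)}\;:=\;\sum_{s\in\mathcal{S}_{(h_1,\dots,h_l)}}{\rm e}_{\textnormal{rank}(\tilde h_l^{(s_l)}\circ\dots\circ\tilde h_1^{(s_1)})}\in V,
\]
and I would show by induction on $l$ that $v^{(l)}\preceq\varphi^{(\gamma)}_{n_l}\circ\dots\circ\varphi^{(\gamma)}_{n_1}({\rm e}_{n_0})$. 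Taking $J=0$ in the definition of $\preceq$ gives $\|v^{(L)}\|_1\le\|\varphi_{n_L}^{(\gamma)}\circ\dots\circ\varphi_{n_1}^{(\gamma)}({\rm e}_{n_0})\|_1$, and since $\|v^{(L)}\|_1=|\mathcal{S}_{\mathbf{h}}|$ this is exactly \eqref{eq:firstOrderBound}. The base case $l=0$ is the trivial identity $v^{(0)}={\rm e}_{n_0}$.

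For the inductive step, I would group the outer sum defining $v^{(l)}$ according to the $(l-1)$-prefix $s'=(s_1,\dots,s_{l-1})$. On each intermediate region $R_{s'}:=\{x\in\mathbb{R}^{n_0}\mid S_{(h_1,\dots,h_{l-1})}(x)=s'\}$ the composition $\tilde h_{l-1}^{(s_{l-1})}\circ\dots\circ\tilde h_1^{(s_1)}$ is a single affine map of some rank $r=r_{s'}$, so its image is an $r$-dimensional affine subspace of $\mathbb{R}^{n_{l-1}}$. Pulling the $l$-th layer back along any affine parametrization of this image produces an element of $\textnormal{RL}(\min(r,n_l),n_l)$, and then \eqref{eq:sharpestBound} bounds the contribution of $R_{s'}$ to $v^{(l)}$ by $\textnormal{cl}_{\min(r,n_l)}\bigl(\tau_{\min(r,n_l)}^{n_l}\bigr)\preceq\textnormal{cl}_{\min(r,n_l)}\bigl(\gamma_{\min(r,n_l),n_l}\bigr)$ via condition~(1) of Definition~\ref{def:boundCondition}. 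Summing these bounds over $s'$, which is legitimate because $\preceq$ is preserved under non-negative addition, yields $v^{(l)}\preceq\varphi_{n_l}^{(\gamma)}(v^{(l-1)})$. Combining this with the inductive hypothesis on $v^{(l-1)}$ and with the $\preceq$-monotonicity of $\varphi_{n_l}^{(\gamma)}$ that is guaranteed by condition~(2) closes the induction.

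The matrix formulation \eqref{eq:MainRecursiveMatrix} then reduces to identifying finitely supported elements of $V$ with coordinate vectors and checking two mechanical facts. First, multiplication by $M_{p_{l-1},n_l}$ realises the coordinate collapse $p_0\mapsto\min(p_0,n_l)$ that is built into $\varphi_{n_l}^{(\gamma)}$ through its clipping factor; second, $B^{(\gamma)}_{n_l}$ is the matrix of $\varphi_{n_l}^{(\gamma)}$ restricted to $\textnormal{span}({\rm e}_0,\dots,{\rm e}_{n_l})$, since by \eqref{eq:boundMatrix} its $j$-th column is exactly $\varphi_{n_l}^{(\gamma)}({\rm e}_{j-1})$. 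Linearity of $\varphi_{n_l}^{(\gamma)}$ together with the observation that $e_{n_0+1}$ is the coordinate representation of ${\rm e}_{n_0}$ then yields the matrix identity by iterating.

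The step I expect to be the main obstacle is the per-region reduction to $\textnormal{RL}(r,n_l)$: one must pick a concrete affine parametrization of the $r$-dimensional image of $R_{s'}$, verify that the composed activation behaviour of $h_l$ on that image is genuinely captured by a single layer transition function of input dimension $r$ with suitable weight matrix and bias, and handle the degenerate cases $r=0$ and $r>n_l$ cleanly using the definitions $\textnormal{RL}(0,\cdot)$, $\tau_0^{\cdot}$, and the $\min(\cdot,n_l)$ appearing in $\varphi_{n_l}^{(\gamma)}$. The ancillary monotonicity and linearity statements about $\varphi$ are routine and would be packaged as separate lemmas to keep the induction transparent.
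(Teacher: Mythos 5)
Your proposal is correct and follows essentially the same strategy the paper uses: the paper does not re-prove Theorem~\ref{thm:mainFirstOrder} here (it is quoted from~\cite{hinz}), but its proof of the generalization, Theorem~\ref{thm:generalMainResultPhi}, specializes to single-layer blocks and proceeds exactly as you describe — propagate a $\preceq$-bound on a dimension histogram layer by layer, bound each prefix-region's contribution by affinely parametrizing the image and pulling the next layer back to $\textnormal{RL}(\min(r,n_l),n_l)$, then close with monotonicity of $\varphi$ and the $\ell^1$-norm. The only cosmetic difference is that you track the exact rank of the composed affine map where the paper uses the surrogate $\min(n_0,|s_1|,\dots,|s_l|)$; both are dominated by the clipping, so the argument goes through either way.
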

Note that the matrices $B^{(\gamma)}_{p_1}$, $n\in\mathbb{N}$ are upper triangular by definition and hence its eigenvalues can be read from the diagonal. For example for $\gamma\in\Gamma$, $n_0,n,L\in\mathbb{N}_+$ with $n_0\le n$ and $\mathbf{n}=(n,\dots,n)\in\mathbb{N}^L$, $\mathbf{h}\in\textnormal{RL}(n_0,\mathbf{n})$ it holds that 
	\begin{equation}
		\label{eq:exampleBound}
		|\mathcal{S}_h|\le \|\big( B^{(\gamma)}_{n_L} \big)^L M_{n_0,n}e_{n_0+1}\|_1=\mathcal{O}(\max(B^{(\gamma)}_{1,1},\dots,B^{(\gamma)}_{n_0+1,n_0+1})^L)\quad \textnormal{ as }L\to\infty. 
	\end{equation}This bounds the asymptotic order of an equal width feed-forward neural network for $L\to\infty$.
 \subsubsection{Choices for the histogram activation bounds}
 In order to make use of Theorem~\ref{thm:mainFirstOrder} we need to plug in a collection of layer-wise activation histogram bounds $\gamma\in \Gamma$. The work \cite{hinz} presented the collections $\widehat \gamma,\tilde\gamma, \bar\gamma\in\Gamma$ given by 
 \begin{equation}
   \label{eq:oldGammas}
   \widehat{\gamma}_{p_0,p_1}=2^{p_1}{\rm e}_{p_1}, \quad \tilde{\gamma}_{p_0,p_1}=\sum_{j=0}^{p_0}\tbinom{p_1}{ j}{\rm e}_{p_1},\quad \bar{\gamma}_{p_0,p_1}=\sum_{j=0}^{p_0}\tbinom{p_1}{ j}{\rm e}_{p_1-j}
 \end{equation}for $p_1\in\mathbb{N}_{+}, p_0\in\left\{ 0,\dots,p_1 \right\}$. They are ordered from weak to strong in the sense that 
 $ \bar{\gamma}_{p_0,p_1}\preceq \tilde{\gamma}_{p_0,p_1}\preceq\widehat{\gamma}_{p_0,p_1}$ for such $p_0,p_1$. The intuition behind $\widehat{\gamma}$ is that for $h\in\textnormal{RL}(p_0,p_1)$, the $p_1$ induced hyperplanes partition the space $\mathbb{R}^{p_0}$ into at most $2^{p_1}$ regions and on each there are at most $p_1$ active units. The bound~\eqref{eq:schlaefliBound} with the same reasoning yields $\tilde{\gamma}$. The strongest of the above elementary bounds $\bar{\gamma}$ combines equation~\eqref{eq:schlaefliBound} with the fact that there are at most $\tbinom{p_1}{j}=\big\{ s\in\big\{ 0,1 \big\}^{p_1}\mid\;|s|=j \big\}$ regions where $j\in\left\{ 0,\dots,p_1 \right\}$ neurons are active because by definition $\mathcal{S}_h\subset \left\{ 0,1 \right\}^{p_1}$. All three of these bounds yield bounds of previous works~\cite{Montufar:2014:NLR:2969033.2969153} Proposition~3, \cite{Montufar17} Proposition 3 and~\cite{DBLP:BoundingCounting}. Stronger activation histogram bound collections yield stronger composed bounds in Theorem~\ref{thm:mainFirstOrder}. It is therefore of interest to find tighter collections. In Section~\ref{sec:improvedBound} we derive a slightly improved collection $\gamma^*$ that can be used in that theorem. It is based on insight on the activation histogram join which we will present in the next section.

 \subsection{The activation histogram join}
 \label{sec:activationHistogramProblem}
 \subsubsection{Description}
In order to fully exhaust the above framework, it is necessary to find tight activation histogram bounds. The following result states that the histogram join from equation~\eqref{eq:activationHistogramJoin} itself satisfies the bound condition.
 \begin{lemma}
	 \label{lem:tauBoundCondition}
	 The collection $\tau_{p_0}^{p_1}$ for $p_1\in \mathbb{N}$,$p_0\in\left\{ 1,\dots,p_1 \right\}$ satisfies the layer-wise bound condition.
 \end{lemma}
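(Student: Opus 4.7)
My plan is to unpack the two parts of the layer-wise bound condition from Definition~\ref{def:boundCondition} and verify each separately for $\gamma_{p_0,p_1}=\tau_{p_0}^{p_1}$. Condition~(1), $\tau_{p_0}^{p_1}\preceq\tau_{p_0}^{p_1}$, is immediate from reflexivity of ``$\preceq$''. All the content of the lemma therefore lies in condition~(2): for $p_0\le\tilde{p}_0$ (both in $\{0,\dots,p_1\}$) one must show
\begin{equation*}
	\tau_{p_0}^{p_1}\preceq\tau_{\tilde{p}_0}^{p_1}.
\end{equation*}

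My strategy is to prove an \emph{embedding lemma}: for every $h\in\textnormal{RL}(p_0,p_1)$ there is a $\tilde{h}\in\textnormal{RL}(\tilde{p}_0,p_1)$ whose set of attained activation patterns satisfies $\mathcal{S}_{\tilde h}=\mathcal{S}_h$, and hence whose activation histogram agrees with that of $h$. The obvious construction is to pad the weight matrix with zero columns. Writing $h(x)=\textnormal{ReLU}(W_h x+b_h)$ with $W_h\in\mathbb{R}^{p_1\times p_0}$, I define $\tilde{W}=(W_h\;|\;0)\in\mathbb{R}^{p_1\times\tilde{p}_0}$ and set $\tilde{h}(y)=\textnormal{ReLU}(\tilde{W}y+b_h)$ for $y\in\mathbb{R}^{\tilde{p}_0}$. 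Since $\tilde{h}$ is independent of the last $\tilde{p}_0-p_0$ coordinates, writing $y=(x,z)$ gives $\tilde{h}(y)=h(x)$, so $S_{\tilde h}(y)=S_h(x)$ and consequently $\mathcal{S}_{\tilde h}=\mathcal{S}_h$; this yields the claimed equality of activation histograms.

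Given the embedding lemma, condition~(2) follows directly from the definition of the join. Indeed, every histogram appearing in the indexed family whose join defines $\tau_{p_0}^{p_1}$ also appears in the indexed family whose join defines $\tau_{\tilde{p}_0}^{p_1}$, so taking joins over a possibly larger set of histograms can only increase the result with respect to~``$\preceq$''. Formally: for every $h\in\textnormal{RL}(p_0,p_1)$ we have $\sum_{s\in\mathcal{S}_h}{\rm e}_{|s|}=\sum_{s\in\mathcal{S}_{\tilde h}}{\rm e}_{|s|}\preceq\tau_{\tilde{p}_0}^{p_1}$, and taking the join over $h$ on the left gives the desired $\tau_{p_0}^{p_1}\preceq\tau_{\tilde{p}_0}^{p_1}$.

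The only subtlety is the edge case $p_0=0$, which is excluded by the quantifier in the lemma statement but is required by condition~(2) of Definition~\ref{def:boundCondition}; there the convention $\tau_0^{p_1}$ from equation~\eqref{eq:activationHistogramJoin} must be used, and one checks directly that this element is $\preceq\tau_{\tilde p_0}^{p_1}$ for any $\tilde p_0\ge 1$ (the single available histogram in $\textnormal{RL}(0,p_1)$ is dominated by any histogram attainable in $\textnormal{RL}(\tilde p_0,p_1)$ with all neurons active at some input). I do not expect any real obstacle: the embedding by zero columns is a standard move and the only thing to be careful about is bookkeeping at the boundary $p_0=0$ where the domain degenerates to a point.
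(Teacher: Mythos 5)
Your proposal is correct and follows essentially the same route as the paper: condition~(1) is reflexivity, and condition~(2) is obtained by padding the weight matrix with extra columns so that each $h\in\textnormal{RL}(p_0,p_1)$ yields an $\tilde h\in\textnormal{RL}(\tilde p_0,p_1)$ with $\mathcal{S}_{\tilde h}\supseteq\mathcal{S}_h$ (the paper uses an arbitrary extra column plus restriction to the slice $\mathbb{R}^{p_0}\times\{0\}$, you use zero columns to get outright equality — an immaterial difference). Your explicit treatment of the $p_0=0$ edge case is a small bonus the paper omits.
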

 Despite the fact that $\tau_{p_0}^{p_1}$ is also defined for $p_0\ge p_1$, we will consider $\tau$ as an element of $\Gamma$. With the above result it is clear that the tightest elementary bound collection that can be used in Theorem~\ref{thm:mainFirstOrder} is $\tau$ itself and it will yield new tighter bounds on the number of affine-linear regions of ReLU feed-forward neural networks than previously presented in \cite{hinz}. It is therefore of interest to analyze $\tau$ in detail. We call the problem of finding an explicit formula for the abstract join in equation~\eqref{eq:activationHistogramJoin} the \emph{activation histogram join problem}. 
 
 At this point we want to give some intuition on this problem. 
 For input and output dimension $p_0,p_1\in \mathbb{N}$, by definition $\tau_{p_0}^{p_1}= \bigvee_{h\in \textnormal{RL}(p_0,p_1)}\sum_{s\in\mathcal{S}_{h}}{\rm e}_{|s|}$. This means that $\tau_{p_0}^{p_1}$ is the smallest histogram in $V$ that bounds every $\sum_{s\in\mathcal{S}_{h}}{\rm e}_{|s|}$ for $h\in\textnormal{RL}(p_0,p_1)$. For a specific $h\in\textnormal{RL}(p_0,p_1)$ with weight matrix $W_h\in\mathbb{R}^{p_1\times p_0}$ and the bias vector $b_h\in\mathbb{R}^{p_1}$ there are hyperplanes induced by $H_i=\left\{ x\in\mathbb{R}^{p_0}\mid (W_hx+b_h)_i=0 \right\}$ for row indices $i\in\left\{ 1,\dots,p_1 \right\}$ with non-zero rows. These hyperplanes also have an orientation if we consider that they partition the space into an active and an inactive side by the condition $(W_hx+b_h)_i>0$ or $\le0$ respectively for $x\in\mathbb{R}^{p_0}$. The whole collection of these hyperplanes partitions the space $\mathbb{R}^{p_0}$ at most $\sum_{j=0}^{\min(p_0,p_1)}{p_1\choose j}$ by equation~\eqref{eq:schlaefliBound}. Each of these regions is on the active side of a number of hyperplanes. Now $\sum_{s\in\mathcal{S}_{h}}{\rm e}_{|s|}$ is just the histogram of this number for all occurring regions.

 Therefore, the activation histogram join problem is actually a problem in the field of oriented hyperplane arrangements and, in addition to its relevance to find tighter bounds on the number of regions, is an interesting question on its own in this and related fields of mathematics such as matroid theory and algebraic topology.

 \subsubsection{Solution for input dimension not smaller than output dimension}
If there are less hyperplanes than the dimension of the space $p_1\le p_0$, then all possible activation patterns can be observed at the same time, i.e. $\mathcal{S}_h=\{0,1\}^{n_1}$ such that the histogram counting the number of ones is formed using binomial coefficients.
	\begin{lemma}
		\label{lem:explicitTauPowerset}
		For $p_0,p_1\in\mathbb{N}_+$ with $p_0>p_1$ it holds that $\tau_{p_{0}}^{p_1}=\tau_{p_1}^{p_1}=\sum_{i=0}^{p_1}\tbinom{p_1}{i}{\rm e}_i$.
	\end{lemma}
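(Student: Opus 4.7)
The plan is to sandwich $\tau_{p_0}^{p_1}$ between the two equal quantities by combining a universal upper bound with an explicit construction that attains it.

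First I would establish the universal upper bound: for every $h\in\textnormal{RL}(p_0,p_1)$, the attained activation patterns form a subset $\mathcal{S}_h\subseteq\{0,1\}^{p_1}$, so for every threshold $J\in\mathbb{N}$,
\begin{equation*}
\sum_{j=J}^{\infty}\Big(\sum_{s\in\mathcal{S}_h}{\rm e}_{|s|}\Big)_j = |\{s\in\mathcal{S}_h:\;|s|\ge J\}| \le |\{s\in\{0,1\}^{p_1}:\;|s|\ge J\}| = \sum_{j=J}^{p_1}\tbinom{p_1}{j}.
\end{equation*}
This gives $\sum_{s\in\mathcal{S}_h}{\rm e}_{|s|}\preceq\sum_{i=0}^{p_1}\tbinom{p_1}{i}{\rm e}_i$ for every $h$, hence by the defining property of the join, $\tau_{p_0}^{p_1}\preceq\sum_{i=0}^{p_1}\tbinom{p_1}{i}{\rm e}_i$ regardless of $p_0$.

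Next I would exhibit a single $h\in\textnormal{RL}(p_1,p_1)$ whose activation histogram equals this upper bound. The natural choice is $W_h=I_{p_1}$ and $b_h=0$: then neuron $i$ is active at $x\in\mathbb{R}^{p_1}$ iff $x_i>0$, so every sign pattern $s\in\{0,1\}^{p_1}$ is realized on the corresponding open orthant, giving $\mathcal{S}_h=\{0,1\}^{p_1}$ and $\sum_{s\in\mathcal{S}_h}{\rm e}_{|s|}=\sum_{i=0}^{p_1}\tbinom{p_1}{i}{\rm e}_i$. Combined with the upper bound, this yields $\tau_{p_1}^{p_1}=\sum_{i=0}^{p_1}\tbinom{p_1}{i}{\rm e}_i$.

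For the remaining equality $\tau_{p_0}^{p_1}=\tau_{p_1}^{p_1}$ when $p_0>p_1$, I would lift the above construction by padding: take the same $h$ but with $W\in\mathbb{R}^{p_1\times p_0}$ equal to $I_{p_1}$ on its first $p_1$ columns and zero on the remaining $p_0-p_1$ columns, and $b=0$. The attained activation patterns are unchanged and still equal $\{0,1\}^{p_1}$, so the same histogram is attained in $\textnormal{RL}(p_0,p_1)$. Hence $\sum_{i=0}^{p_1}\tbinom{p_1}{i}{\rm e}_i\preceq\tau_{p_0}^{p_1}$, which together with the universal upper bound closes the sandwich.

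There is no real obstacle here; the only subtlety is being careful that the $\preceq$ order is a statement about tail sums, so the inclusion $\mathcal{S}_h\subseteq\{0,1\}^{p_1}$ translates directly into the required inequality, and that the attaining construction does not require the extra input dimensions to do any work. The Schläfli bound from equation~\eqref{eq:schlaefliBound} is consistent with this (it gives $2^{p_1}$ regions precisely when $\min(p_0,p_1)=p_1$), but is not needed for the argument since the combinatorial bound $|\mathcal{S}_h|\le 2^{p_1}$ already follows from $\mathcal{S}_h\subseteq\{0,1\}^{p_1}$.
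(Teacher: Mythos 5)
Your proof is correct and takes essentially the same route as the paper: the paper likewise combines the trivial domination $\mathcal{S}_h\subseteq\{0,1\}^{p_1}$ with the existence of an arrangement attaining $\mathcal{S}_{h'}=\{0,1\}^{p_1}$, only stating the attainment without your explicit coordinate-hyperplane construction and padding. Your version just spells out the tail-sum verification and the witness more carefully.
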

 \subsubsection{Solution for input dimension 1}
 For input dimension $p_0=1$ and arbitrary output dimension $p_1\in\mathbb{N}_+$ we need to consider histograms generated by $p_1$ oriented points on the real line. 
 \begin{proposition}
	 \label{prop:boundConditionInputOne}
	 For all $p_1\in\mathbb{N}_+$, 
 	 \begin{equation*}
		 \tau_{1}^{p_1}=
		 \begin{cases}
			 {\rm e}_{(p_1-1)/2}+\sum_{i=\lceil p_1/2\rceil}^{p_1-1} 2{\rm e}_{i}+{\rm e}_{p_1}&\quad \textnormal{ if $p_1$ is odd}\\
			 \sum_{i=p_1/2}^{p_1-1}2{\rm e}_{i}+{\rm e}_{p_1}&\quad \textnormal{ if $p_1$ is even}
		 \end{cases}
	 \end{equation*}
	 and there exists $h\in\textnormal{RL}(1,p_1)$ with $\sum_{s\in\mathcal{S}_h}{\rm e}_{|s|}=\tau_{1}^{p_1}$
\end{proposition}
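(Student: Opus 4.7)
The plan is to compute $\tau_1^{p_1}$ explicitly and exhibit an attaining $h$ by interpreting the activation counts across regions as a $\pm 1$ walk on $\mathbb{Z}$. For a non-degenerate $h\in\textnormal{RL}(1,p_1)$, the $p_1$ hyperplanes are oriented points $(c_i,\sigma_i)$ on the real line, with $\sigma_i\in\{+,-\}$ indicating the active side; sorting the distinct thresholds $c_1<\dots<c_{p_1}$ yields $p_1+1$ regions $R_0,\dots,R_{p_1}$. Writing $n_{\pm}$ for the number of $\pm$-oriented neurons (so $n_++n_-=p_1$) and $u_j=|\{i\le j:\sigma_i=+\}|$, a direct computation gives the activation count in $R_j$ as $a_j=n_-+2u_j-j$, a $\pm 1$ walk starting at $a_0=n_-$ and ending at $a_{p_1}=n_+$. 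Zero-weight neurons and coincident thresholds I would handle at the end by a perturbation argument, since an arbitrarily small non-degenerate perturbation can only weakly increase the histogram in the $\preceq$-order.

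For the attainment claim, I would construct the ``hot center'' $h^{\star}$ by placing $k=\lceil p_1/2\rceil$ positively-oriented thresholds strictly to the left of the remaining $p_1-k$ negatively-oriented ones. A direct computation then gives $a_j=p_1-|j-k|$, a tent rising from $n_-$ to the peak $p_1$ and descending to $n_+$. Tallying multiplicities of each value in this sequence reproduces exactly the piecewise formula in the proposition: the parity of $p_1$ determines whether both endpoints contribute to the same extremal value at $\lfloor p_1/2\rfloor$ (even case) or only one of them does (odd case), accounting for the asymmetric $\lceil p_1/2\rceil$ lower bound in the sum.

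For the $\preceq$-upper bound, the key step is a one-line interval constraint: $a_j\ge J$ combined with $0\le u_j\le\min(j,n_+)$ forces $j$ to lie in $[\max(0,J-n_-),\min(p_1,p_1+n_+-J)]$. A short case split on whether $J$ exceeds $n_+$, $n_-$, or both (using $n_++n_-=p_1$) bounds the length of this interval by $\min(p_1+1,\,2(p_1-J)+1)$, which matches exactly the tail sums $\sum_{j\ge J}(\tau_1^{p_1})_j$ read off from the proposition's formula; hence $\sum_{s\in\mathcal{S}_h}{\rm e}_{|s|}\preceq \tau_1^{p_1}$. Combined with the hot-center attainment, this yields both claims. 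The main obstacle I anticipate is the bookkeeping in this case analysis—verifying that the interval-length bound matches the proposition's tail sums in both parities of $p_1$ and cleanly implementing the perturbation argument for degenerate configurations—rather than any conceptual difficulty; the central walk-and-interval idea itself is short and robust.
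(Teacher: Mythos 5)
Your proposal is correct, and while it shares the paper's basic setup --- reduce to $p_1$ distinct oriented points via a perturbation argument, encode the region-wise activation counts as a $\pm1$ walk $a_j=n_-+2u_j-j$ from $a_0=n_-$ to $a_{p_1}=n_+$, and attain the claimed histogram with a ``hot center'' tent $a_j=p_1-|j-k|$ --- your proof of the $\preceq$-upper bound is genuinely different from the paper's. The paper argues by rearrangement: within each class of orientation vectors with a fixed number of positive entries, swapping an adjacent $(-,+)$ pair to $(+,-)$ pointwise increases the walk, so the join over each class is the sorted (tent) configuration $\sigma_i^*$, and a second optimization over the peak location shows the centered tent dominates all tents. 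You instead bound, for each level $J$, the number of regions with $a_j\ge J$ directly: the constraints $a_j\ge J$ and $0\le u_j\le\min(j,n_+)$ confine $j$ to an interval of length at most $\min(p_1+1,\,2(p_1-J)+1)$, which is exactly the tail sum $\sum_{j\ge J}(\tau_1^{p_1})_j$ of the claimed formula (I checked this in both parities, and your case split on $J$ versus $n_\pm$ closes in all four cases). Your level-set argument is more direct --- it verifies the definition of $\preceq$ in one shot and avoids the induction over swaps --- while the paper's exchange argument yields the stronger structural statement that every configuration is $\preceq$-dominated by a single hot-center configuration, which is what feeds the conjecture for input dimension $2$. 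One caveat on the deferred perturbation step: a generic perturbation need not preserve the set of attained patterns (a pattern realized only at a single point can disappear), so you must either choose the perturbation so that each degenerate point splits into nearby distinct points realizing a superset of the old patterns, or argue directly that the histogram can only increase in $\preceq$; the paper's Lemma~\ref{lem:restrictedMax} glosses over the same point, so this is not a gap relative to the paper's own standard of rigor.
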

The proof is deferred to Appendix~\ref{sec:inputDimOne}. Intuitively we first show that we only need to consider $p_1$ distinct points, then we prove that we only need to consider configurations with a hot region where all neurons are active and finally we conclude the result by balancing the number of points on the left and right side of the hot region, i.e. placing the hot region in the center, see Figure~\ref{fig:dim1points}. It is remarkable that the join is attained in equation~\eqref{eq:activationHistogramJoin} for some $h\in\textnormal{RL}(1,p_1)$. For $p_0\ge 2$ it is still an open problem and it is not clear if the join is also attained. However, for $p_0=2$ we have a conjecture presented in the next section.
\begin{figure}
  \centering
  \includegraphics[width=0.7\textwidth]{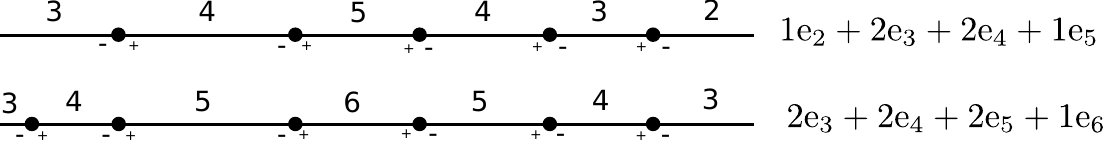}
  \caption{Idea of the proof of Proposition~\ref{prop:boundConditionInputOne}. The maximal histogram is attained for a hot center region in the middle that lies on the positive side of all hyperplanes (points). Above we demonstrate this for the cases $p_1=5$ and $p_1=6$}
\label{fig:dim1points}
\end{figure}
\begin{figure}
  \centering
  \includegraphics[width=\textwidth]{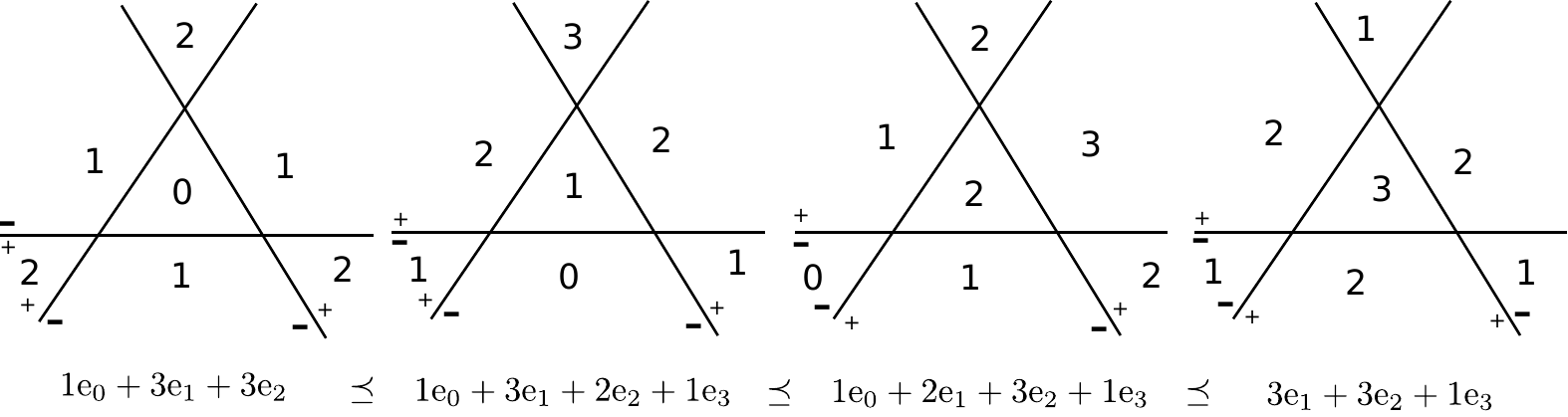}
  \caption{Different configurations and corresponding histograms for 3 lines in $\mathbb{R}^2$ in general position. The plus ($+$) and minus $(-)$ signs indicate the active and inactive side of each hyperplane respectively. The numbers of active hyperplanes is directly written into the regions. This exhaustive search shows that $\tau_{2}^3=3{\rm e}_1+3{\rm e}_3+{\rm e}_3$.}
\label{fig:threelines}
\end{figure}

\begin{figure}
  \centering
  \includegraphics[width=\textwidth]{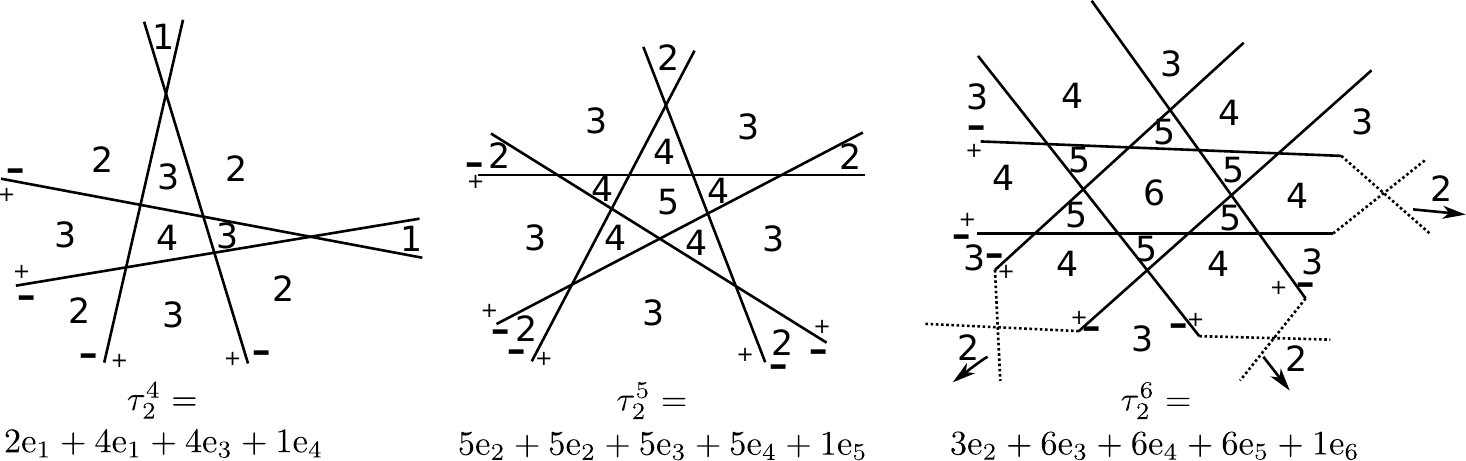}
  \caption{Configurations for 4,5 and 6 lines in general position following the idea of a ``hot center''. Similarly to Figure~\ref{fig:threelines} we found that the corresponding histograms are $\tau_{2}^4,\tau_2^5,\tau_2^6$ by an exhaustive search on all combinations.}
\label{fig:morelines}
\end{figure}
\subsubsection{Conjecture for input dimension 2}
For input dimension $p_0=2$ we need to consider oriented lines in $\mathbb{R}^{2}$. In Figures~\ref{fig:threelines} we show all different configurations for $p_1=3$ oriented lines in $\mathbb{R}^2$ and find that that the maximal activation histogram is attained for a hot region in the center. In Figure~\ref{fig:morelines} we find for $p_1\in\{4,5,6\}$ that again the maximal histogram is attained for a hot center. This leads us to the following conjecture on the activation histogram join.
\begin{conjecture}
	\label{conj:two}
	For all natural $p_1\ge 2$ it holds that
\begin{equation*}
	\tau_{2}^{p_1}=
	\begin{cases}
		\sum_{i=\lfloor p_1/2\rfloor}^{p_1-1}p_1 {\rm e}_i+{\rm e}_{p_1}\quad &\textnormal{if $p_1$ is odd}\\
		\frac{p_1}{2} {\rm e}_{p_1/2-1}+\sum_{i= p_1/2}^{p_1-1}p_1 {\rm e}_i+{\rm e}_{p_1}\quad &\textnormal{if $p_1$ is even}
	\end{cases}
\end{equation*}
and there exist $h\in\textnormal{RL}(2,p_1)$ with $\sum_{s\in\mathcal{S}_h}{\rm e}_{|s|}=\tau_{2}^{p_1}$.
\end{conjecture}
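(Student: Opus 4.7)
The plan is to split the conjecture into two independent claims: attainment, i.e.\ the existence of some $h\in\textnormal{RL}(2,p_1)$ realising the histogram on the right-hand side, and the upper bound $\sum_{s\in\mathcal{S}_h}{\rm e}_{|s|}\preceq$ that histogram for every $h\in\textnormal{RL}(2,p_1)$. Both halves are genuinely needed because $\tau_2^{p_1}$ is defined as a join, which a priori need not be attained.

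For the attainment I would make the hot-center configuration suggested by Figure~\ref{fig:morelines} explicit: place $p_1$ oriented lines tangent to the unit circle at the regularly spaced angles $\theta_j=2\pi j/p_1$, each oriented so that the origin lies on its positive side. The central region is then a regular $p_1$-gon inside which all neurons are active, contributing the summand ${\rm e}_{p_1}$. Using the $p_1$-fold rotational symmetry I would walk outward across the edges of this polygon and label each successive ring of regions by its activation count: for odd $p_1$ the rings each have size $p_1$ stepping down to activation $\lfloor p_1/2\rfloor$ and then terminate, whereas for even $p_1$ there is one additional antipodal ring of $p_1/2$ regions at activation $p_1/2-1$, matching the conjectured formula. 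A consistency check against Schl\"afli's total $1+p_1+\binom{p_1}{2}$ confirms the ring counts.

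For the upper bound I would first reduce to generic arrangements: a sufficiently small perturbation of $W_h$ and $b_h$ can only split existing regions, which $\preceq$-weakly increases every tail sum, so the supremum in~\eqref{eq:activationHistogramJoin} is attained on a limit of simple arrangements. Next, by the convexity of $\bigcap_i H_i^+$ at most one region attains activation $p_1$; since only arrangements maximising each tail sum matter for the join, I would assume that this hot region $R_0$ exists. Fixing $R_0$, every region $R$ satisfies $|s(R)|=p_1-d_H(R_0,R)$, so that bounding $N_{\ge J}:=\#\{R:|s(R)|\ge J\}$ reduces to bounding the size of the Hamming ball of radius $p_1-J$ around $R_0$ in the tope set of a simple line arrangement. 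The crux lemma I would then aim for is: for each distance $d$ with $1\le d\le\lceil p_1/2\rceil$ there are at most $p_1$ topes at Hamming distance exactly $d$ from $R_0$, with the refinement that in the even case the extremal distance $p_1/2+1$ contributes at most $p_1/2$ further topes.

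The main obstacle is this combinatorial lemma. The case $d=1$ is immediate because each line contributes at most one adjacent tope. For $d=2$ one can argue that the facets of $R_0$ form a convex polygon, so that a distance-$2$ tope via lines $(\ell_i,\ell_j)$ can only exist when the facets corresponding to $\ell_i$ and $\ell_j$ are cyclically adjacent on $\partial R_0$, capping their count at $p_1$. For larger $d$ I would attempt an induction on $p_1$: delete one line $\ell$, apply the hypothesis to the residual $(p_1-1)$-arrangement, and track how re-inserting $\ell$ redistributes its $p_1$ newly split regions across activation levels. A naive version of this step yields only the non-tight recursion $W_J\le V_{J-1}+p_1$, so the essential extra input must come from two-dimensional geometry forcing distance-$d$ topes to lie in an annular band around $R_0$ whose size is restricted by planar topology. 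Turning this annulus picture into a clean inductive bound is presumably the difficulty that has so far kept the statement a conjecture.
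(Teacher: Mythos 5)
This statement is an open conjecture in the paper; the paper offers only an informal three-step motivation (force a hot region, pull all lines onto its boundary, center it) and explicitly flags the first step as unproven. Your proposal follows essentially the same route and, to your credit, is honest about where it breaks down -- but as it stands it is an outline with two genuine gaps, not a proof. First, the reduction to arrangements possessing a hot region is asserted, not established: the join $\tau_2^{p_1}$ requires maximizing the tail sum $\sum_{j\ge J}v_j$ \emph{separately for every} $J$, and the sentence ``since only arrangements maximising each tail sum matter for the join, I would assume that this hot region $R_0$ exists'' assumes exactly what must be shown for $J<p_1$. Nothing a priori prevents an arrangement with no all-active region from having a larger tail sum at some intermediate level $J$; one must exhibit a histogram-increasing move that creates a hot region, which is the paper's step~1 and remains open. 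Second, your crux lemma -- at most $p_1$ topes at each Hamming distance $d$ from $R_0$, with the $p_1/2$ refinement at the extremal even distance -- is proved only for $d=1$ and sketched for $d=2$ (and even there the sketch tacitly assumes every line is a facet of $R_0$, which is the paper's unformalized step~2). For $d\ge 3$ you concede that the delete-and-reinsert induction yields only the non-tight recursion, which is precisely the bound the paper already extracts in Proposition~\ref{prop:recursion}; the ``annular band'' argument that would close the gap is named but not supplied.

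Two smaller points. Your attainment construction with lines tangent to a circle at angles $2\pi j/p_1$ is degenerate for even $p_1$ (antipodal tangent lines are parallel, so the arrangement is not simple and realizes only $1+p_1+\binom{p_1}{2}-p_1/2$ regions); you need a perturbation and a verification that the two extra regions per formerly parallel pair land at activation level $p_1/2-1$, which is plausible from Figure~\ref{fig:morelines} but not automatic from the ring-counting picture. And the identity $|s(R)|=p_1-d_H(R_0,R)$ together with the convexity argument for uniqueness of the hot region are both correct and are the right skeleton -- they match the paper's ``distance to the hot region'' heuristic -- so the value of your write-up is that it isolates the single combinatorial statement (the per-distance tope count) whose proof would settle the conjecture. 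But isolating it is not proving it.
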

We will now motivate this conjecture. First note that indeed the $\|\cdot\|_1$-norm of the conjecture histograms is correct, i.e. the number of regions encoded in these histograms coincides with the number of regions in formula~\eqref{eq:schlaefliBound} induced by $n_1$ hyperplanes in general position in $\mathbb{R}^{n_0}$:
\begin{equation*}
	\forall p_1\ge 2:\quad
	\sum_{j=0}^{2}\tbinom{p_1}{j}=
	\begin{cases}
		\sum_{i=\lfloor p_1/2\rfloor}^{p_1-1}p_1 +1\quad &\textnormal{if $p_1$ is odd}\\
		\frac{p_1}{2}+\sum_{i= p_1/2}^{p_1-1}p_1 +1\quad &\textnormal{if $p_1$ is even}
	\end{cases}
\end{equation*}
Furthermore we provide the following three basic ideas that a formal proof could follow. These ideas can essentially also be found in our proof of Proposition~\ref{prop:boundConditionInputOne}.
\begin{enumerate}
	\item This step remains to be shown: For an arbitrary arrangement of hyperplanes one can move the hyperplanes such that the histogram is increased with respect to $\preceq$ and such that a hot region appears, i.e. a region that is on the positive side of all hyperplanes. In particular one only needs to consider oriented hyperplane configurations inducing a hot region.
	\item This step needs to be formalized but is correct, see Figure~\ref{fig:conjecturenotes}: For an arrangement of hyperplanes that induces a hot region one can shift the hyperplanes that do not form a part of the hot region's boundary towards the hot region until they they are part of the hot region's boundary. This process can only increase the activation histogram. In particular we only need to consider configurations in general position with a hot region such that every hyperplane is part of its boundary.
	\item This step also needs to be formalized but is correct, see Figure~\ref{fig:conjecturenotes}: Among all arrangements with a hot region such that all hyperplanes form a part of its boundary, the corresponding activation histogram is larger if this region is located at the center. To prove this formally, one could introduce a distance between regions defined by the smallest number of region boundaries that need to be crossed for a connecting path. The number of active neurons for an arbitrary region is then $n_1$ minus the distance of this region to the hot region such that it is beneficial that the hot region is located at the center to avoid a large distance to other regions.
\end{enumerate}

\begin{figure}
  \centering
  \includegraphics[width=\textwidth]{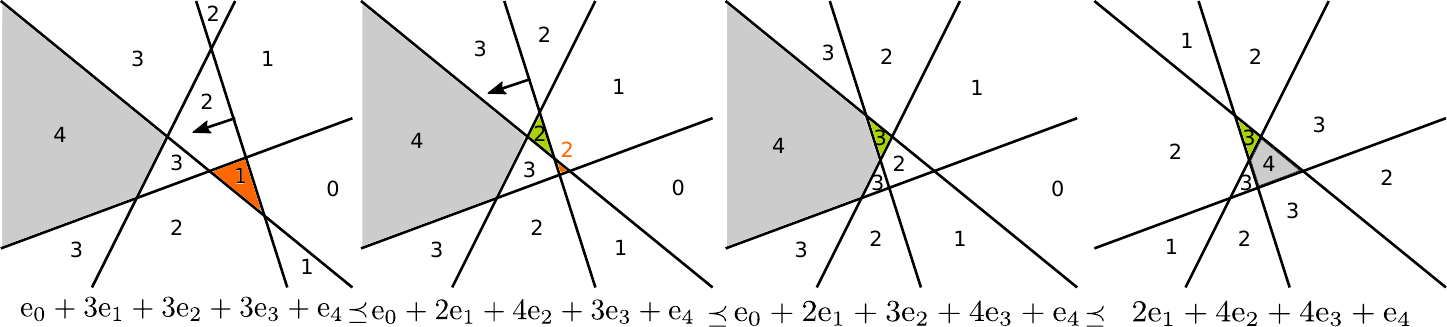}
  \caption{Illustration of our conjecture motivation indicating the hot region with gray colour with corresponding activation histograms below. The first three configurations demonstrate the second idea in the motivation. When there is a hot region, hyperplanes not defining its boundary can be moved towards the hot region while increasing the histogram. The right-most configuration illustrates the third idea and places the hot region in the center to further increase the histogram.}
\label{fig:conjecturenotes}
\end{figure}
The above three steps can be used for a proof with an arbitrary dimensions $p_0$ and indicate that the activation histogram of every configuration of oriented hyperplanes is $\preceq$-dominated by a configuration with a hot center region such that all hyperplanes form a part of its boundary. For such a configuration, a concrete formula for the corresponding hyperplane arrangement has to be found. Our Conjecture~\ref{conj:two} for the case $p_0=2$ is directly derived from this reasoning.
\subsubsection{Recursive Property}
The activation histogram joins satisfy an interesting recursive bound that might provide some fruitful insight into the problem. It is based on an analysis of how an additional hyperplane that is added to an oriented existing hyperplane arrangement affects the attained activation patterns of that arrangement. More precisely, the additional hyperplane divides some of the previous regions into two and for all regions that lie on the active side of the new hyperplane the number of ones in the activation pattern, i.e. number of active neurons is increased by one. To reflect this change, we introduce the \emph{shift operator} $\pi$ on $V$ by
\begin{equation}
	\label{eq:shift}
	\pi: V\to V, v\mapsto\sum_{i=0}^\infty v_i{\rm e}_{i+1},
\end{equation}Intuitively, it shifts all entries down to the next index as depicted in Figure~\ref{fig:shift}. Note that obviously
	 \begin{equation}
	 \label{eq:Pipreceq}
	 v\preceq v'\implies  \pi(v)\preceq \pi(v') \textnormal{ for } v,v'\in V.
	 \end{equation}\begin{figure}
  \centering
  \includegraphics[width=0.3\textwidth]{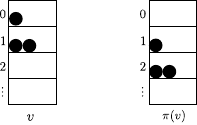}
  \caption{Visualization the shift operator $\pi$ applied to $v=e_0+2e_1$.}
\label{fig:shift}
\end{figure}We can now precisely express the recursive relation as follows. The proof is deferred to Appendix~\ref{sec:recursionProperty}.
\begin{proposition}
	\label{prop:recursion}
	For $p_1\ge p_0\ge 1$ it holds that
	$\tau_{p_0+1}^{p_1+1}\preceq \pi(\tau_{p_0+1}^{p_1})+\tau_{p_0}^{p_1}$.
\end{proposition}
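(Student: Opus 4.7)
The plan is to fix an arbitrary $h \in \textnormal{RL}(p_0+1, p_1+1)$, show that its activation histogram $v_h := \sum_{s \in \mathcal{S}_h} {\rm e}_{|s|}$ satisfies $v_h \preceq \pi(\tau_{p_0+1}^{p_1}) + \tau_{p_0}^{p_1}$, and then take the join over $h$: since $\tau_{p_0+1}^{p_1+1}$ is the least $\preceq$-upper bound of $\{v_h\}_h$ and addition on $V$ preserves $\preceq$, the claimed inequality is inherited.

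To bound $v_h$, I would single out the last neuron. Let $h' \in \textnormal{RL}(p_0+1, p_1)$ be the layer obtained by dropping the last row of $W_h$ and the last entry of $b_h$, and let $H_{p_1+1} = \{x \in \mathbb{R}^{p_0+1} : (W_h x + b_h)_{p_1+1} = 0\}$ denote the hyperplane of the discarded neuron. Each region of $h'$ is convex, so partitions into three disjoint classes relative to $H_{p_1+1}$: $\mathcal{R}_+$ lying strictly on the active side, $\mathcal{R}_-$ strictly on the inactive side, and $\mathcal{R}_\times$ crossed by $H_{p_1+1}$. Let $v_+, v_-, v_\times$ be the corresponding sub-histograms counted by the first-$p_1$ activation count, so that $v_+ + v_- + v_\times = v_{h'} \preceq \tau_{p_0+1}^{p_1}$. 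When the discarded neuron is reinstated, each $R \in \mathcal{R}_+$ yields one $h$-region with pattern $(s_R, 1)$, each $R \in \mathcal{R}_-$ yields one with $(s_R, 0)$, and each $R \in \mathcal{R}_\times$ splits into two $h$-regions carrying both $(s_R, 1)$ and $(s_R, 0)$. Summing these contributions gives the central identity
\begin{equation*}
v_h \;=\; \pi(v_+ + v_\times) + (v_- + v_\times).
\end{equation*}

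The key observation is that $v_\times$ is itself an activation histogram in one lower ambient dimension. Each $R \in \mathcal{R}_\times$ meets $H_{p_1+1}$ in a single convex, hence connected, region of the trace arrangement $\{H_i \cap H_{p_1+1}\}_{i=1}^{p_1}$, and this correspondence is bijective and preserves the first-$p_1$ activation pattern. Choosing any affine bijection $\phi : \mathbb{R}^{p_0} \to H_{p_1+1}$, the trace arrangement coincides with the hyperplane arrangement of the layer $h'' \in \textnormal{RL}(p_0, p_1)$ whose $i$-th row and bias are $W_{h,i}\,\phi_{\mathrm{lin}}$ and $W_{h,i}\,\phi(0) + (b_h)_i$ respectively (degenerate rows being permitted by the definition of $\textnormal{RL}$), so $v_\times \preceq \tau_{p_0}^{p_1}$ directly from the definition of $\tau$ as a join.

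To finish I would invoke the elementary inequality $\pi(c) + d \preceq \pi(c + d)$ for all $c, d \in V$, which follows from the linearity of $\pi$ and from $d \preceq \pi(d)$ (an immediate consequence of the tail-sum definition of $\preceq$). Applied with $c = v_+ + v_\times$ and $d = v_-$, together with the monotonicity \eqref{eq:Pipreceq}, this gives
\begin{equation*}
\pi(v_+ + v_\times) + v_- \;\preceq\; \pi(v_+ + v_- + v_\times) \;=\; \pi(v_{h'}) \;\preceq\; \pi(\tau_{p_0+1}^{p_1}).
\end{equation*}
Adding $v_\times \preceq \tau_{p_0}^{p_1}$ to both sides (using that addition preserves $\preceq$) and combining with the central identity yields $v_h \preceq \pi(\tau_{p_0+1}^{p_1}) + \tau_{p_0}^{p_1}$. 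The main technical obstacle I expect is the clean handling of degenerate configurations: a vanishing last weight row makes $H_{p_1+1}$ empty, some $H_i$ may be parallel to or coincide with $H_{p_1+1}$, and some $h'$-regions may be lower-dimensional. Each of these cases can be absorbed either into the convention that $\textnormal{RL}$ permits degenerate hyperplanes (so that the auxiliary $h''$ is always well-defined) or into a perturbation argument that first establishes the bound for generic $h$ and then extends by the fact that $\tau$ is defined as a join.
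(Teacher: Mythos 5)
Your proof is correct and follows essentially the same route as the paper's: single out the last neuron, split the attained patterns according to whether only the active, only the inactive, or both values of that neuron occur, use the identity $v_h=\pi(v_++v_\times)+(v_-+v_\times)$ together with $v\preceq\pi(v)$ to absorb everything except $v_\times$ into $\pi(\tau_{p_0+1}^{p_1})$, and bound the doubly-attained part by restricting the arrangement to the last hyperplane via an affine bijection. The paper phrases the same decomposition in terms of the pattern sets $\mathcal{S}^+$, $\mathcal{S}^-$ and their intersection/differences rather than region classes, and likewise relegates the degenerate configurations to a short case distinction.
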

We will use this property in next Section~\ref{sec:improvedBound} to derive tighter bounds using Theorem~\ref{thm:mainFirstOrder}. Figure~\ref{fig:tau} summarizes our results on the activation histogram join. To the best of our knowledge, an explicit formula of the activation histogram join for $p_1>p_0>1$ is not yet discovered apart from our conjecture for $p_0=2$.
\begin{figure}
  \centering
  \includegraphics[width=0.3\textwidth]{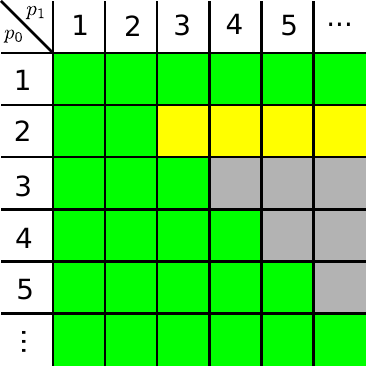}
  \caption{Indices $(p_0,p_1)$ for which the activation histogram join $\tau_{p_0}^{p_1}$ is known by Proposition~\ref{prop:boundConditionInputOne} and Lemma~\ref{lem:explicitTauPowerset} (green), for which the conjecture applies (yellow) and other indices (gray).}
  \label{fig:tau}
\end{figure}
	\subsubsection{A Note on Scientific Priority}
	When we first derived Propositions~\ref{prop:boundConditionInputOne} and ~\ref{prop:recursion} we refrained from publication because on its own and without the context presented in this work they seem marginal. It appears that the essential ingredients for the proofs were independently discovered then published in the preprint~\cite{xie2020general} by Xie et al. However, we present our own proofs for these results in this work.

	Furthermore, in a discussion with T. Zaslavsky, he pointed out that the upper bound for the number of regions of a hyperplane arrangement \eqref{eq:schlaefliBound} which is often mistakenly attributed to him in the literature on region-counting (\cite{Montufar17}, \cite{DBLP:BoundingCounting}, \cite{hinz}, \cite{pmlr-v119-xiong20a}), is actually a result of Ludwig Schläfli from his \emph{Theorie der vielfachen Kontinuität} written in 1850-1852, see his republished work~\cite{Schlafli1950}. In 1943, R.C. Buck provided a different proof, see \cite{buck1943partition}.
 \subsection{Unfolding the recursion}
 \label{sec:improvedBound}
 In this section we present a new improved collection of elementary bounds $\gamma^*\in\Gamma$ that is tighter than the presented collections $\widehat{\gamma},\tilde\gamma, \bar{\gamma}_{n,p_1}$ from equation \eqref{eq:oldGammas}. Using Theorem~\ref{thm:mainFirstOrder}, we obtain slightly stronger resulting composed bounds for the number of regions.
The idea for the construction of $\gamma^*$ is based on our results on the activation histogram join $\tau$ from Section~\ref{sec:activationHistogramProblem}.
 \begin{definition}
	 \label{def:improvedCollection}
	 For $p_1\in\mathbb{N}_{+}$ and let
		 $\gamma^*_{1,p_1}=\tau_{1}^{p_1}$
	 Furthermore, for $p_0\in\left\{ 2,\dots,p_1 \right\}$ define recursively
	 \begin{equation}
		 \label{eq:gammaStarRecursion}
		 \gamma^*_{p_0,p_1}=\pi\left( \gamma^*_{\min(p_0,p_1-1),p_1-1} \right)+\gamma^*_{p_0-1,p_1-1}
	 \end{equation}
 \end{definition}
 The anchor definition of $\gamma^*_{1,p_1}$ for $p_1\in\mathbb{N}_+$ is explicitly known by Proposition~\ref{prop:boundConditionInputOne} and the recursion in equation~\eqref{eq:gammaStarRecursion} corresponds to Proposition~\ref{prop:recursion}. This recursive definition can be explicitly unfolded.
	\begin{proposition}
		\label{prop:improvedExplicit}
		For $p_1\ge p_0\ge 2$, it holds that
	\begin{align*}
		\gamma^*_{p_0,p_1}=&\; \mathds{1}_\mathbb{N}(\tfrac{p_1-p_0}{2}){\rm e}_{ \frac{p_1-p_0}{2}} + \sum_{k=\lfloor \frac{p_1-p_0}{2}\rfloor+1}^{p_1-p_0}\left( \tbinom{2p_0+2k-p_1-2}{p_0-1}+\tbinom{2p_0+2k-p_1-1}{p_0-1} \right){\rm e}_{k}+\sum_{k=p_1-p_0+1}^{p_1}\tbinom{p_1}{ p_1-k}{\rm e}_k
	\end{align*}
	with the indicator function satisfying $\mathds{1}_{\mathbb{N}}(t)=1$ for $t\in\mathbb{N}$ and $\mathds{1}_{\mathbb{N}}(t)=0$ otherwise.
	\end{proposition}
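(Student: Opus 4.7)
The plan is to prove the closed form by strong induction on $p_1$, verifying the recursive identity from Definition~\ref{def:improvedCollection} at each step. Before starting, I would check that the claimed formula is consistent with the two boundary situations already known: for $p_0=1$ it reduces (with the convention $\tbinom{\cdot}{0}=1$) to the formula for $\tau_1^{p_1}$ from Proposition~\ref{prop:boundConditionInputOne}, and on the diagonal $p_0=p_1$ the middle sum is empty and the tail collapses (after using $\tbinom{p_1}{p_1-k}=\tbinom{p_1}{k}$) to $\sum_{k=0}^{p_1}\tbinom{p_1}{k}{\rm e}_k$, matching Lemma~\ref{lem:explicitTauPowerset}. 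These consistency checks let the induction run on all $p_0\in\{1,\ldots,p_1\}$ uniformly.

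For the inductive step the recursion splits into two sub-cases. In the diagonal case $p_0=p_1$, the recursion reads $\gamma^*_{p_1,p_1}=\pi(\gamma^*_{p_1-1,p_1-1})+\gamma^*_{p_1-1,p_1-1}$, and a direct application of Pascal's rule $\tbinom{p_1-1}{k-1}+\tbinom{p_1-1}{k}=\tbinom{p_1}{k}$ to the inductive hypothesis yields the claim. In the off-diagonal case $1\le p_0<p_1$, the recursion becomes $\gamma^*_{p_0,p_1}=\pi(\gamma^*_{p_0,p_1-1})+\gamma^*_{p_0-1,p_1-1}$, and I would substitute the inductive formulas for both summands.

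The heart of the argument is the identity satisfied by the middle-range coefficients. Since $\pi$ reindexes by one, the contribution of $\pi(\gamma^*_{p_0,p_1-1})$ at index $k$ equals $\tbinom{2p_0+2k-p_1-3}{p_0-1}+\tbinom{2p_0+2k-p_1-2}{p_0-1}$, while the middle-range contribution of $\gamma^*_{p_0-1,p_1-1}$ at the same index equals $\tbinom{2p_0+2k-p_1-3}{p_0-2}+\tbinom{2p_0+2k-p_1-2}{p_0-2}$. Pairing the two summands by their upper argument and applying Pascal's rule twice yields exactly $\tbinom{2p_0+2k-p_1-2}{p_0-1}+\tbinom{2p_0+2k-p_1-1}{p_0-1}$, the coefficient demanded by the target formula. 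An analogous but simpler Pascal application handles the upper-binomial tail.

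The main obstacle is the bookkeeping at the three boundary indices where the middle sum meets the indicator term on one side and the upper-binomial range $k\ge p_1-p_0+1$ on the other. One must track how $\lfloor(p_1-p_0)/2\rfloor$ and the parity indicator $\mathds{1}_{\mathbb N}((p_1-p_0)/2)$ behave when $p_1$ decreases by one or $p_0$ decreases by one, since this toggles the parity of $p_1-p_0$. I would split into the two parities and check, in particular, that at the transition index $k=p_1-p_0$ the middle-range contribution from $\pi(\gamma^*_{p_0,p_1-1})$ combines with the upper-binomial contribution from $\gamma^*_{p_0-1,p_1-1}$ to exactly $\tbinom{p_1}{p_1-k}$, and symmetrically that the indicator term $\mathds{1}_{\mathbb N}((p_1-p_0)/2){\rm e}_{(p_1-p_0)/2}$ appears precisely when $p_1-p_0$ is even, arising from the single leftover summand in the shifted or unshifted middle-range boundary. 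Once these finitely many boundary coefficients are reconciled and Pascal's rule is applied in the bulk, the induction closes.
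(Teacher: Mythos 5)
Your proof is correct but takes a genuinely different route from the paper. The paper derives the formula \emph{forwards}: it first unfolds the recursion~\eqref{eq:gammaStarRecursionTransformed} into a lattice-path expansion $\gamma^*_{ij}=\sum_{l}K_{i-2,j-l-1}(\gamma^{*}_{1,l})+\sum_{k}K_{i-k,j-1}(\gamma^*_{k,1})$ using the operators $K_{\Delta_i,\Delta_j}=\tbinom{\Delta_j}{\Delta_i}\pi^{\Delta_j-\Delta_i}$ (Lemma~\ref{lem:explicitAbstract}), then substitutes the anchor $\gamma^*_{1,k}=\tau_1^{k}$ and collapses the resulting double sums via a long chain of index shifts and hockey-stick-type binomial identities. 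You instead \emph{verify} the guessed closed form by induction on $p_1$, which needs nothing beyond Pascal's rule once you have checked that the formula extends consistently to the $p_0=1$ anchor and the diagonal $p_0=p_1$ (both checks are correct, and extending to $p_0=1$ is exactly what lets the $p_0=2$ step of the induction go through). Your double application of Pascal's rule on the middle-range coefficients is right, and I confirmed that the parity bookkeeping you defer does close: when $p_1-p_0$ is even the indicator ${\rm e}_{(p_1-p_0)/2}$ comes from the indicator of $\gamma^*_{p_0-1,p_1-1}$, when it is odd the shifted indicator of $\pi(\gamma^*_{p_0,p_1-1})$ supplies the missing unit at the bottom of the target middle range, and at $k=p_1$ only the shifted tail survives, giving $\tbinom{p_1}{0}$. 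One small inaccuracy: at the transition index $k=p_1-p_0$ it is the middle-range \emph{endpoint} of $\gamma^*_{p_0-1,p_1-1}$ (its upper-binomial tail only starts at $p_1-p_0+1$) that combines with the shifted middle-range endpoint of $\gamma^*_{p_0,p_1-1}$; the sum is still $\tbinom{p_1-2}{p_0-1}+\tbinom{p_1-1}{p_0-1}$ as required, so nothing breaks. The trade-off is the usual one: your argument is shorter and more elementary but presupposes the answer, whereas the paper's path-counting expansion produces the formula constructively.
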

	The next lemma shows that our proposed collection $\gamma^*$ is indeed an element of $\Gamma$. 
 \begin{lemma}
	 The collection $\gamma^*$ satisfies the bound condition from Definition~\ref{def:boundCondition}.
	 \label{lem:gammastar}
 \end{lemma}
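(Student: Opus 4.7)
The plan is to establish both parts of the layer-wise bound condition for $\gamma^*$ by simultaneous induction on $p_1$, with $p_0$ ranging over $\{1,\dots,p_1\}$ at each step. The key algebraic facts I will use throughout are the monotonicity of the shift operator under ``$\preceq$'' from equation~\eqref{eq:Pipreceq} and the (obvious, from the definition) fact that ``$\preceq$'' is preserved under sums, i.e.\ $v\preceq w$ and $v'\preceq w'$ imply $v+v'\preceq w+w'$. The base case $p_1=1$ is immediate: the only relevant index is $p_0=1$, and $\gamma^*_{1,1}=\tau_1^1$ by the anchor definition, giving (i) with equality, while (ii) is vacuous.

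For the inductive step, I first establish condition (i) at level $p_1$. The case $p_0=1$ is again the anchor definition. For $2\le p_0\le p_1$, apply Proposition~\ref{prop:recursion} with indices $(p_0-1,p_1-1)$ (which satisfy $p_1-1\ge p_0-1\ge 1$) to obtain
\[
\tau_{p_0}^{p_1}\preceq \pi\bigl(\tau_{p_0}^{p_1-1}\bigr)+\tau_{p_0-1}^{p_1-1}.
\]
When $p_0=p_1$, so that the first argument exceeds $p_1-1$, invoke Lemma~\ref{lem:explicitTauPowerset} to rewrite $\tau_{p_0}^{p_1-1}=\tau_{p_1-1}^{p_1-1}=\tau_{\min(p_0,p_1-1)}^{p_1-1}$; uniformly the shifted term has first index $\min(p_0,p_1-1)$. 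The induction hypothesis for (i) at $p_1-1$ bounds both $\tau_{\min(p_0,p_1-1)}^{p_1-1}\preceq \gamma^*_{\min(p_0,p_1-1),p_1-1}$ and $\tau_{p_0-1}^{p_1-1}\preceq \gamma^*_{p_0-1,p_1-1}$; combining with $\pi$-monotonicity and additivity reproduces exactly the recursion~\eqref{eq:gammaStarRecursion}, yielding $\tau_{p_0}^{p_1}\preceq \gamma^*_{p_0,p_1}$.

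Condition (ii) reduces to the single-step inequality $\gamma^*_{p_0,p_1}\preceq \gamma^*_{p_0+1,p_1}$ for $1\le p_0\le p_1-1$. For $p_0=1$, chain
\[
\gamma^*_{1,p_1}=\tau_1^{p_1}\preceq \tau_2^{p_1}\preceq \gamma^*_{2,p_1},
\]
where the first inequality is Lemma~\ref{lem:tauBoundCondition} (monotonicity of $\tau$ in its input dimension) and the second is condition (i) for $\gamma^*$ at $(2,p_1)$, just established at the current level. For $p_0\ge 2$ I expand both sides via~\eqref{eq:gammaStarRecursion}, distinguishing the subcase $p_0+1\le p_1-1$ (where both $\min$'s select their first arguments, and the induction hypothesis for (ii) at $p_1-1$ gives $\gamma^*_{p_0,p_1-1}\preceq \gamma^*_{p_0+1,p_1-1}$ and $\gamma^*_{p_0-1,p_1-1}\preceq \gamma^*_{p_0,p_1-1}$) from the subcase $p_0+1=p_1$ (where the $\min$ in $\gamma^*_{p_0+1,p_1}$ selects $p_1-1=p_0$, so the two shift terms coincide and only the non-shifted inequality $\gamma^*_{p_0-1,p_1-1}\preceq \gamma^*_{p_0,p_1-1}$ is needed). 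In each subcase, $\pi$-monotonicity and additivity conclude the argument.

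The main obstacle is the bookkeeping around the boundary $p_0=p_1$, where the $\min$ in~\eqref{eq:gammaStarRecursion} selects $p_1-1$ rather than $p_0$; reconciling this with Proposition~\ref{prop:recursion}, whose shift term still carries the index $p_0$, is precisely what forces the auxiliary appeal to Lemma~\ref{lem:explicitTauPowerset} above. Once that identification is made, everything else is routine semilattice arithmetic built on the already-proved recursion for $\tau$.
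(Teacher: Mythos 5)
Your proof is correct and follows essentially the same route as the paper: condition (i) is obtained by induction from Proposition~\ref{prop:recursion}, Lemma~\ref{lem:explicitTauPowerset} (to reconcile the $\min$ at the boundary $p_0=p_1$) and the $\pi$-monotonicity~\eqref{eq:Pipreceq}, exactly matching the recursion~\eqref{eq:gammaStarRecursion}. For condition (ii) the paper only gestures at ``a double induction,'' whereas you carry out the argument explicitly (the $p_0=1$ case via Lemma~\ref{lem:tauBoundCondition} together with the freshly proved condition (i), and the $p_0\ge 2$ cases by expanding both sides of the recursion); this is a welcome filling-in of detail rather than a different method.
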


	\begin{table}
  \centering
  {\def\arraystretch{1.3}\tabcolsep=3pt
  \begin{tabular}[htb]{|l||c|c|c|c|c|c|c|c}
	\hline
	\backslashbox{i}{$p_0$}&0&1&2&3&4&5&6 \\
	\hline\hline
$0$&$0$&$0$&$0$&$0$&$0$&$0$&$1$\\
$1$&$0$&$0$&$0$&$0$&$1$&$6$&$6$\\
$2$&$0$&$0$&$1$&$4$&$14$&$15$&$15$\\
$3$&$0$&$2$&$5$&$16$&$20$&$20$&$20$\\
$4$&$0$&$2$&$9$&$15$&$15$&$15$&$15$\\
$5$&$0$&$2$&$6$&$6$&$6$&$6$&$6$\\
$6$&$1$&$1$&$1$&$1$&$1$&$1$&$1$\\
$7$&$0$&$0$&$0$&$0$&$0$&${0}$&$0$\\
	$\vdots$&$0$&$0$&$0$&$0$&$0$&$0$&$0$\\
  \end{tabular}
}
\caption{The values of $(\gamma^*_{p_0,6})_i$ and different values for $i$ and $p_0$. For example every activation histograms $\sum_{s\in \mathcal{S}_h}{\rm e}_{|s|}$ with of a layer transition function $h\in\mathbb{R}^{3}\to\mathbb{R}^6$ is bounded by $\gamma^*_{3,6}=4{\rm e}_2+16{\rm e}_3+15{\rm e}_4+6{\rm e}_5+{\rm e}_6$. In particular every arrangement of $6$ oriented hyperplanes in $\mathbb{R}^{3}$ induces at most $38=1+6+15+16$ regions that are on the active side of at least $3$ of those regions.}
  \label{tab:recursion6}
	\end{table}
	Therefore, we can use $\gamma^*$ in Theorem~\ref{thm:mainFirstOrder}. The following lemma compares our new collection $\gamma^*$ with those from equation~\eqref{eq:oldGammas}.
 \begin{lemma}
	 \label{lem:newtighter}
	 The collection $ \gamma^*$ from Definition~\ref{def:improvedCollection} is at least as tight as the collection $\bar{\gamma}$ from equation~\eqref{eq:oldGammas}, i.e. for any $p_1\in\mathbb{N}_+$ and $p_0\in\left\{ 0,\dots,p_1 \right\}$, it holds that $\gamma^*_{p_0,p_1}\preceq\bar{\gamma}_{p_0,p_1}$.
 \end{lemma}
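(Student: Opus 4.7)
The plan is to argue by induction on $p_1$, exploiting the fact that $\bar{\gamma}$ itself satisfies exactly the same recursion as the one used in Definition~\ref{def:improvedCollection} to define $\gamma^*$. Monotonicity of the shift $\pi$ and of pointwise addition with respect to $\preceq$ then propagates the inequality from $p_1-1$ to $p_1$, while the anchor $p_0=1$ is supplied directly by Proposition~\ref{prop:boundConditionInputOne}.

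The first step is to verify the Pascal-type identity
\[
\bar{\gamma}_{p_0,p_1}=\pi\bigl(\bar{\gamma}_{\min(p_0,p_1-1),p_1-1}\bigr)+\bar{\gamma}_{p_0-1,p_1-1}\qquad (p_0\in\{1,\dots,p_1\}).
\]
Expanding both sides via \eqref{eq:oldGammas} and using $\binom{p_1}{j}=\binom{p_1-1}{j}+\binom{p_1-1}{j-1}$ gives this directly in the generic case $p_0<p_1$, once the shifted sum is reindexed so that indices align. The boundary case $p_0=p_1$ requires a separate check because $\min(p_0,p_1-1)=p_0-1$ there; the extra ${\rm e}_0$ contribution appearing in the shifted copy matches the $\binom{p_1}{p_1}{\rm e}_0$ term in $\bar{\gamma}_{p_1,p_1}$ via $\binom{p_1-1}{p_1-1}=1$, and Pascal's rule covers the interior indices as before.

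The second step is the base case $\gamma^*_{1,p_1}=\tau_1^{p_1}\preceq\bar{\gamma}_{1,p_1}={\rm e}_{p_1}+p_1\,{\rm e}_{p_1-1}$. Using the closed form of $\tau_1^{p_1}$ from Proposition~\ref{prop:boundConditionInputOne}, both sides have total mass $p_1+1$; the tails at $J\ge p_1$ already agree (both equal $1$), and for $J\le p_1-1$ the tail of $\bar{\gamma}_{1,p_1}$ is the full $p_1+1$, so the inequality becomes automatic after a short parity-by-parity check. The $p_0=0$ case is an equality under the natural convention $\gamma^*_{0,p_1}=\bar{\gamma}_{0,p_1}={\rm e}_{p_1}$.

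The third step is the induction. For $p_0\ge 2$, assuming $\gamma^*_{p,p_1-1}\preceq\bar{\gamma}_{p,p_1-1}$ for all relevant $p$, we chain
\[
\gamma^*_{p_0,p_1}=\pi\bigl(\gamma^*_{\min(p_0,p_1-1),p_1-1}\bigr)+\gamma^*_{p_0-1,p_1-1}\preceq\pi\bigl(\bar{\gamma}_{\min(p_0,p_1-1),p_1-1}\bigr)+\bar{\gamma}_{p_0-1,p_1-1}=\bar{\gamma}_{p_0,p_1},
\]
using monotonicity of $\pi$ (equation~\eqref{eq:Pipreceq}) and of pointwise addition with respect to $\preceq$, followed by the identity from Step~1. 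The only real obstacle is the Pascal-rule verification in Step~1, specifically confirming that the mismatch between $\min(p_0,p_1-1)$ and $p_0$ at the boundary $p_0=p_1$ does not break the telescoping of binomial coefficients; once that identity is established, the remainder is a clean two-line induction.
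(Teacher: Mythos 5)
Your proposal is correct and takes essentially the same route as the paper: your Pascal-type recursion for $\bar{\gamma}$ is exactly the paper's Lemma~\ref{lem:recursionBinomialGamma}, and the induction via monotonicity of $\pi$ and of addition under $\preceq$ is the paper's argument verbatim. The only cosmetic difference is that you verify the base case $\tau_1^{p_1}\preceq\bar{\gamma}_{1,p_1}$ by direct tail comparison, whereas the paper gets it for free from the fact that $\bar{\gamma}$ satisfies the layer-wise bound condition.
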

 In particular, for all input and output dimension $p_0,p_1$ whenever the explicit formulas of $\gamma^*_{p_0,p_1}$ and $\bar{\gamma}_{p_0,p_1}$ from Proposition~\ref{prop:improvedExplicit} and equation~\eqref{eq:oldGammas} respectively do not match, our new collection is indeed strictly tighter. The following example demonstrates how our results can be used to compute concrete bounds on the number of regions using Theorem~\ref{thm:mainFirstOrder}.
 \begin{example}
	 Consider output dimension $p_1=6$. The entries of $(\gamma^*_{p_0,6})_i$ as given by Proposition~\ref{prop:improvedExplicit} are printed in Table \ref{tab:recursion6} for $p_0\in\{0,\ldots,6\}$ and $i\in \mathbb{N}$. To each column with input dimension $p_0$, the clipping function $\textnormal{cl}_{p_0}$ has to be applied to derive the matrices used in the matrix version of the framework's bound as given in Theorem~\ref{thm:mainFirstOrder}. They are denoted by $\bar{B}_6$ and $B^*_6$ for $\bar\gamma$ and $\gamma^*$ respectively. We also derived the matrix $B_6^{\text{con}}$ that is induced similarly to Definition when our conjecture is used for $p_0=2$ and the recursion for $p_0>2$. These matrices are given by
	\begin{equation*}
	\bar{B}_6=
	\tiny
	\begin{pmatrix}
		1&  0&   0&   0&   0&   0&   1\\
		0&  7&   0&   0&   0&   6&   6\\
		0&  0&  22&   0&  15&  15&  15\\
		0&  0&   0&  42&  20&  20&  20\\
		0&  0&   0&   0&  22&  15&  15\\
		0&  0&   0&   0&   0&   7&   6\\
		0&  0&   0&   0&   0&   0&   1\\
	\end{pmatrix}
	B^*_6=
	\tiny
	\begin{pmatrix}
		1&  0&   0&   0&   0&   0&   1\\
		0&  7&   0&   0&   1&   6&   6\\
		0&  0&  22&   4&  14&  15&  15\\
		0&  0&   0&  38&  20&  20&  20\\
		0&  0&   0&   0&  22&  15&  15\\
		0&  0&   0&   0&   0&   7&   6\\
		0&  0&   0&   0&   0&   0&   1\\
	\end{pmatrix}
	B_6^{\text{con}}=
	\begin{pmatrix}
		1&  0&   0&   0&   0&   0&   1\\
		0&  7&   0&   0&   2&   6&   6\\
		0&  0&  22&   7&  13&  15&  15\\
		0&  0&   0&  35&  20&  20&  20\\
		0&  0&   0&   0&  22&  15&  15\\
		0&  0&   0&   0&   0&   7&   6\\
		0&  0&   0&   0&   0&   0&   1\\
	\end{pmatrix}.
	\end{equation*}
	Note that the maximal eigenvalue of $\bar{B}_{6}$ is $\sum_{j=0}^3\tbinom{6}{ j}=42$ whereas the maximal eigenvalue of $B^*_6$ is $38$. It follows that for a neural network with $L$ layers of equal width $n_1=\dots=n_L=6$ and arbitrary input width $n_0\in\mathbb{N}$ the old collection $\bar\gamma$ provides a bound on the number of affine linear regions of order $\mathcal{O}(42^L)$, whereas the our new collection $\gamma^*$ improves this to $\mathcal{O}(38^L)$. If our conjecture for $p_0=2$ is true this can be cut down to $\mathcal{O}(35^{L})$. Further improvements can be achieved by directly using an explicit formula for the histogram join to avoid the use of the non-tight recursion.
 \end{example}

\section{Composing subnetwork activation histograms}
\label{sec:higherOrder}
In this section we want to generalize the theory in~\cite{hinz}, specifically Theorem~\ref{thm:mainFirstOrder}, to allow for the composition of worst-case bounds on the activation histogram induced by subnetworks, i.e. multiple consecutive network layers instead of only one layer transition function. We first motivate why this is beneficial in Section~\ref{sec:compositionLoss} by explaining why this extension will allow to construct tighter bounds on the number of affine-linear regions. Then we present the formal result in Section~\ref{sec:formalGeneralization}.
 \subsection{Motivation: Composition Loss}
 \label{sec:compositionLoss}
 The bound on the number of regions in Theorem~\ref{thm:mainFirstOrder} is based on a too conservative worst-case analysis estimate to bound the number of regions. More precisely, it is assumed that every regions induced by the $l-1$ layers of the network is cut maximally into subregions by the next $l$-th layer transition function $h_{l}$ for $l\in\{1,\ldots,L\}$. However this cannot happen at the same time as the counter example shows. 
 \begin{example}
	 \label{ex:compositionloss}
	 Consider a layer transition function $h$ with input dimension $1$ and output dimension $3$ defined by
	 \begin{equation*}
		 h:\mathbb{R}^{1}\to\mathbb{R}^{3}, x\mapsto 
		 {
			 \scriptsize
		 \begin{pmatrix}
		 \textnormal{ReLU}(-(x-1))\\
		 \textnormal{ReLU}(x)\\
		 \textnormal{ReLU}(x-2)
		 \end{pmatrix}
	 }
	 \end{equation*}
	 The $\textnormal{ReLU}$ activation function used in all 3 coordinates of $h$ defines the three oriented hyperplanes $H_1={1}$, $H_2={0}$ and $H_3={2}$ in $\mathbb{R}^{1}$ see Figure~\ref{fig:compositionGap} (orientation indicated by ``$+$'' and ``$-$''). They partition the space $\mathbb{R}$ into 4 regions. The image of these regions in $\mathbb{R}^3$ is a line with three kinks indicated on the right hand side. It is impossible for a hyperplane in $\mathbb{R}^{3}$ to intersect all 4 sections of different affine-linear behaviour, whereas the framework~\cite{hinz} treats each of those sections as if they were the whole space $\mathbb{R}^1$ and could be maximally intersected with an individual set of hyperplanes.
 \end{example}
\begin{figure}[htbp]
  \centering
  \includegraphics[width=0.7\textwidth]{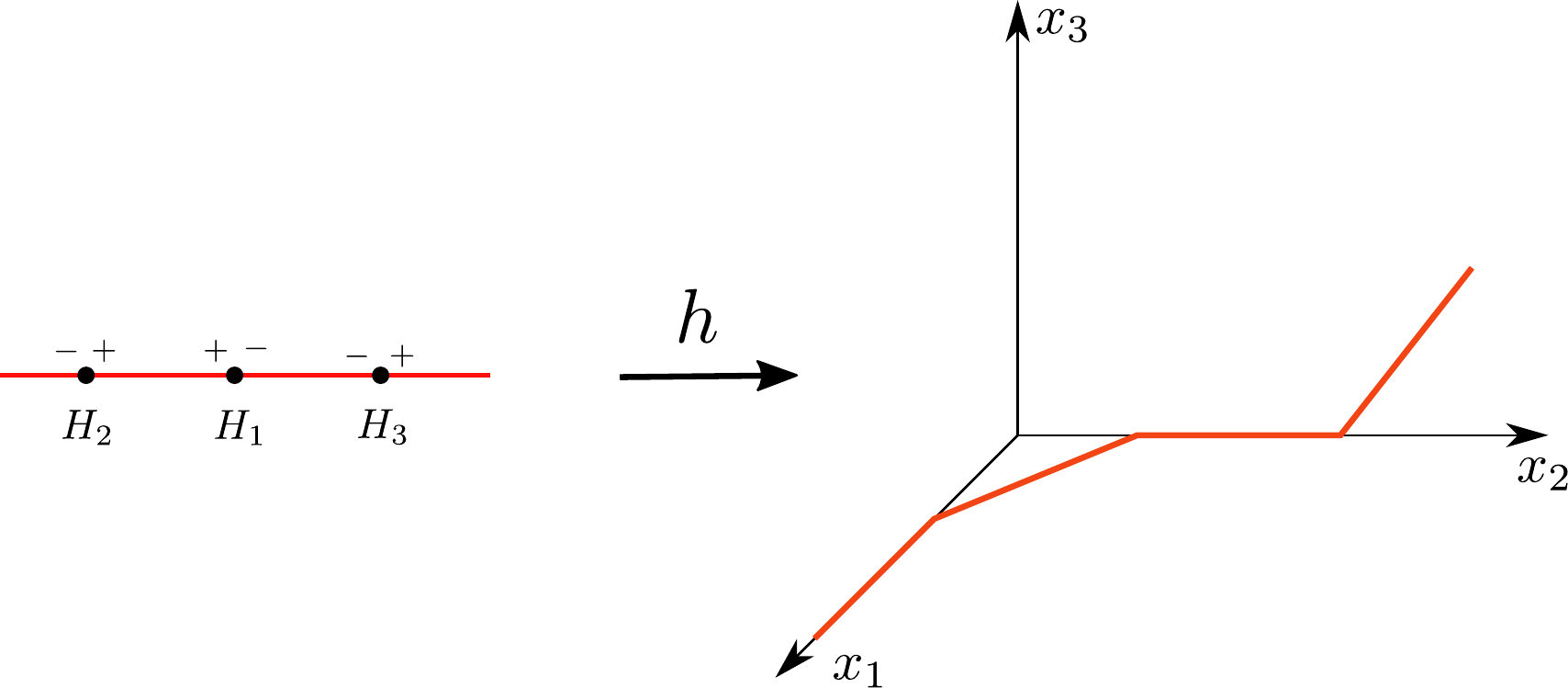}
  \caption{Illustration of the image of $h$ in $\mathbb{R}^{3}$ with three kinks separating four line sections. No hyperplane in $\mathbb{R}^{3}$ can intersect with all four line sections.}
  \label{fig:compositionGap}
\end{figure}
In particular, every of the compositions of collections of activation histogram bounds for the individual layers in Theorem~\ref{thm:mainFirstOrder} introduces a loss in tightness which we call the \emph{composition loss}. To reduce the number of compositions necessary, we generalize our framework to accept activation histogram bounds corresponding to whole blocks of consecutive network layers instead of just one.
 
\subsection{Formal statement}
\label{sec:formalGeneralization}
\subsubsection{Preliminaries}

The framework~\cite{hinz} and hence all derived bounds for the number of affine regions of feed-forward neural networks are based on the idea to bound the activation histograms when moving from one layer to the next. Bounds for neural networks are then composed of these layer-wise bounds. In this section we will develop a theory that allows to compose subnetworks for which we need appropriate formalism. We first generalize the activation histogram join.
\begin{definition}
	\label{def:multitau}
	For $p_0\in\mathbb{N}$, $l\in\mathbb{N}_+$ and $\mathbf{p}\in\mathbb{N}^l$ let
	\begin{equation*}
	\tau_{p_0}^{\mathbf{p}}:=\bigvee_{\mathbf{h}\in \textnormal{RL}(p_0,\mathbf{p})}\sum_{s\in\mathcal{S}_\mathbf{h}}{\rm e}_{\min(|s_1|,\dots,|s_l|)}
	\end{equation*}
\end{definition}
For $(h_1,\dots,h_l)=\mathbf{h}\in\textnormal{RL}(p_0,\mathbf{p})$ and a neural network $f_{\mathbf{h}}:=h_p\circ\dots\circ h_1:\mathbb{R}^{n_0}\to\mathbb{R}^{n_l}$ defined as in equation~\eqref{eq:neuralNet}, the sum $\sum_{s\in\mathcal{S}_\mathbf{h}}{\rm e}_{\min(|s_1|,\dots,|s_l|)}$ is the histogram of the minimum of the number of active neurons in the in the individual layers over all occurring regions. The join $\tau_{p_0}^{\mathbf{p}}$ is the smallest upper bound for all such histograms that can occur for $\mathbf{h}\in\textnormal{RL}(p_0,\mathbf{p})$. 
In Definition~\ref{def:boundCondition} we defined the layer-wise bound condition for collections $\gamma_{p_0,p_1}$ with $p_1\in\mathbb{N}$ and $p_0\in\left\{ 0,\dots,p_1 \right\}$. We need to provide a similar definition adapted to groups multiple layers identified by their layer widths which we call will \emph{network topology}.
\begin{definition}Let $m\in\mathbb{N}_+$ and $\mathbf{p}=(p_1,\dots,p_m)\in\mathbb{N}^m_+$. A collection of elements $(\gamma^{\mathbf{p}}_{p_0})_{p_0\in\left\{ 0,\dots,p_1 \right\}}$ in $V$ satisfies the \emph{subnetwork bound condition of topology} $\mathbf{p}$ if the following is true: 
	\label{def:generalBoundCondition}
	\begin{enumerate}
		\item $\forall p_0\in \left\{ 0,\dots,p_1 \right\}:\quad \tau^{\mathbf{p}}_{p_0}\preceq \gamma_{p_0}^{\mathbf{p}}$
		\item $\forall p_0,\tilde{p}_0\in \left\{ 0,\dots,p_1 \right\}\quad p_0\le\tilde{p}_0\implies \gamma_{p_0}^{\mathbf{p}}\preceq\gamma_{\tilde{p}_0}^{\mathbf{p}}$.
	\end{enumerate}
	Furthermore, we define the set 
  \begin{equation*}
	  \Gamma^{\mathbf{p}}:=\left\{ (\gamma_{p_0}^{\mathbf{p}})_{p_0\in\left\{ 0,\dots,p_1 \right\}}\mid \gamma \text{ satisfies the subnetwork bound condition of topology }\mathbf{p} \right\}.
  \end{equation*}
\end{definition}
Note that the uppercase $\mathbf{p}$ in $\gamma^{\mathbf{p}}_{p_0}$ above is solely for notational convenience to indicate that the topology $\mathbf{p}$ is meant. For $\mathbf{p}=(p_1,\ldots,p_m)\in\mathbb{N}_+^m$ and $\gamma^{\mathbf{p}}\in\Gamma^{\mathbf{p}}$ we define the generalization of the transformation rules from equation~\eqref{eq:transformationRule} by
   \begin{equation}
	 \label{eq:generalPhiFunc}
	   \varphi^{(\gamma^\mathbf{p})}: V\to V,\;
   v\mapsto \sum_{i=0}^{\infty}v_i \textnormal{cl}_{\min(i,p_1)}(\gamma_{\min(i,p_1)}^{\mathbf{p}}).
   \end{equation}
   Note that this sum only involves finitely many non-zero terms by the definition of $V$.
   \subsubsection{Main Result}
   \label{sec:mainResult}
\begin{figure}[htbp]
  \centering
  \includegraphics[width=1.0\textwidth]{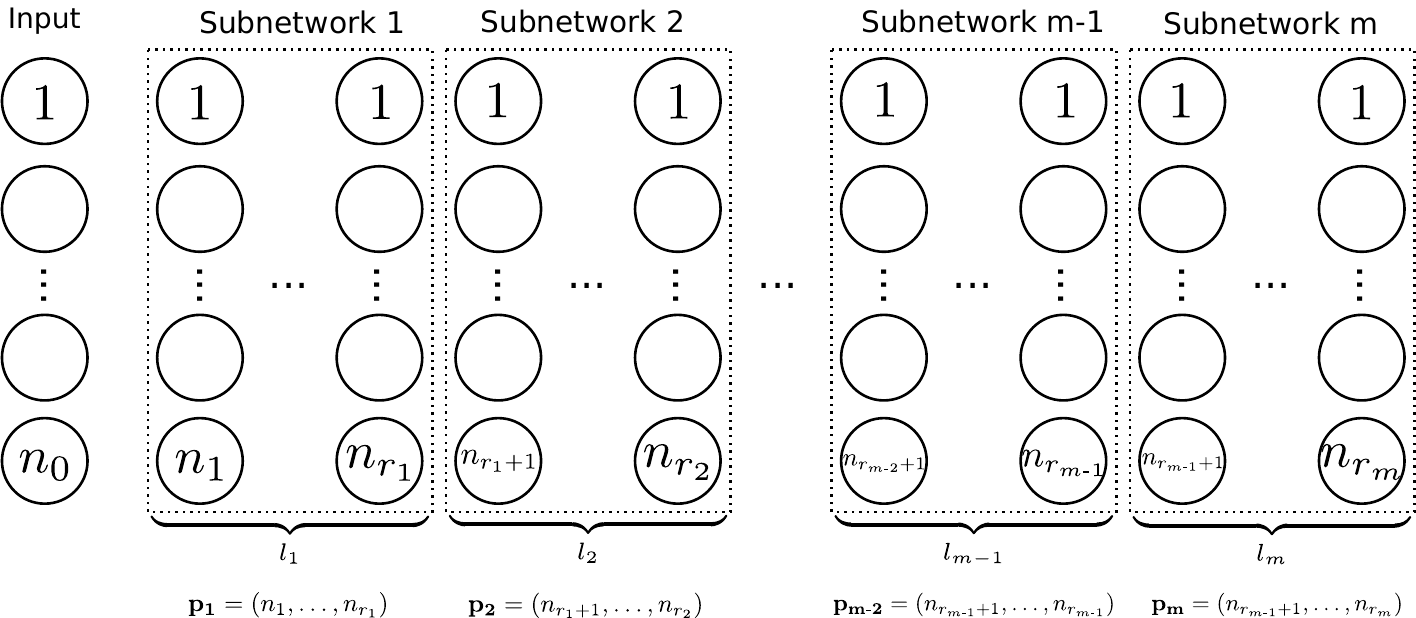}
  \caption{Partition of the feed-forward neural net $f$ into subnetworks.}
  \label{fig:partition}
\end{figure}

   In order to express our main result on the composition of higher order elementary bounds, we need to partition consecutive layers into a number $m\in\mathbb{N}$ of subnetworks. The layer index boundaries separating these subnetworks are denoted by $r_0,\dots,r_{m}\in\mathbb{N}$ with 
   \begin{equation*}
	   r_0:=0<r_1<\dots<r_{m-1}<r_m:=L.
   \end{equation*}
   For $i\in\left\{ 1,\dots,m \right\}$ we now have $l_{i}:=r_{i}-r_{i-1}$ layer transitions in the $i$-th subnetwork with widths $\mathbf{p}_i=(n_{r_{i-1}+1},\dots,n_{r_i})$, see Figure~\ref{fig:partition}. Below we present the generalization of the main result of the framework~\cite{hinz} to bound the number of regions if this network based on the partition into subnetworks.

   \begin{theorem}Assume $m,r_0,\dots,r_m,\mathbf{p}_1,\dots\mathbf{p}_m$ are chosen as above and let $\gamma^{\mathbf{p}_i}\in\Gamma^{\mathbf{p}_i}$ for $i\in\left\{ 1,\dots,m \right\}$. The number of attained activation patterns $|\mathcal{S}_{\mathbf{h}}|$ is bounded by 
   \label{thm:generalMainResultPhi}
 \begin{equation}
	 |\mathcal{S}_{\mathbf{h}}|\le\|\varphi^{(\gamma^{\mathbf{p}_m})}\circ\dots\circ\varphi^{(\gamma^{\mathbf{p}_1})}({\rm e}_{n_0})\|_1.
   \label{eq:generalMainRecursivePhi}
 \end{equation}
 Equivalently, in matrix formulation it holds that
	\begin{equation}
	 \label{eq:generalMainRecursiveMatrix}
	 |\mathcal{S}_{\mathbf{h}}|\le \|B^{(\gamma^{\mathbf{p}_m})}M_{n_{r_{m-2}+1},n_{r_{m-1}+1}}\dots 
	 B^{(\gamma^{\mathbf{p}_2})}
	 M_{n_{r_{0}+1},n_{r_{1}+1}}
		 B^{(\gamma^{\mathbf{p}_1})}M_{n_{0},n_{r_0+1}}e_{n_0+1}\|_1
	 \end{equation}
	with $M_{n,p_1}$, $e_{n_0+1}$  defined by equations~\eqref{eq:M}, \eqref{eq:e} for $n,p_1\in\mathbb{N}_+$ and square matrices $B^{(\gamma^{\mathbf{p}_k})}\in\mathbb{N}^{(n_{r_{k-1}+1}+1)\times (n_{r_{k-1}+1}+1)}$ defined by $(B^{(\gamma^{\mathbf{p}_k})})_{i,j}= \left(\varphi^{(\gamma^{\mathbf{p}_k})}({\rm e}_{j-1})\right)_{i-1}=\left( \textnormal{cl}_{j-1}(\gamma^{\mathbf{p}_k}_{j-1}) \right)_{i-1}$ for $i,j\in\left\{ 1,\dots,n_{r_{k-1}+1}+1 \right\}, k\in\left\{ 1,\dots,m \right\}$.
 \end{theorem}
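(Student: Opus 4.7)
I would prove the theorem by direct extension of Theorem~\ref{thm:mainFirstOrder} from layer-wise to subnetwork-wise composition, via induction on the partition index $k\in\{0,\dots,m\}$. The inductive claim is that the dimension histogram summarizing the ranks of the composed affine pieces after the first $k$ subnetworks is $\preceq v^{(k)}:=\varphi^{(\gamma^{\mathbf{p}_k})}\circ\dots\circ\varphi^{(\gamma^{\mathbf{p}_1})}({\rm e}_{n_0})$. The base case $k=0$ is trivial because the input space is a single region of dimension $n_0$, represented by ${\rm e}_{n_0}$.

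The key preparatory step, playing the role that equation~\eqref{eq:rankBound} plays in the proof of Theorem~\ref{thm:mainFirstOrder}, is a subnetwork-level rank bound. For any $\mathbf{h}=(h_1,\dots,h_l)\in\textnormal{RL}(p_0,\mathbf{p})$ and any attained pattern $s=(s_1,\dots,s_l)\in\mathcal{S}_\mathbf{h}$, the composed map $\tilde{h}_l^{(s_l)}\circ\dots\circ\tilde{h}_1^{(s_1)}$ is affine with rank at most $\min(p_0,|s_1|,\dots,|s_l|)$, since each factor $\tilde{h}_i^{(s_i)}$ has rank $\le|s_i|$ and the leading factor also has rank $\le p_0$. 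Combined with Definitions~\ref{def:multitau} and~\ref{def:generalBoundCondition}, this yields
\begin{equation*}
\sum_{s\in\mathcal{S}_\mathbf{h}}{\rm e}_{\textnormal{rank}(\tilde{h}^{(s)})}\preceq\textnormal{cl}_{p_0}\Bigl(\sum_{s\in\mathcal{S}_\mathbf{h}}{\rm e}_{\min(|s_1|,\dots,|s_l|)}\Bigr)\preceq\textnormal{cl}_{p_0}(\tau_{p_0}^{\mathbf{p}})\preceq\textnormal{cl}_{p_0}(\gamma_{p_0}^{\mathbf{p}}),
\end{equation*}
which is the subnetwork analogue of the sharpest bound used in the original framework.

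For the inductive step, I would consider a region $R$ produced by the first $k$ subnetworks on which the cumulative composition is affine of rank $d$. Its image $T$ lies in a $d$-dimensional affine subspace of $\mathbb{R}^{n_{r_k}}$; picking an affine parametrization $\psi:\mathbb{R}^d\to\mathbb{R}^{n_{r_k}}$ of that subspace, the subregions of $R$ generated by $\mathbf{h}_{k+1}$ correspond bijectively to the attained patterns of $\mathbf{h}_{k+1}\circ\psi\in\textnormal{RL}(d,\mathbf{p}_{k+1})$ on $\psi^{-1}(T)$, and the ranks of the extended cumulative affine pieces coincide with the ranks of the corresponding composed pieces of that subnetwork. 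Applying the preparatory step to $\mathbf{h}_{k+1}\circ\psi$ when $d\le p_1^{(k+1)}$, or—when $d>p_1^{(k+1)}$—using that the range of the first layer of $\mathbf{h}_{k+1}$ already forces the effective input dimension down to at most $p_1^{(k+1)}$ and invoking condition~2 of Definition~\ref{def:generalBoundCondition}, bounds the contribution of $R$ by $\textnormal{cl}_{\min(d,p_1^{(k+1)})}(\gamma_{\min(d,p_1^{(k+1)})}^{\mathbf{p}_{k+1}})$. Summing these bounds over all regions and using monotonicity of $\preceq$ under addition yields $v^{(k+1)}=\varphi^{(\gamma^{\mathbf{p}_{k+1}})}(v^{(k)})$ by the defining formula~\eqref{eq:generalPhiFunc}. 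Taking $k=m$ and the $\|\cdot\|_1$-norm (which counts regions regardless of rank) gives~\eqref{eq:generalMainRecursivePhi}; the matrix form~\eqref{eq:generalMainRecursiveMatrix} follows because $M_{p_0,p_1}$ encodes the clipping $v\mapsto\textnormal{cl}_{p_1}(v)$ acting on histograms supported on $\{0,\dots,p_0\}$, and $B^{(\gamma^{\mathbf{p}_k})}$ has $\varphi^{(\gamma^{\mathbf{p}_k})}({\rm e}_{j-1})$ as its $j$-th column by construction.

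The main obstacle is the reparametrization argument: one must verify carefully that when $R$ has cumulative rank $d<n_{r_k}$, the attained activation patterns of $\mathbf{h}_{k+1}$ on the image $T$ are genuinely controlled by the worst case over $\textnormal{RL}(d,\mathbf{p}_{k+1})$ rather than the larger class $\textnormal{RL}(n_{r_k},\mathbf{p}_{k+1})$—this is exactly what the affine parametrization $\psi$ delivers, and what lets the dimensional information carried by the histogram propagate through subnetworks of arbitrary depth. Everything else is bookkeeping: monotonicity of $\pi$ and $\preceq$ under sums, the fact that $\tau_{p_0}^{\mathbf{p}}=\tau_{p_1}^{\mathbf{p}}$ for $p_0\ge p_1$ (by the same argument as Lemma~\ref{lem:explicitTauPowerset}, since with enough input dimensions every first-layer activation pattern is attainable), and the translation between the $\varphi$-formulation and the matrix formulation.
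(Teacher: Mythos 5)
Your proposal is correct and follows essentially the same route as the paper: an induction over the subnetwork partition whose inductive step restricts to a region of cumulative rank $d$, affinely reparametrizes its image to reduce to the worst case over $\textnormal{RL}(d,\mathbf{p}_{k+1})$ (the paper's Lemma~\ref{lem:boundKappaByGammaNew}, via the extension $\tilde f$ and the map $\Phi$), invokes the subnetwork bound condition together with the replication property $\tau_{p_0}^{\mathbf{p}}=\tau_{p_1}^{\mathbf{p}}$ for $p_0\ge p_1$, and sums over regions to recover $\varphi^{(\gamma^{\mathbf{p}_{k+1}})}$. The only cosmetic difference is that the paper indexes its dimension histogram by $\min(n_0,|s_1|,\dots,|s_l|)$ rather than by the actual rank, which your clipping and monotonicity arguments absorb in the same way.
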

 To make use of this theorem one needs $m$ collections of histograms satisfying the subnetwork bound bound condition for the topologies $\mathbf{p}_1,\ldots,\mathbf{p}_m$ of the subnetworks. The above Example~\ref{ex:compositionloss} demonstrated that in principle, it is possible to achieve better resulting bounds on the number of regions than can be obtained from Theorem~\ref{thm:mainFirstOrder}. However, this requires sufficiently tight histogram bound collections $\gamma^{\mathbf{p}_1},\ldots,\gamma^{\mathbf{p}_m}$ that are not simply based on layer-wise considerations such that new concepts or ideas are required to take advantage of Theorem~\ref{thm:generalMainResultPhi}. 
 \section{Summary}
 \label{sec:summary}
 The work ~\cite{hinz} introduced a general framework to derive upper bounds on the affine number of regions in feed-forward ReLU neural networks based on a layer-wise worst-case analysis on activation histograms. In this work we have extended these results in two ways. Firstly, we have elaborated in detail how this framework can be fully exploited and secondly we have generalized it to allow subnetwork instead of layer-wise activation histogram bounds. 

 More precisely, our first contribution consists of a precise analysis of the activation histogram join for a ReLU layer transition function $h:\mathbb{R}^{p_0}\to\mathbb{R}^{p_1}$, $p_0,p_1\in\mathbb{R}$ which maps the output of one layer to the next. The $p_1$ hyperplanes in $\mathbb{R}^{n_0}$ induced implicitly by the $\textnormal{ReLU}$ activation function and the weight and bias parameters of $h$ partition the space $\mathbb{R}^{n_0}$ in a number of regions on which $h$ is affine. Not only the number of these regions but also the histogram of how many neurons are active on these regions is of relevance for the framework because the number of active neurons bounds the rank of $h$ on a region. It is therefore of interest to find the lowest worst case bound (join) of these histograms with respect to an appropriate order relation on the histograms. 

 We solved this activation histogram join problem for input dimension $p_0=1$ and arbitrary output dimension $p_1\in\mathbb{N}$, derived a conjecture for the case $p_0=2, p_1\in\mathbb{N}$ and motivated steps for a proof in higher dimensions $p_0>2$ and $p_1\in\mathbb{N}$. Furthermore, geometrical considerations lead to a recursive histogram bound which allows us to conclude tighter bounds on the number of regions than previously presented in~\cite{hinz} by the use of a recursive definition starting with our solution for $p_0=1$ or our conjecture for $p_0=2$. For the former case we gave an explicit formula by unfolding the recursive definition. For an explicit solution for the activation histogram join for all $p_0,p_1\in\mathbb{N}$ the framework~\cite{hinz} would provide even tighter results. Table~\ref{tab:evolution} gives an overview on the evolution of bounds on the number of regions for fully connected networks ordered from weak to strong.

 \begin{table}[htpb]
 	\centering
 	\begin{tabular}{|c|c|c|c|}
		\hline
		Bound, Reference&
		\begin{tabular}{@{}c@{}}
			Information\\ carrier
		\end{tabular}
		&Layer-wise bound based on
		\\
		\hline
		\hline
		$2^{\textnormal{\# neorons}}$&number&
		\begin{tabular}{@{}c@{}}
			set theory,\\ 
			each neuron doubles the bound
		\end{tabular}\\
		\hline
		\begin{tabular}{@{}c@{}}
		$\prod_{l=1}^L
		\sum_{i=1}^{\min(n_0,\ldots,n_{l-1})}
		\tbinom{n_l}{i}$\\
		from \cite{Montufar:2014:NLR:2969033.2969153}, 2014
		\end{tabular}
		&number&L. Schläfli's bound~\eqref{eq:schlaefliBound}\\
		\hline
		\begin{tabular}{@{}c@{}}
		Theorem~\ref{thm:mainFirstOrder} using $\bar\gamma$\\
		from \cite{hinz} (partly \cite{DBLP:BoundingCounting}), 2019
		\end{tabular}&histogram&
		\begin{tabular}{@{}c@{}}L. Schläfli's bound + \\ activation combinatorics\end{tabular}\\
		\hline
		\begin{tabular}{@{}c@{}}
			Theorem~\ref{thm:mainFirstOrder} using $\gamma^*$ (or our conjecture)\\
			from this work (partly \cite{xie2020general}), 2021
		\end{tabular}&histogram&
		\begin{tabular}{@{}c@{}}activation histogram join for\\ input dimension 1 (or 2)+recursion\end{tabular}
		 \\
		\hline
		not yet discovered&histogram&activation histogram join\\
		\hline
 	\end{tabular}
	\caption{Evolution of upper bounds on the number of regions based on layer-wise activation histogram bounds. The use of an explicit solution for the activation histogram join will provide the tightest bound obtainable from the framework~\cite{hinz}.}
	\label{tab:evolution}
 \end{table}

 As a byproduct our of our conjecture and considerations on how to upper bound the activation histogram join we observed that oriented hyperplane arrangements inducing a hot center, i.e. a region that is located in the center of all regions and on the active side of all hyperplanes tends to generate large activation histograms corresponding to regions on which many neurons are active whereas the opposite, i.e. a cold center where no neuron is active generates small activation histograms corresponding to regions where only a few neurons are active. This insight might be relevant for parameter initialization because it indicates on how the number of regions can be controlled: Parameter configurations where every layer transition function induces a cold center might induce fewer regions than configurations where all layer-wise mappings induce a hot center.

 We also generalized the framework~\cite{hinz} to allow the composition of subnetwork instead of only layer-wise activation histogram bounds. This is important to reduce the number of compositions necessary to represent the whole network because every such composition introduces a loss of tightness in the final bound on the number of regions. In particular, as soon as appropriate activation histogram bounds for subnetworks are developed our framework extension will provide of the theoretical foundation for even tighter upper bounds on the number of regions in feed-forward ReLU neural networks.

	 Finally we want to thank Prof. Tom Zaslavsky\footnote{Binghamton University, New York} who provided helpful comments, especially on the activation histogram join in Section~\ref{sec:activationHistogramProblem} and on the origin of the bound~\eqref{eq:schlaefliBound}.
  \newpage
  \appendix

  \section{Results for the composition of layer-wise histogram bounds}
  \subsection{Results on the activation histogram join}
  \label{sec:signatureJoinResults}
 \begin{proof}[Proof of Lemma~\ref{lem:tauBoundCondition}]
		 The first property of the bound condition is obviously satisfied. For the second property let $p_0,p_1\in\mathbb{N}_+$ with $p_0<p_1$ and $h\in\textnormal{RL}(p_0,p_1)$ with corresponding weight matrix $W_h\in\mathbb{R}^{p_1\times p_0}$ and bias vector $b_h\in\mathbb{R}^{p_1}$. If we extend the matrix $W_h$ to $\tilde{W}_h\in\mathbb{R}^{p_1\times p_0+1}$ with an arbitrary additional $(p_0+1)$-th column, the layer transition function $\tilde h:\mathbb{R}^{p_0+1}\to\mathbb{R}^{p_1}, x\mapsto \textnormal{ReLU}(\tilde{W}_hx+b_h)$ with coordinate-wise application of the activation function satisfies
		 \begin{equation*}
			 \mathcal{S}_{\tilde h}\supseteq \left\{ S_{\tilde h}(x)\mid x\in \mathbb{R}^{p_0}\times \left\{ 0 \right\} \right\}=
			 \left\{ S_{h}(x)\mid x\in \mathbb{R}^{p_0} \right\}=\mathcal{S}_h,
	 \end{equation*} from which it follows that $\tau_{p_0}^{p_1}\preceq\tau_{p_0+1}^{p_1}$.
	 \end{proof}
	\subsubsection{Input dimension not smaller than output dimension}
	\begin{proof}[Proof of Lemma~\ref{lem:explicitTauPowerset}]
			First $p_0> p_1$, there exist hyperplane arrangements of $p_1$ hyperplanes in $\mathbb{R}^{p_0}$ with $2^{p_1}$ induced regions, i.e. $\mathcal{S}_{h'}=\left\{ 0,1 \right\}^{p_1}$ for some $h'\in\textnormal{RL}(p_0,p_1)$. It follows that
				$\tau_{p_0}^{p_1}= \bigvee_{h'\in \textnormal{RL}(p_0,p_1)}\sum_{s\in\mathcal{S}_{h'}}{\rm e}_{|s|}= 
				\sum_{s\in\left\{ 0,1 \right\}^{p_1}}{\rm e}_{|s|}=\sum_{i=0}^{p_1}\tbinom{p_1}{i}{\rm e}_i$.
		\end{proof}
 \subsubsection{Input dimension one}
 \label{sec:inputDimOne}
 The following result shows that we only need to consider hyperplanes in general position (points on a real line that do not lie on each other) for the activation histogram join~\eqref{eq:activationHistogramJoin}.
 \begin{lemma}
	 \label{lem:restrictedMax}
 For all $p_1\in\mathbb{N}_{+}$, it holds that
 \begin{equation*}
 \tau_{1}^{p_1}= \bigvee_{\underset{|\mathcal{S}_h|=p_1+1}{h\in\textnormal{RL}(1,p_1)}}\sum_{s\in\mathcal{S}_h}{\rm e}_{|s|}.
 \end{equation*}
 \begin{proof}
	 Let $h\in\textnormal{RL}(1,p_1)$. It suffices to prove that there exists $h'\in\textnormal{RL}(1,p_1)$ with  $\sum_{s\in\mathcal{S}_{h}}{\rm e}_{|s|}\preceq \sum_{s\in\mathcal{S}_{h'}}{\rm e}_{|s|}$ and $\vert\mathcal{S}_{h'}\vert=p_1+1$. 
	 By definition, there exists $w,v\in\mathbb{R}^{p_1}$ such that for $t\in\mathbb{R}$, $h(t)=\textnormal{ReLU}(tw+v)$, where the activation function is applied coordinate-wise.
	 Define $X(v,w):=\left\{ -v_i/w_i\middle\vert\;i\in\left\{ 1,\dots,p_1 \right\}\land w_i\neq 0 \right\}$. If $|X(v,w)|\neq p_1$ we can construct slightly disturbed weights and biases $w',v'\in\mathbb{R}^{p_1}$ such that $|X(v',w')|=p_1$ and $\mathcal{S}_h\subsetneq \mathcal{S}_{h'}$ for the corresponding function $h':t\mapsto \textnormal{ReLU}.(tw'+v')$. In particular, there are $p_1+1$ regions separated by the $p_1$ points in $X(v',w')$, each with different activation pattern, i.e. $|\mathcal{S}_{h'}|=p_1+1$.
 \end{proof}
 \end{lemma}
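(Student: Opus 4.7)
The plan is to prove the lemma by a perturbation argument: starting from an arbitrary $h\in\textnormal{RL}(1,p_1)$, I will construct a nearby $h'\in\textnormal{RL}(1,p_1)$ whose induced arrangement of $p_1$ points on $\mathbb{R}$ is in general position (i.e.\ no zero weights, pairwise distinct transition points), and for which $\mathcal{S}_h\subseteq\mathcal{S}_{h'}$. Since $\sum_{s\in\mathcal{S}_h}{\rm e}_{|s|}\preceq\sum_{s\in\mathcal{S}_{h'}}{\rm e}_{|s|}$ whenever one activation-pattern set contains the other, this will show that the join over all $h\in\textnormal{RL}(1,p_1)$ coincides with the join over the restricted set $\{h\mid |\mathcal{S}_h|=p_1+1\}$, since $p_1$ distinct points on the line partition $\mathbb{R}$ into exactly $p_1+1$ regions, each realising a different activation pattern.

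Concretely, write $h(t)=\textnormal{ReLU}(tw+v)$ with $w,v\in\mathbb{R}^{p_1}$ and set $X(v,w):=\{-v_i/w_i\mid w_i\neq 0\}$. Two kinds of degeneracies prevent $|X(v,w)|=p_1$: coordinates $i$ with $w_i=0$, and coincidences $-v_i/w_i=-v_j/w_j$ for $i\neq j$. I would first address zero weights: for each index $i$ with $w_i=0$, replace $w_i$ by a nonzero value of tiny magnitude with sign chosen so that the transition point $-v_i/w_i$ is pushed far to one side of all existing transition points. If $v_i>0$ (coordinate $i$ always active in the original $h$), pick the sign so the new transition is reached only very far out, adding a region where coordinate $i$ flips to inactive; symmetrically when $v_i\le 0$. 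Then I would address coincidences by slightly shifting the biases $v_i$ to spread colliding transition points apart, while keeping all other transition points ordered identically. Call the resulting perturbed parameters $w',v'$ and let $h'(t)=\textnormal{ReLU}(tw'+v')$.

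The core of the argument is to verify $\mathcal{S}_h\subseteq \mathcal{S}_{h'}$. For this I would observe that every activation pattern in $\mathcal{S}_h$ is realised on some region of $\mathbb{R}$, that is, a maximal open interval (or ray) on which every sign $\mathrm{sign}(w_it+v_i)$ is constant. Such an interval contains a relatively open interval on which the analogous signs $\mathrm{sign}(w_i't+v_i')$ agree with the original signs, provided the perturbation is small enough: an open strict inequality is stable under sufficiently small parameter perturbations, and the only coordinates where the sign could drop are the newly introduced transition points, which by construction have been placed \emph{outside} the bulk of the original arrangement and so do not intrude into any pre-existing region. Hence every original activation pattern persists, while the new configuration has $p_1+1$ regions, yielding at least $|\mathcal{S}_h|$ patterns and possibly strictly more; this gives $\mathcal{S}_h\subseteq\mathcal{S}_{h'}$ as required.

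The step I expect to be the main obstacle is the careful bookkeeping for the perturbation of zero-weight coordinates: one has to simultaneously guarantee that the new transition points of the previously trivial coordinates do not interfere with the existing regions, and that the chosen signs correctly reproduce the original constant value of coordinate $i$ on the old part of the line while also yielding a new distinct pattern at the ``far'' end. Once this is done cleanly (e.g.\ by ordering the perturbations: first make all weights nonzero with transition points pushed outside the interval containing the existing points, then separate coincident points by infinitesimal shifts of $v$), the containment $\mathcal{S}_h\subseteq \mathcal{S}_{h'}$ and the reduction to the $|\mathcal{S}_{h'}|=p_1+1$ case follow, and the lemma is established.
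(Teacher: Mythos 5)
Your proposal is correct and follows essentially the same route as the paper: perturb the parameters so that all $p_1$ transition points exist and are distinct, and argue that for a sufficiently small perturbation every original activation pattern survives, giving $\mathcal{S}_h\subseteq\mathcal{S}_{h'}$ with $|\mathcal{S}_{h'}|=p_1+1$. You spell out the two degeneracies (zero weights and coincident transition points) in more detail than the paper does; the one point both treatments leave implicit is that coincident transition points must be separated in the right order so that the pattern attained only at the common transition point (where all colliding coordinates are simultaneously inactive) is still realized on a small interval afterwards.
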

 We can now prove explicit formulation of the activation histogram join for input dimension 1.
	 \begin{proof}[Proof of Proposition~\ref{prop:boundConditionInputOne}]
		 Let $p_1\in \mathbb{N}_+$ and $h\in \textnormal{RL}(1,p_1)$. By Lemma~\ref{lem:restrictedMax}, we can assume that there are $p_1$ points $t_1<\dots<t_{p_1}$ on the real line $\mathbb{R}$ where the affine behaviour of $h$ changes. Their orientation will be encoded as follows: For $i\in\left\{ 1,\dots,p_1 \right\}$ we say that $t_i$ has orientation $\sigma_i=1$ if $i$-th coordinate $s_i$ of the activation pattern $s\in\{ 0,1 \}^{p_1}$ is positive for regions that lie on the right side of $t_i$, otherwise it has orientation $\sigma_i=-1$. Now denote the activation patterns of the regions by $s_1,\dots,s_{p_1+1}$, where the $i$-th region is given by $(t_{i-1},t_i)$ for $i\in\left\{ 1,\dots,p_1+1 \right\}$, with $t_0=-\infty$, $t_{p_1+1}=\infty$. Note that the function $f:\left\{ 1,\dots,p_1+1 \right\}\to\mathbb{N}, i\mapsto |s_i|$ satisfies 
		 \begin{align}
			 f(i+1)-f(i)&= \sigma_i
			 \quad \textnormal{ for }i\in \left\{ 1,\dots,p_1 \right\}\\
			 f(1)&= \sum_{i=1}^{p_1}\delta_{\sigma_i,-1},
		 \end{align}i.e. the function $f$ is fully determined by $\sigma$, hence we will denote it by $f_\sigma$ from now on.
Note that 
\begin{equation}
	\mathcal{H}_\sigma:=\sum_{i=1}^{p_1+1}{\rm e}_{f_\sigma(i)}\in V \quad \textnormal{ for }\sigma\in\left\{ -1,1 \right\}^{p_1}
\end{equation}
is equal to the activation histogram $\sum_{s\in\mathcal{S}_{h}}{\rm e}_{|s|}=\mathcal{H}_{\sigma}$. Furthermore, for every $\sigma'\in\left\{ 0,1 \right\}^{p_1}$ there exists $h'\in\textnormal{RL}(1,p_1)$ such that $\sigma'$ is the orientation encoding corresponding to $h'$ as constructed above. In particular $\tau_{1}^{p_1}=\vee_{\sigma\in\left\{ 0,1 \right\}^{p_1}}\mathcal{H}_\sigma$. Define $\Sigma_i:=\big\{ \sigma\in\{ -1,1 \}^{p_1}\vert\;\sum_{j=1}^{p_1}\delta_{\sigma_j,1}=i \big\}$
 and let $\sigma_i^*\in \left\{ -1,1 \right\}^{p_1}$ be the orientation encoding where the first $i$ entries are $1$ and the remaining are $-1$.
\begin{figure}
  \centering
  \includegraphics[width=0.7\textwidth]{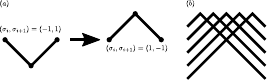}
  \caption{ (a) When there exists $i\in\left\{ 1,\ldots,p_1 \right\}$ with $(\sigma_i,\sigma_{i+1})=(-1,1)$ and we change these values to $(\sigma_i,\sigma_{i+1})=(1,-1)$ this induces a spike instead of a valley such that the corresponding histogram gets larger. (b) Among all functions $f_{\sigma^*_i}, i\in\left\{ 0,\ldots,p_1+1 \right\}$ the maximal histogram is attained when the hot region is in the center, i.e. for  $i=\lfloor p_1/2\rfloor$ or $i=\lceil p_1/2 \rceil$.}
\label{fig:sortSigma}
\end{figure}
The proof is complete if we justify every step in the following computation.
\begin{equation*}
	\tau_{1}^{p_1}=\bigvee_{\sigma\in\left\{ 0,1 \right\}^{p_1}} \mathcal{H}_\sigma\overset{(1)}{=}\bigvee_{i=0}^{p_1}\bigvee_{\sigma\in \Sigma_i} \mathcal{H}_\sigma\overset{(2)}{=}\bigvee_{i=0}^{p_1}\mathcal{H}_{\sigma_i^*}\overset{(3)}{=}\mathcal{H}_{\sigma_{\lfloor p_1/2\rfloor}^*}
\end{equation*}
Step (1) is just a partition of the joined elements into groups. For step (2), assume $i\in\left\{ 0,\dots,p_1 \right\}$ and $\sigma\in\Sigma_i$. If there exists $j\in\left\{ 1,p_1-1 \right\}$ with $\sigma_j=-1$ and $\sigma_{j+1}=1$, then $\sigma'\in\Sigma_i$ constructed from $\sigma$ by setting $\sigma'_{j}=1$ and $\sigma'_{j+1}=-1$, i.e. swapping these entries, satisfies $f_\sigma\le f_{\sigma'}$ and hence $\mathcal{H}_\sigma\preceq\mathcal{H}_{\sigma'}$, see (a) in Figure~\ref{fig:sortSigma}. In finitely many steps, one can ``move'' the ones to the left by swapping neighbouring entries and increasing the histogram while keeping the number of ones the same. When all ones are at the beginning we just have $\sigma^*_i\in\Sigma_i$, i.e. it holds that $\bigvee_{\sigma\in\Sigma_i}\mathcal{H}_{\sigma}=\mathcal{H}_{\sigma^*_i}$.
For Step (3) note that for every $i\in\left\{ 0,\dots,p_1 \right\}$, the function $f_{\sigma^*_i}$ satisfies the following:
		\begin{enumerate}
			\item The maximum is attained at $f_{\sigma^*_i}(i+1)=p_1$
			\item The function is decreasing on the right side of $i+1$: $f_{\sigma^*_i}(j+1)-f_{\sigma^*_i}(j)=-1$ for $j\in \left\{ i+1,\dots,p_1-1 \right\}$.
			\item The function is increasing on the left side of $i+1$: $f_{\sigma^*_i}(j+1)-f_{\sigma^*_i}(j)=1$ for $j\in \left\{ 1,\dots,i \right\}$.
		\end{enumerate}
		From this it follows that $\bigvee_{i=0}^{p_1}\mathcal{H}_{\sigma_i^*}=\mathcal{H}_{\sigma_{\lfloor p_1/2\rfloor}^*}=\mathcal{H}_{\sigma_{\lceil p_1/2\rceil}^*}$, see (b) in Figure~\ref{fig:sortSigma}.
The result follows from the fact that $\mathcal{H}_{\sigma_{\lfloor p_1/2\rfloor}^*}$ has the claimed form in the statement.
	 \end{proof}
	 \subsubsection{Recursion Property}
	 \label{sec:recursionProperty}
	 We first observe the following easy result about the shift operator $\pi$.
	 \begin{lemma}
		 \label{lem:preceqShift}
		 For all $v\in V$, it holds that $v\preceq \pi(v)$.
	 \end{lemma}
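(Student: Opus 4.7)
The plan is to verify the inequality $v \preceq \pi(v)$ directly from the definitions of $\pi$ and of the order relation $\preceq$. Recall that $v \preceq w$ means that $\sum_{j=J}^{\infty} v_j \le \sum_{j=J}^{\infty} w_j$ for every $J\in\mathbb{N}$. So the task reduces to a tail-sum comparison between $v$ and its shift.

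First I would fix an arbitrary $J\in\mathbb{N}$ and compute $\sum_{j=J}^{\infty} \pi(v)_j$ by re-indexing. Since $\pi(v)_j = v_{j-1}$ for $j\ge 1$ and $\pi(v)_0 = 0$, the substitution $i = j-1$ yields
\begin{equation*}
\sum_{j=J}^{\infty}\pi(v)_j \;=\; \sum_{j=\max(J,1)}^{\infty} v_{j-1} \;=\; \sum_{i=\max(J,1)-1}^{\infty} v_i.
\end{equation*}
For $J=0$ both tail sums equal $\sum_{i=0}^{\infty}v_i$ (noting $\pi(v)_0=0$), so equality holds. For $J\ge 1$ the right-hand side equals $\sum_{i=J-1}^{\infty} v_i = v_{J-1} + \sum_{i=J}^{\infty} v_i$, which is at least $\sum_{i=J}^{\infty} v_i$ because $v_{J-1}\in\mathbb{N}$ and hence non-negative. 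In either case the tail-sum inequality defining $\preceq$ is satisfied, which gives $v\preceq \pi(v)$.

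There is essentially no obstacle here; the lemma is a bookkeeping statement used in the sequel (together with the monotonicity property~\eqref{eq:Pipreceq}) to chain shift operations with the recursive bound of Proposition~\ref{prop:recursion}. The only point to be slightly careful about is the boundary case $J=0$, where the index shift introduces the extra zero entry $\pi(v)_0 = 0$; handling this case separately, as above, avoids any off-by-one confusion.
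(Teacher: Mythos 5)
Your proof is correct: the paper states this lemma without proof as an easy observation, and your direct tail-sum verification (with the careful handling of the $J=0$ boundary case) is exactly the computation that justifies it.
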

	 \label{sec:recursionProof}
\begin{proof}[Proof of Proposition~\ref{prop:recursion}]
	Let $p_1\in \mathbb{N}_+$, $p_0\in\mathbb{N}_+\setminus\left\{ 1 \right\}$ and $h\in\textnormal{RL}(p_0,p_1+1)$. Then $g:\mathbb{R}^{p_0}\to\mathbb{R}^{p_1}, x\mapsto (h(x)_i)_{i\in\left\{ 1,\dots,p_1 \right\}}$ is an element of $\textnormal{RL}(p_0,p_1)$.
	Now let
	\begin{equation*}
	\mathcal{S}^+=\left\{ (s_1,\dots,s_{p_1})\mid s\in\mathcal{S}_h\land s_{p_1+1}=1) \right\}, \quad
\mathcal{S}^-=\left\{ (s_1,\dots,s_{p_1})\mid s\in\mathcal{S}_h\land s_{p_1+1}=0) \right\}.
	\end{equation*}
	With this definition
	\begin{equation*}
		\sum_{s\in \mathcal{S}_{g}}{\rm e}_{|s|}=
		\sum_{s\in \mathcal{S}^+\setminus \mathcal{S}^-}{\rm e}_{|s|}+
		\sum_{s\in \mathcal{S}^+\cap \mathcal{S}^-}{\rm e}_{|s|}+
		\sum_{s\in \mathcal{S}^-\setminus \mathcal{S}^{+}}{\rm e}_{|s|}
	\end{equation*}
	and therefore
	\begin{align*}
		\sum_{s\in \mathcal{S}_{h}}{\rm e}_{|s|}
		&= 
		\sum_{s\in \mathcal{S}_{h}, s_{p_1+1}=0}{\rm e}_{|s|}+
		\sum_{s\in \mathcal{S}_{h}, s_{p_1+1}=1}{\rm e}_{|s|}
		= 
		\sum_{s\in \mathcal{S}^-}{\rm e}_{|s|}+
		\sum_{s\in \mathcal{S}^+}{\rm e}_{|s|+1}\\
		&= 
		\sum_{s\in \mathcal{S}^-\setminus \mathcal{S}^+}{\rm e}_{|s|}+
		\sum_{s\in \mathcal{S}^-\cap \mathcal{S}^+}{\rm e}_{|s|}+
		\sum_{s\in \mathcal{S}^-\cap \mathcal{S}^+}{\rm e}_{|s|+1}+
		\sum_{s\in \mathcal{S}^+\setminus \mathcal{S}^-}{\rm e}_{|s|+1}\\
		&\preceq 
		\sum_{s\in \mathcal{S}^-\cap \mathcal{S}^+}{\rm e}_{|s|}+
		\sum_{s\in \mathcal{S}^-\setminus \mathcal{S}^+}{\rm e}_{|s|+1}+
		\sum_{s\in \mathcal{S}^-\cap \mathcal{S}^+}{\rm e}_{|s|+1}+
		\sum_{s\in \mathcal{S}^+\setminus \mathcal{S}^-}{\rm e}_{|s|+1}\\
		&= 
		\sum_{s\in \mathcal{S}^+\cap \mathcal{S}^-}{\rm e}_{|s|}+
		\pi\big( \sum_{s\in \mathcal{S}_{g}}{\rm e}_{|s|} \big).
	\end{align*}
	Note that either $\sum_{s\in \mathcal{S}^+\cap \mathcal{S}^-}{\rm e}_{|s|}$ is ${\rm e}_k$ for some $k\le n_0$ which is bounded by $\tau_{p_{0}-1}^{p_1}$ or otherwise the set $U=\left\{ x\in\mathbb{R}^{p_0}\mid (W_hx+b_h)_{p_1+1}=0 \right\}$ defines a $(p_0-1)$-dimensional affine subspace of $\mathbb{R}^{n_0}$, i.e. a non-degenerated hyperplane. But in the latter case there exists a bijective linear map $\Phi:\mathbb{R}^{p_0-1}\to U$ such that
	\begin{equation*}
		\sum_{s\in \mathcal{S}^+\cap \mathcal{S}^-}{\rm e}_{|s|}=
		\sum_{s\in\left\{ S_h(x)\mid x\in U \right\}}{\rm e}_{|s|}
		=
		\sum_{s\in\left\{ S_{h\circ\Phi}(x)\mid x\in \mathbb{R}^{p_0-1} \right\}}{\rm e}_{|s|}
=
\sum_{s\in\mathcal{S}_{h\circ\Phi}}{\rm e}_{|s|}
\preceq \tau_{p_0-1}^{p_1},
	\end{equation*}where the first step follows from the fact that $\mathcal{S}^+\cap \mathcal{S}^-$ are signatures of those regions that are cut into two by the hyperplane $U$ and therefore the same signatures are attained on $U$ itself. Furthermore, 
	$\pi\big( \sum_{s\in \mathcal{S}_{g}}{\rm e}_{|s|} \big)\preceq\pi(\tau_{p_0}^{p_1})$ by equation~\eqref{eq:Pipreceq}. Since $h$ was arbitrary, this concludes the proof.
See Figure~\ref{fig:recursionProof} for an illustration.
\begin{figure}[htbp]
  \centering
  \includegraphics[width=\textwidth]{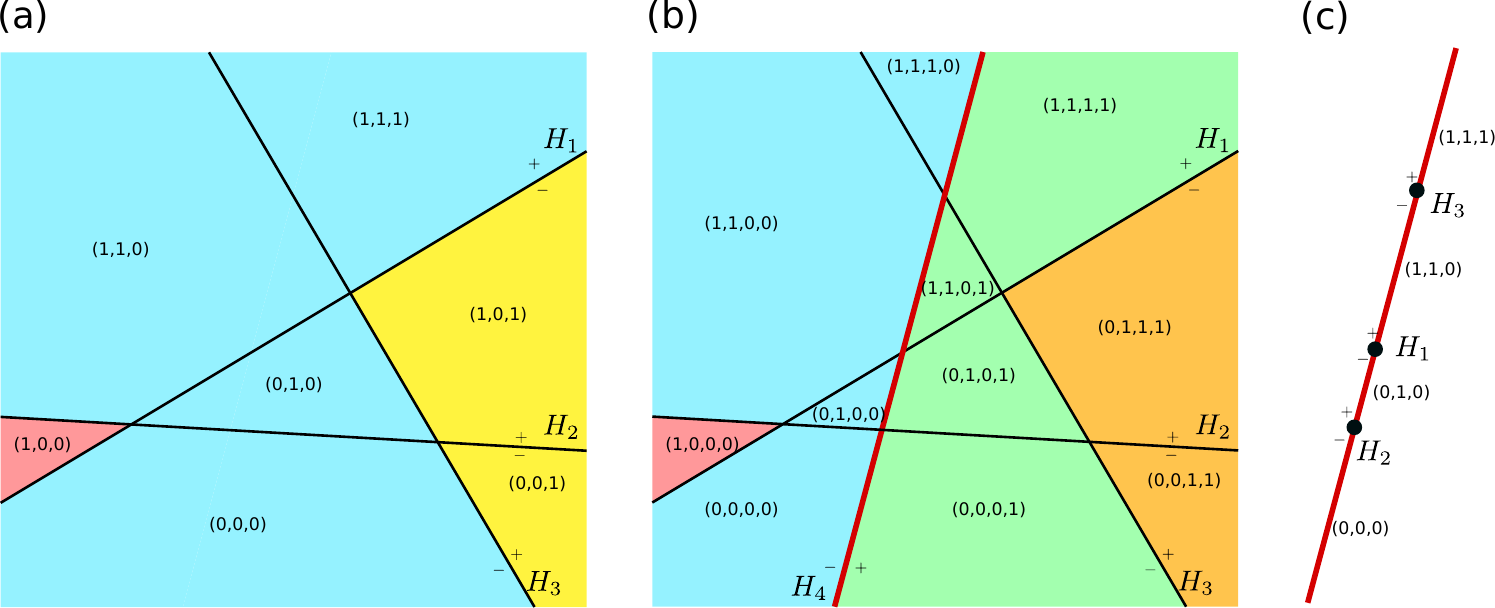}
  \caption{Illustration of the proof of Proposition~\ref{prop:recursion} for the case $p_0=2$, $p_1=3$. The hyperplanes are labeled by $H_1,\dots,H_4$ along with orientation indicated by ``$+$'' and ``$-$'' signs. 
	  The tuples represent the signatures of the respective regions. In the above case $\sum_{s\in \mathcal{S}^-\setminus\mathcal{S}^+}{\rm e}_{|s|}= {\rm e}_1$ (red area), $\sum_{s\in \mathcal{S}^+\cap\mathcal{S}^-}{\rm e}_{|s|}=\sum_{k=0}^3{\rm e}_k$ (blue area), $\sum_{s\in \mathcal{S}^+\setminus\mathcal{S}^-}{\rm e}_{|s|}={\rm e}_1+{\rm e}_2$ (yellow area) as seen in subfigure (a). In subfigure (b) a new hyperplane $H_4$ is added and $\tilde v_n^-={\rm e}_1=v_n^-$ (red area), $\sum_{s\in \mathcal{S}^+\cap\mathcal{S}^-}{\rm e}_{|s|}=\sum_{k=0}^3{\rm e}_k$ (blue area), $\sum_{s\in \mathcal{S}^+\cap\mathcal{S}^-}{\rm e}_{|s|+1}=\sum_{k=1}^4{\rm e}_k$ (green area) and $\sum_{s\in \mathcal{S}^+\setminus \mathcal{S}^-}{\rm e}_{|s|+1}={\rm e}_2+{\rm e}_3$ (orange area). Subfigure (c) visualizes the relation $\sum_{s\in \mathcal{S}^+\cap\mathcal{S}^-}{\rm e}_{|s|}\preceq \tau_{1}^{3}$ because in the $p_0-1=1$-dimensional vector space of vectors on the hyperplane $H_4$, $\sum_{s\in \mathcal{S}^+\cap\mathcal{S}^-}{\rm e}_{|s|}$ is the activation histogram of the other hyperplanes (points).
  }
  \label{fig:recursionProof}
\end{figure}
	\end{proof}
  \subsection{Unfolding the recursion}
	\label{sec:explicitExpansion}
	The purpose of this section is to prove Proposition~\ref{prop:improvedExplicit} and to compare this unfolded collection $\gamma^*$ with the collection $\bar{\gamma}$. To this end, it is suitable to define $\gamma^*_{ij}$ for all $i,j\in\mathbb{N}$ by setting $\gamma^*_{i,j}=\gamma^*_{j,j}$ for $i>j$. With this convention, the recursion~\eqref{eq:gammaStarRecursion} can be expressed as
	 \begin{equation}
		 \label{eq:gammaStarRecursionTransformed}
		 \gamma^*_{n,p_1}=\pi\left( \gamma^*_{n,p_1-1} \right)+\gamma^*_{n-1,p_1-1}.
	 \end{equation}
\begin{definition}
	For two natural numbers $\Delta_j\ge\Delta_i$, let
	\begin{equation*}
		K_{\Delta_i,\Delta_j}:
		\begin{cases}
			V&\to V\\
			v&\mapsto \tbinom{\Delta_j}{\Delta_i}\pi^{\Delta_j-\Delta_i}
		\end{cases}
	\end{equation*}
\end{definition}
\begin{figure}[htbp]
  \centering
  \includegraphics[width=\textwidth]{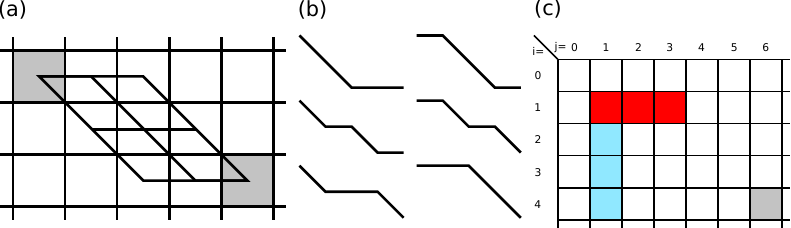}
  \caption{
	  (a) If we represent equation~\eqref{eq:gammaStarRecursionTransformed} in a two-dimensional grid, the recursive dependence is on the left and on the upper left cell. For $\Delta_i,\Delta_j\in\mathbb{N}$ the number of ways to reach a cell with index $(i+\Delta_i,j+\Delta_j)$ from a cell with index $(i,j)\in\mathbb{N}^2$  is equal to $\tbinom{\Delta_j}{\Delta_i}$. The number of horizontal steps is always $\Delta_j-\Delta_i$. In the above visualization, $\Delta_i=2$, $\Delta_j=4$ and there are $\tbinom{4}{2}=6$ possible paths depicted in subfigure (b). (c) According to the recursion~\eqref{eq:gammaStarRecursionTransformed}, the element $\gamma_{4,6}^*$ (gray) can be expressed as a function of $\gamma_{1,2},\dots,\gamma_{4,1}$ (blue entries) and $\gamma_{1,1},\dots,\gamma_{4,1}$ (red entries).
  }
  \label{fig:unfold}
\end{figure}
For $i,j\in\mathbb{N}$, $\Delta_j\ge \Delta_i$, the function $ K_{\Delta_i,\Delta_j}$ computes how $\gamma^*_{i+\Delta_i,j+\Delta_j}$ depends on $\gamma^*_{i,j}$ via the recursion~\eqref{eq:gammaStarRecursionTransformed}, see (a) in Figure~\ref{fig:unfold}. More precisely, we can expand the recursive dependence as follows.
	\begin{lemma}For $j\ge i\ge 2$, $\gamma^*_{ij}=\sum_{l=1}^{j-i+1}K_{i-2,j-l-1}(\gamma^{*}_{1,l})+\sum_{k=2}^i K_{i-k,j-1}(\gamma^*_{k,1})$.
		\label{lem:explicitAbstract}
		\begin{proof}
			The recursion~\eqref{eq:gammaStarRecursionTransformed} shows that $\gamma^*_{ij}$ can be expressed as a function of $\gamma_{2,1},\dots,\gamma_{i,1}$ and $\gamma_{1,1},\dots,\gamma_{1,j-i+1}$ (marked blue and red in (c), Figure~\ref{fig:unfold}). The dependence on the former is given by $\sum_{k=2}^i K_{i-k,j-1}(\gamma^*_{k,1})$. For the dependence on the latter, we have to take into account that every path of entries from index $(1,l)$, $l\ge 1$ to index $(i,j)$ that avoids index $(1,l+1)$ has to go through index $(2,l+1)$ which yields the term $\sum_{l=1}^{j-i+1}K_{i-2,j-l-1}(\gamma^{*}_{1,l})$ for the dependence of $\gamma^*_{i,j}$ on $\gamma^*_{1,1},\dots,\gamma^*_{1,j-i+1}$.
		\end{proof}
	\end{lemma}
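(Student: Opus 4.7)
The plan is to view the recursion~\eqref{eq:gammaStarRecursionTransformed} as a lattice walk and unfold it into an explicit sum over paths that terminate at boundary cells. I would first extend the recursion $\gamma^*_{i',j'} = \pi(\gamma^*_{i',j'-1}) + \gamma^*_{i'-1,j'-1}$ to all $i',j' \ge 2$ via the convention $\gamma^*_{i',j'} = \gamma^*_{j',j'}$ for $i' > j'$ introduced at the start of Section~\ref{sec:explicitExpansion}; a short consistency check is required, reducing to the diagonal identity $\gamma^*_{j',j'} = \pi(\gamma^*_{j'-1,j'-1}) + \gamma^*_{j'-1,j'-1}$, which is immediate from the original recursion on the diagonal. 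With this extension, the cells where the unfolding stops are exactly the boundary cells $(1,l)$ with $l \ge 1$ and $(k,1)$ with $k \ge 1$.

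The next step is to assign geometry to the recursion: a backward step $(i',j') \to (i',j'-1)$ is a horizontal move carrying a factor $\pi$, and $(i',j') \to (i'-1,j'-1)$ is a diagonal move carrying no shift. Iterating backwards, $\gamma^*_{i,j}$ becomes a sum over forward lattice paths from a boundary cell $(i_0,j_0)$ to $(i,j)$, each contributing $\pi^{(\#\text{horizontal moves})}(\gamma^*_{i_0,j_0})$. Writing $\Delta_i = i - i_0$ and $\Delta_j = j - j_0$, each such path has $\Delta_i$ diagonal and $\Delta_j - \Delta_i$ horizontal moves, there are $\binom{\Delta_j}{\Delta_i}$ of them, and their aggregate contribution is exactly $K_{\Delta_i,\Delta_j}(\gamma^*_{i_0,j_0})$.

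The combinatorial heart of the argument is grouping paths by their starting boundary cell without double counting. Since forward diagonal moves strictly increase $i'$ and horizontal moves keep $i'$ fixed, once a path leaves the top row it never returns; likewise, since forward moves strictly increase $j'$, the left column is only visited at the start. So the paths partition disjointly into (i) those starting in the top row and (ii) those starting at $(k,1)$ with $k \ge 2$. For class (i), I would index paths by the \emph{last} top-row cell they visit, which must be some $(1,l)$ with $l \ge 1$; such a path necessarily leaves $(1,l)$ by a diagonal move to $(2,l+1)$ before proceeding freely to $(i,j)$, so by the displacement count its contribution is $K_{i-2,j-l-1}(\gamma^*_{1,l})$, and $l$ ranges over $\{1,\ldots,j-i+1\}$ (the upper bound is the largest $l$ for which the remaining displacement $(i-2,j-l-1)$ is non-negative in both coordinates). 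Class (ii) contributes $\sum_{k=2}^{i} K_{i-k,j-1}(\gamma^*_{k,1})$ directly. Summing the two gives the claimed identity.

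To make the argument rigorous I would formalize it by induction on $j$ for fixed $i \ge 2$, with the base case $j = i$ verified directly and the inductive step combining $\pi(\gamma^*_{i,j-1})$ and $\gamma^*_{i-1,j-1}$ via the Pascal-type identity $\binom{n-1}{k-1} + \binom{n-1}{k} = \binom{n}{k}$ to match coefficients. The main obstacle I expect is purely bookkeeping: checking that the summation ranges $\{1,\ldots,j-i+1\}$ and $\{2,\ldots,i\}$ reproduce themselves cleanly under the recursive step without off-by-one errors, and that the $l = 1$ term in the top-row sum correctly absorbs the paths that pass through $(1,1)$ rather than being mis-assigned to the left column.
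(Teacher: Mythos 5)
Your proposal is correct and follows essentially the same route as the paper: unfold the recursion~\eqref{eq:gammaStarRecursionTransformed} as a lattice-path count, group the paths by their boundary origin, and handle the top row by conditioning on the last top-row cell visited, which forces a diagonal exit through $(2,l+1)$ and yields the factor $K_{i-2,j-l-1}$. The paper states this more tersely (leaning on Figure~\ref{fig:unfold}), while you additionally flag the consistency of the convention $\gamma^*_{i',j'}=\gamma^*_{j',j'}$ and offer an inductive formalization, but the underlying argument is identical.
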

	We can now prove the explicit formula:
	\begin{proof}[Proof of Proposition~\ref{prop:improvedExplicit}]
		Instead of $p_1,p_0$ we use the shorter variables $j\ge i\ge 2$. We will expand the terms from Lemma \ref{lem:explicitAbstract}
		\begin{equation}
			\gamma^*_{ij}=\sum_{k=1}^{j-i+1}K_{i-2,j-k-1}(\gamma^{*}_{1,k})+\sum_{k=2}^i K_{i-k,j-1}(\gamma^*_{k,1}).
			\label{eq:proofExplicit}
		\end{equation}
		From Definition~\ref{def:improvedCollection} and Proposition~\ref{prop:boundConditionInputOne} we know that for $k\in\mathbb{N}_{+}$, it holds that
		\begin{equation*}
			\gamma^*_{1,k}=
			\begin{cases}
				\sum_{l=0}^{\lfloor k/2\rfloor}({\rm e}_{k-l}+{\rm e}_{k-l-1})&\quad \textnormal{ if $k$ is odd}\\
				\sum_{l=0}^{\lfloor k/2\rfloor}({\rm e}_{k-l}+{\rm e}_{k-l-1})-{\rm e}_{k/2-1}&\quad \textnormal{ if $k$ is even}
			\end{cases}
		\end{equation*}
		such that
		 \begin{equation*}
			 \sum_{k=1}^{j-i+1}K_{i-2,j-k-1}(\gamma^{*}_{1,k})= \sum_{k=1}^{j-i+1}K_{i-2,j-k-1}\left( \sum_{l=0}^{\lfloor k/2\rfloor}{\rm e}_{k-l}+{\rm e}_{k-l-1} \right) -\sum_{\underset{\textnormal{$k$ even}}{k=1}}^{j-i+1}K_{i-2,j-k-1}({\rm e}_{k/2-1}).
		 \end{equation*}
		 We can expand the two terms on the right hand side:
	\begin{align*}
		\;\;& \sum_{k=1}^{j-i+1}K_{i-2,j-k-1}\left( \sum_{l=0}^{\lfloor k/2\rfloor}{\rm e}_{k-l}+{\rm e}_{k-l-1} \right) = \sum_{k=1}^{j-i+1}K_{i-2,j-k-1}\left( \sum_{l=-k}^{\lfloor k/2\rfloor-k}{\rm e}_{-l}+{\rm e}_{-l-1} \right)\\
		=& \sum_{k=1}^{j-i+1}K_{i-2,j-k-1}\left( \sum_{l=k-\lfloor k/2\rfloor}^{k}{\rm e}_{l}+{\rm e}_{l-1} \right)
		= \sum_{l=1}^{j-i+1}\sum_{k=l}^{\min(2l,j-i+1)}\tbinom{j-k-1}{ i-2}\pi^{j-i+1-k}({\rm e}_l+{\rm e}_{l-1})\\ 
		=& \sum_{l=1}^{j-i+1}\sum_{k=l}^{\min(2l,j-i+1)}\tbinom{j-k-1}{ i-2}({\rm e}_{l+j-i+1-k}+{\rm e}_{l+j-i+1-k-1}) \\
		=& \sum_{l=1}^{j-i+1}\sum_{k=-(j-i+1)}^{\min(l-(j-i+1),-l)}\tbinom{i-l-2-k}{ i-2}({\rm e}_{-k}+{\rm e}_{-k-1})\\
		=& \sum_{l=1}^{j-i+1}\sum_{k=\max(l,(j-i+1)-l)}^{(j-i+1)}\tbinom{i-2+k-l}{ i-2}({\rm e}_{k}+{\rm e}_{k-1})
		= \sum_{k=1}^{j-i+1}\sum_{l=\max(j-i+1-k,1)}^{k}\tbinom{i-2+k-l}{ i-2}({\rm e}_{k}+{\rm e}_{k-1})\\
		=&  \sum_{k=1}^{j-i}\sum_{l=j-i+1-k}^{k}\tbinom{i-2+k-l}{ i-2}({\rm e}_{k}+{\rm e}_{k-1})+\sum_{l=1}^{j-i+1}\tbinom{j-1-l}{ i-2}({\rm e}_{j-i+1}+{\rm e}_{j-i})\\
		=& \sum_{k=\lceil \frac{j-i+1}{2}\rceil}^{j-i}\sum_{l=0}^{2k-(j-i+1)}\tbinom{i-2+l}{ i-2}({\rm e}_{k}+{\rm e}_{k-1}) 
		+\sum_{l=0}^{j-i}\tbinom{i-2+l}{ i-2}({\rm e}_{j-i+1}+{\rm e}_{j-i})\\
		=&  \sum_{k=\lfloor \frac{j-i}{2}\rfloor+1}^{j-i}\tbinom{i-1+2k-(j-i+1)}{i-1}({\rm e}_{k}+{\rm e}_{k-1})
		+\tbinom{j-1}{i-1}({\rm e}_{j-i+1}+{\rm e}_{j-i})\\
		=& \tbinom{i+2\lfloor\frac{j-i}{2}\rfloor-(j-i)}{i-1}{\rm e}_{\lfloor \frac{j-i}{2}\rfloor} + \sum_{k=\lfloor \frac{j-i}{2}\rfloor+1}^{j-i-1}\left( \tbinom{i-1+2k-(j-i+1)}{i-1} +\tbinom{i-1+2k-(j-i+1)+2}{i-1} \right){\rm e}_{k}\\
		&+\left( \tbinom{j-1}{i-1}+\tbinom{j-2}{i-1} \right){\rm e}_{j-i}+\tbinom{j-1}{ i-1}{\rm e}_{j-i+1}
	\end{align*}
	Furthermore
		\begin{align*}
		&\;\;\;\sum_{\underset{\textnormal{$k$ even}}{k=1}}^{j-i+1}K_{i-2,j-k-1}({\rm e}_{k/2-1})
		=\sum_{k=1}^{\lfloor (j-i+1)/2\rfloor}K_{i-2,j-2k-1}({\rm e}_{k-1})\\
		&= \sum_{k=1}^{\lfloor (j-i+1)/2\rfloor}\tbinom{j-2k-1}{ i-2} \pi^{j-2k-1-i+2}({\rm e}_{k-1})
		= \sum_{k=1}^{\lfloor (j-i+1)/2\rfloor}\tbinom{j-2k-1}{ i-2} {\rm e}_{j-k-i}\\
		&=  \sum_{k=-(j-i)+1}^{\lfloor (j-i+1)/2\rfloor-(j-i)}\tbinom{2i-j-1-2k}{ i-2} {\rm e}_{-k}
		=  \sum_{k=(j-i)-\lfloor \frac{j-i+1}{2}\rfloor}^{j-i-1}\tbinom{i-1+2k-(j-i)}{ i-2} {\rm e}_{k}=  \sum_{k=\lfloor \frac{j-i}{2}\rfloor}^{j-i-1}\tbinom{i-1+2k-(j-i)}{ i-2} {\rm e}_{k}
		\end{align*}
		and
		\begin{align*}
			&\sum_{k=2}^{i}K_{i-k,j-1}(\gamma^*_{k,1}) =\sum_{k=2}^i\tbinom{j-1}{ i-k}\pi^{j-1-i+k}({\rm e}_{0}+{\rm e}_{1}) = \sum_{k=2}^{i}\tbinom{j-1}{ i-k}({\rm e}_{j-1-i+k}+{\rm e}_{j-i+k})\\
			\label{eq:secondTermExplicit}
			=& \sum_{k=2+(j-i-1)}^{i+(j-i-1)}\tbinom{j-1}{ j-1-k}({\rm e}_k+{\rm e}_{k+1}) = \tbinom{j-1}{ i-2}{\rm e}_{j-i+1}+\sum_{k=j-i+2}^{j}\tbinom{j}{ j-k}{\rm e}_k.
		\end{align*}
		If we plug the above expansions into equation~\eqref{eq:proofExplicit}, we obtain 
	\begin{align*}
		\gamma^*_{p_0,p_1}=&\; \mathds{1}_\mathbb{N}(\tfrac{p_1-p_0}{2}){\rm e}_{ \frac{p_1-p_0}{2}} + \sum_{k=\lfloor \frac{p_1-p_0}{2}\rfloor+1}^{p_1-p_0-1}\left( \tbinom{2p_0+2k-p_1-2}{p_0-1}-\tbinom{2p_0+2k-p_1-1}{p_0-2} +\tbinom{2p_0+2k-p_1}{p_0-1} \right){\rm e}_{k}\\
		&+\left(\tbinom{p_1-2}{p_0-1}+\tbinom{p_1-1}{p_0-1} \right){\rm e}_{p_1-p_0} +\sum_{k=p_1-p_0+1}^{p_1}\tbinom{p_1}{ p_1-k}{\rm e}_k,
	\end{align*}
	which is the claimed result.
	\end{proof}
 \begin{proof}[Proof of Lemma~\ref{lem:gammastar}]
		 The first condition of the bound condition requires $\tau_{p_0}^{p_1}\preceq\gamma^*_{p_0,p_1}$ for every $p_1\ge p_0\ge 1$.  
		 We prove the result by induction. The claim holds for $p_1\ge 1$ and $p_0=1$ by Proposition~\ref{prop:boundConditionInputOne}.  If we assume $p_1\ge p_0\ge 2$, $\tau_{p_0}^{p_1}\preceq\gamma^*_{p_0,p_1}$ and $\tau_{p_0+1}^{p_1}\preceq\gamma^*_{p_0+1,p_1}$ then Proposition~\ref{prop:recursion}, Lemma~\ref{lem:explicitTauPowerset} and equation~\eqref{eq:Pipreceq} imply that
	 \begin{equation*}
	 \tau_{p_0+1}^{p_1+1}\preceq \pi(\tau_{p_0+1}^{p_1})+\tau_{p_0}^{p_1}=\pi(\tau_{\min(p_1,p_0+1)}^{p_1})+\tau_{p_0}^{p_1}\preceq \pi(\gamma^*_{\min(p_0+1,p_1),p_1})+\gamma^*_{p_0,p_1}=\gamma^{*}_{p_0+1,p_1+1}
	 \end{equation*}
	 The second requirement of the bound condition is satisfied by a double induction: First do an induction for $p_1\ge p_0$ increasing and $p_0$ fixed and then an outer induction on $p_0$ increasing.
	 \end{proof}

	 In order to compare the unfolded collection $\gamma^*$ with the collection $\bar{\gamma}$ we show that the latter satisfies the same recursion property. 
 \begin{lemma}
	 \label{lem:recursionBinomialGamma}
	 For $n,p_1\in\mathbb{N}_{\ge 2}$ the identity $\bar{\gamma}_{n,p_1}=\pi\left( \bar{\gamma}_{\min(n,p_1-1),p_1-1} \right)+\bar{\gamma}_{n-1,p_1-1}$ holds.
	 \begin{proof}
		For all $i\in\mathbb{N}$ Pascal's identity implies 
	 \begin{align*}
		 \left( \bar{\gamma}_{n,p_1} \right)_i=&\; \sum_{j=0}^{n}\tbinom{p_1}{ j}\delta_{i,p_1-j}=\tbinom{p_1}{ p_1-i}\mathds{1}_{ \left\{ p_1-n,\dots,p_1 \right\}}(i)\\
		 =&\; \tbinom{p_1-1}{ p_1-i}\mathds{1}_{ \left\{ \max(p_1-n,1),\dots,p_1 \right\}}(i)+\tbinom{p_1-1}{ p_1-(i+1)}\mathds{1}_{ \left\{ p_1-n,\dots,p_1-1 \right\}}(i)\\
		 =&\; \tbinom{p_1-1}{ (p_1-1)-(i-1)}\mathds{1}_{ \left\{ (p_1-1)-\min(n,p_1-1),\dots,(p_1-1) \right\}}(i-1)\\
		 &\;+\tbinom{p_1-1}{ (p_1-1)-i}\mathds{1}_{ \left\{ (p_1-1)-(n-1),\dots,(p_1-1) \right\}}(i)\\
		 =&\; \left( \bar{\gamma}_{\min(n,p_1-1),p_1-1} \right)_{i-1}+\left(\bar{\gamma}_{n-1,p_1-1}  \right)_i =\left( \pi\left( \bar{\gamma}_{\min(n,p_1-1),p_1-1} \right)+\bar{\gamma}_{n-1,p_1-1} \right)_i\qedhere
	 \end{align*}
	 \end{proof}
 \end{lemma}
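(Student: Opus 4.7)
The plan is to prove the identity coefficient-wise, exploiting Pascal's identity. Both sides are elements of $V \subset \mathbb{N}^{\mathbb{N}}$, so it suffices to verify equality at every index $i \in \mathbb{N}$. The overall structure will be: unpack $\bar{\gamma}_{n,p_1}$ from the explicit formula~\eqref{eq:oldGammas}, apply Pascal's identity to the binomial coefficient entry, then re-interpret the two resulting terms as entries of $\pi(\bar{\gamma}_{\min(n,p_1-1),p_1-1})$ and $\bar{\gamma}_{n-1,p_1-1}$ respectively.

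First, from the explicit form $\bar{\gamma}_{p_0,p_1} = \sum_{j=0}^{p_0} \tbinom{p_1}{j}\,{\rm e}_{p_1-j}$, substituting $j = p_1 - i$ yields
\[
(\bar{\gamma}_{n,p_1})_i = \tbinom{p_1}{p_1-i}\,\mathds{1}_{\{p_1-n,\ldots,p_1\}}(i).
\]
Next I would split the binomial via Pascal's identity, $\tbinom{p_1}{p_1-i} = \tbinom{p_1-1}{p_1-i} + \tbinom{p_1-1}{p_1-1-i}$, and split the indicator support $\{p_1-n,\ldots,p_1\}$ compatibly as $\{\max(p_1-n,1),\ldots,p_1\}$ and $\{p_1-n,\ldots,p_1-1\}$ (the first absorbing $i=p_1$, the second absorbing $i=p_1-n$ when $n<p_1$; the two ranges together cover the original one). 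Rewriting $\tbinom{p_1-1}{p_1-i} = \tbinom{p_1-1}{(p_1-1)-(i-1)}$ and noting the support shift $i \mapsto i-1$, the first piece matches the $(i-1)$-th entry of $\bar{\gamma}_{\min(n,p_1-1),p_1-1}$, hence the $i$-th entry of $\pi(\bar{\gamma}_{\min(n,p_1-1),p_1-1})$; the second piece directly matches the $i$-th entry of $\bar{\gamma}_{n-1,p_1-1}$ once one writes $\tbinom{p_1-1}{p_1-1-i}$ and the support as $\{(p_1-1)-(n-1),\ldots,p_1-1\}$.

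The step that requires the most care is aligning the indicator ranges through the $\min(n, p_1-1)$ in the first summand. When $n \le p_1 - 1$ the minimum equals $n$ and the ranges match verbatim by $\max(p_1-n,1) = p_1-n$. The boundary case $n = p_1$ is the only delicate one: then $\min(n,p_1-1) = p_1-1$, so the shifted support starts at $i = 1$ rather than $i = 0$. One must check that the left-hand side at $i=0$ (value $\tbinom{p_1}{p_1} = 1$) is still correctly reproduced; this works out because at $i=0$ the first summand $\tbinom{p_1-1}{p_1}$ vanishes identically and the second summand contributes $\tbinom{p_1-1}{p_1-1} = 1$ via $\bar{\gamma}_{n-1,p_1-1}$, whose support $\{0,\ldots,p_1-1\}$ contains $0$. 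Once this book-keeping is confirmed, equality holds at every $i$ and the lemma follows in $V$.
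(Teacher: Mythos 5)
Your proposal is correct and follows essentially the same route as the paper's proof: a coefficient-wise verification starting from the explicit formula, applying Pascal's identity, splitting the indicator support so that the two pieces are recognized as $\bigl(\pi(\bar{\gamma}_{\min(n,p_1-1),p_1-1})\bigr)_i$ and $\bigl(\bar{\gamma}_{n-1,p_1-1}\bigr)_i$. Your explicit check of the boundary case $n=p_1$ at $i=0$ matches the role played by the $\max(p_1-n,1)$ and $\min(n,p_1-1)$ adjustments in the paper's chain of equalities.
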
 
 The fact that $\bar{\gamma}$ and $\gamma^*$ satisfy the same recursion property allows to conclude that $\gamma^*$ is at least as tight since this is true for input dimension $1$. This is formalized below.
	 \begin{proof}[Proof of Lemma~\ref{lem:newtighter}]
		 Since $\bar\gamma$ satisfies the bound condition, Definition~\ref{def:improvedCollection} implies for all $p_1\in\mathbb{N}_+$ the relation $\gamma^*_{1,p_1}=\tau_{1}^{p_1}\preceq\bar\gamma_{1,p_1}$. The result now follows by induction since both, $\gamma^*$ and $\bar{\gamma}$ satisfy the same recursive relation by the same definition and Lemma \ref{lem:recursionBinomialGamma} so the relation is inherited.
         %
	 \end{proof}
  \section{Results for the composition of subnetwork activation histogram bounds}
  \subsection{Basic auxiliary results}
  We begin with several obvious results.
 \begin{lemma}
	 \label{lem:preceqSum}
	 Let $v,w,v',w'\in V$. If $v\preceq v'$ and $w\preceq w'$ then $v+w\preceq v'+w'$.
 \end{lemma}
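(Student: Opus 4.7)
The plan is to unfold the definition of $\preceq$ and use the fact that termwise addition in $V$ commutes with truncated summation. Specifically, recall from the preliminaries that for $v,w\in V$,
\begin{equation*}
v\preceq w \iff \forall J\in\mathbb{N}:\;\sum_{j=J}^\infty v_j \le \sum_{j=J}^{\infty}w_j,
\end{equation*}
and that addition in $V$ is coordinate-wise, so $(v+w)_j = v_j + w_j$ for every $j\in\mathbb{N}$. The only finiteness concern is that the sums $\sum_{j=J}^\infty v_j$ are well-defined, but this is immediate from the definition $V=\{v\in\mathbb{N}^\mathbb{N}\mid \sum_{j=0}^\infty v_j<\infty\}$.

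Concretely, I would fix an arbitrary $J\in\mathbb{N}$ and write
\begin{equation*}
\sum_{j=J}^\infty (v+w)_j = \sum_{j=J}^\infty v_j + \sum_{j=J}^\infty w_j \le \sum_{j=J}^\infty v'_j + \sum_{j=J}^\infty w'_j = \sum_{j=J}^\infty (v'+w')_j,
\end{equation*}
where the middle inequality just adds the two hypotheses $v\preceq v'$ and $w\preceq w'$ applied at the same $J$. Since $J$ was arbitrary, the definition of $\preceq$ gives $v+w\preceq v'+w'$.

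There is no real obstacle here; the lemma is essentially the observation that $\preceq$ is compatible with componentwise addition because it is defined by a family of tail-sum inequalities, and tail sums are additive. The only thing to be mildly careful about is to apply both hypotheses at the \emph{same} index $J$ before adding, rather than at different indices, and to keep the argument at the level of tail sums throughout rather than trying to argue componentwise (where the statement would be false in general).
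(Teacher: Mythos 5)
Your proof is correct and is exactly the tail-sum argument the paper has in mind; indeed the paper states this lemma without proof as one of several ``obvious results,'' and your unfolding of the definition of $\preceq$ with additivity of tail sums at a fixed $J$ is the intended justification.
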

  \begin{lemma}
  \label{lem:preceqNorm}
  Assume for $v,w\in V$ that $v\preceq w$. Then $\|v\|_1\le \|w\|_1$.
\end{lemma}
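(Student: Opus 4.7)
The plan is to observe that this lemma is an immediate consequence of the definition of the order relation ``$\preceq$'' together with the observation that elements of $V$ are non-negative integer sequences. Since $v\in V$ satisfies $v_j\in\mathbb{N}$ for every $j$, the $\ell_1$-norm simply coincides with the unsigned sum, i.e.\ $\|v\|_1=\sum_{j=0}^\infty v_j$, and analogously for $w$. Both sums are finite by the definition of $V$.

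Given this, the entire argument consists of specializing the quantifier in the definition of $\preceq$ to the single value $J=0$. The hypothesis $v\preceq w$ reads
\begin{equation*}
    \forall J\in\mathbb{N}:\quad \sum_{j=J}^\infty v_j\le \sum_{j=J}^\infty w_j,
\end{equation*}
so picking $J=0$ yields $\sum_{j=0}^\infty v_j\le \sum_{j=0}^\infty w_j$, which is exactly $\|v\|_1\le \|w\|_1$.

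There is no genuine obstacle; the only subtlety worth flagging is that the argument relies on both sides being non-negative so that the tail sums at $J=0$ really do equal the norms. This is guaranteed by $V\subseteq \mathbb{N}^{\mathbb{N}}$, which is part of the definition of $V$ recalled in Section~\ref{sec:preliminaries}.
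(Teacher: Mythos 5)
Your proof is correct and is exactly the intended argument: the paper states this lemma without proof as an ``obvious result,'' and the observation that taking $J=0$ in the definition of $\preceq$ gives $\sum_{j=0}^\infty v_j\le\sum_{j=0}^\infty w_j$, which equals $\|v\|_1\le\|w\|_1$ by non-negativity of elements of $V$, is precisely what makes it obvious.
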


\begin{lemma}
  \label{lem:preceqMultipleSummands}
  For $m\in \mathbb{N}_+$ and $a_1,\dots,a_m, b_1,\dots,b_m\in V$ it holds that 
  \begin{equation*}
	  \forall i\in \left\{ 1,\dots,m \right\}\;a_i\preceq b_i \implies \sum_{i=1}^{m} a_i\preceq\sum_{i=1}^{m}b_i.
  \end{equation*}
\end{lemma}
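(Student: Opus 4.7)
The plan is to prove this by a straightforward induction on $m$, using Lemma~\ref{lem:preceqSum} as both the base case and the inductive step. The result is essentially a routine generalization of the two-summand case, so I do not expect any substantive obstacle, only bookkeeping.

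First I would handle the base case $m=1$, which reduces to the tautology $a_1\preceq b_1 \implies a_1\preceq b_1$. The case $m=2$ is exactly Lemma~\ref{lem:preceqSum} and can be cited directly. For the inductive step, I would assume the claim holds for some fixed $m\in\mathbb{N}_+$ and consider $m+1$ elements $a_1,\dots,a_{m+1}, b_1,\dots,b_{m+1}\in V$ with $a_i\preceq b_i$ for all $i\in\{1,\dots,m+1\}$. By the induction hypothesis applied to the first $m$ pairs, $\sum_{i=1}^m a_i \preceq \sum_{i=1}^m b_i$. Then I would apply Lemma~\ref{lem:preceqSum} with $v=\sum_{i=1}^m a_i$, $v'=\sum_{i=1}^m b_i$, $w=a_{m+1}$, $w'=b_{m+1}$ to obtain
\begin{equation*}
\sum_{i=1}^{m+1}a_i = \Bigl(\sum_{i=1}^m a_i\Bigr) + a_{m+1} \preceq \Bigl(\sum_{i=1}^m b_i\Bigr) + b_{m+1} = \sum_{i=1}^{m+1}b_i,
\end{equation*}
closing the induction.

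Since $V$ together with the pointwise addition and the order $\preceq$ is such that Lemma~\ref{lem:preceqSum} provides compatibility of the order with addition, the induction goes through without any subtlety about infinite sums (each $a_i,b_i\in V$ has finite support, so each partial sum again lies in $V$). The only point worth noting is that the definition of $\preceq$ via tail sums $\sum_{j=J}^\infty(\cdot)_j$ is preserved under finite addition, which is precisely the content of Lemma~\ref{lem:preceqSum} and hence needs no separate argument here.
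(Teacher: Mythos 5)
Your induction on $m$ via Lemma~\ref{lem:preceqSum} is correct; the paper itself states this lemma without proof as one of several ``obvious results,'' and your argument is exactly the routine one the authors leave implicit. Nothing further is needed.
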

  \begin{lemma}
	  \label{lem:keepRelation}
	  Let $v_1,v_2\in V$ and $i^*\in\mathbb{N}$. It holds that $v_1\preceq v_2\implies \textnormal{cl}_{i^*}(v_1)\preceq\textnormal{cl}_{i^*}(v_2)$.
  \end{lemma}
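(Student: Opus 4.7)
The plan is to verify the defining tail-sum inequality $\sum_{j\ge J}\textnormal{cl}_{i^*}(v_1)_j \le \sum_{j\ge J}\textnormal{cl}_{i^*}(v_2)_j$ for every $J\in\mathbb{N}$ directly from the definition of $\textnormal{cl}_{i^*}$, splitting into two elementary cases according to whether $J\le i^*$ or $J>i^*$. The core observation is that clipping at $i^*$ preserves the total tail mass above every threshold $J\le i^*$: entries below $i^*$ are untouched, entries above $i^*$ are pooled onto coordinate $i^*$, and entries beyond $i^*$ vanish, so aggregating from $J$ onward simply recovers the original tail sum of $v$.

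First I would make this mass-preservation identity precise for an arbitrary $v\in V$ by unpacking the operator into its three regimes: $\textnormal{cl}_{i^*}(v)_j=v_j$ for $j<i^*$, $\textnormal{cl}_{i^*}(v)_{i^*}=\sum_{i\ge i^*}v_i$, and $\textnormal{cl}_{i^*}(v)_j=0$ for $j>i^*$. Summing over $j\ge J$ with $J\le i^*$ then yields $\sum_{j\ge J}\textnormal{cl}_{i^*}(v)_j=\sum_{j\ge J}v_j$. Applying this identity to both $v_1$ and $v_2$ and invoking the hypothesis $v_1\preceq v_2$ immediately supplies the required tail-sum inequality for every $J\le i^*$. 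For the complementary case $J>i^*$, both clipped tail sums vanish by the third regime above, and the inequality $0\le 0$ is trivial. Combining the two cases covers all $J\in\mathbb{N}$ and establishes $\textnormal{cl}_{i^*}(v_1)\preceq\textnormal{cl}_{i^*}(v_2)$.

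There is no substantive obstacle here: the lemma is a routine structural compatibility between the order $\preceq$, which is defined purely in terms of tail sums, and the clipping operator, which only redistributes mass from above $i^*$ onto $i^*$. The only care required is to keep the case split on $J$ versus $i^*$ clean; once made, each case reduces to a one-line computation, matching the tone of the other ``obvious'' auxiliary lemmas (Lemmas~\ref{lem:preceqSum}, \ref{lem:preceqNorm}, \ref{lem:preceqMultipleSummands}) in the same subsection.
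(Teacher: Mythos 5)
Your proof is correct; the paper lists this lemma among its unproved ``obvious'' auxiliary results, and your tail-sum computation --- preservation of the tail mass $\sum_{j\ge J}\textnormal{cl}_{i^*}(v)_j=\sum_{j\ge J}v_j$ for $J\le i^*$ and vanishing of both sides for $J>i^*$ --- is exactly the intended verification. The only point worth flagging is that you have implicitly read the paper's definition $\textnormal{cl}_j(v)=\sum_{i=0}^{j}v_i{\rm e}_{\min(i,j)}$ with upper summation limit $\infty$ rather than $j$ (so that mass above $i^*$ is pooled onto coordinate $i^*$ instead of discarded); this is clearly what is meant given how $\textnormal{cl}_{p_0}$ is used throughout the paper, and it is in fact necessary for the lemma to hold, since under the literal truncation reading one has ${\rm e}_0\preceq{\rm e}_{i^*+1}$ but $\textnormal{cl}_{i^*}({\rm e}_0)={\rm e}_0\not\preceq 0=\textnormal{cl}_{i^*}({\rm e}_{i^*+1})$.
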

\begin{lemma}
  \label{lem:monotonicityPhi}
  For $l\in\mathbb{N}_+$, $\mathbf{p}\in\mathbb{N}^l_+$, $\gamma^{\mathbf{p}}\in\Gamma^{\mathbf{p}}$,  and $v_1,v_2\in V$ the following monotonicity holds:
  \begin{equation*}
	  v_1\preceq v_2\implies \varphi^{(\gamma^{\mathbf{p}})}(v_1)\preceq \varphi^{(\gamma^{\mathbf{p}})}(v_2)
  \end{equation*}
  \begin{proof}
	This follows from equation~\eqref{eq:generalPhiFunc}, the second property of the bound condition of Definition~\ref{def:generalBoundCondition}, and Lemmas~\ref{lem:keepRelation} and \ref{lem:preceqMultipleSummands}.
  \end{proof}
\end{lemma}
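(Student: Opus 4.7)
The plan is to unfold the definition of $\varphi^{(\gamma^{\mathbf{p}})}$ via equation~\eqref{eq:generalPhiFunc}, reduce the claimed $\preceq$-inequality to a family of scalar inequalities indexed by a cutoff $K$, and then apply Abel summation. Denote by $p_1$ the first component of $\mathbf{p}$ and set $c_i := \textnormal{cl}_{\min(i,p_1)}(\gamma_{\min(i,p_1)}^{\mathbf{p}}) \in V$, so $\varphi^{(\gamma^{\mathbf{p}})}(v) = \sum_{i\ge 0} v_i\, c_i$. By the definition of $\preceq$, it suffices to show for each $K\in\mathbb{N}$ that $\sum_{k\ge K}[\varphi^{(\gamma^{\mathbf{p}})}(v_1)]_k \le \sum_{k\ge K}[\varphi^{(\gamma^{\mathbf{p}})}(v_2)]_k$. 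Interchanging the (finitely many non-zero) summations and setting $g_K(i):=\sum_{k\ge K}[c_i]_k$, this reduces to the scalar inequality $\sum_{i\ge 0}(v_1)_i\, g_K(i) \le \sum_{i\ge 0}(v_2)_i\, g_K(i)$.

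The first step is to show that $i \mapsto c_i$ is $\preceq$-non-decreasing, which in particular forces $g_K$ to be a non-decreasing scalar sequence. For $i\le i'$ we have $\min(i,p_1) \le \min(i',p_1)$, so the second property of the subnetwork bound condition in Definition~\ref{def:generalBoundCondition} gives $\gamma_{\min(i,p_1)}^{\mathbf{p}} \preceq \gamma_{\min(i',p_1)}^{\mathbf{p}}$, and Lemma~\ref{lem:keepRelation} then yields $\textnormal{cl}_{\min(i',p_1)}(\gamma_{\min(i,p_1)}^{\mathbf{p}}) \preceq \textnormal{cl}_{\min(i',p_1)}(\gamma_{\min(i',p_1)}^{\mathbf{p}})$. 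A direct inspection of the definition of $\textnormal{cl}_j$ shows that $\textnormal{cl}_j(v)$ is itself $\preceq$-non-decreasing in $j$ for fixed $v$ (the upper-tail sums of $\textnormal{cl}_j(v)$ coincide with those of $v$ for $J\le j$ and vanish thereafter, so enlarging $j$ can only increase them), and chaining the two inequalities produces $c_i \preceq c_{i'}$, hence $g_K(i)\le g_K(i')$.

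With this monotonicity in hand, write $V_\alpha(j):=\sum_{i\ge j}(v_\alpha)_i$ for $\alpha\in\{1,2\}$ and set $g_K(-1):=0$. Abel summation gives $\sum_{i\ge 0}(v_\alpha)_i\, g_K(i) = \sum_{j\ge 0} \bigl(g_K(j)-g_K(j-1)\bigr)\, V_\alpha(j)$. Each coefficient $g_K(j)-g_K(j-1)$ is a non-negative scalar by the previous step, and the hypothesis $v_1\preceq v_2$ gives $V_1(j)\le V_2(j)$ for every $j$. Termwise comparison of the two Abel-transformed series yields the required scalar inequality, which concludes the proof.

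The main subtlety, and the reason a direct termwise application of Lemma~\ref{lem:preceqMultipleSummands} to the sum $\sum_i v_i c_i$ does not suffice, is that $v_1 \preceq v_2$ is strictly weaker than the pointwise inequality $(v_1)_i \le (v_2)_i$: the mass of $v_2$ may sit at higher indices to compensate for fewer entries at low indices, as in $v_1={\rm e}_0+{\rm e}_1 \preceq 2{\rm e}_1 = v_2$. One therefore cannot compare $\varphi^{(\gamma^{\mathbf{p}})}(v_1)$ and $\varphi^{(\gamma^{\mathbf{p}})}(v_2)$ summand by summand in $i$, and the monotonicity of the coefficient sequence $c_i$ in $i$ must be exploited — via Abel summation — to transfer upper-tail control of the $v_\alpha$ into upper-tail control of their weighted sums.
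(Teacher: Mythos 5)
Your proof is correct, and it fills a real gap that the paper's one-line proof glosses over. The paper simply cites equation~\eqref{eq:generalPhiFunc}, the second property of Definition~\ref{def:generalBoundCondition}, Lemma~\ref{lem:keepRelation} and Lemma~\ref{lem:preceqMultipleSummands}; the last of these is a termwise comparison of sums, which — as you rightly point out with the example ${\rm e}_0+{\rm e}_1\preceq 2{\rm e}_1$ — cannot be applied directly to $\sum_i (v_1)_i c_i$ versus $\sum_i (v_2)_i c_i$ over the index $i$, since $\preceq$ does not imply entrywise domination. The intended way to use Lemma~\ref{lem:preceqMultipleSummands} is presumably to re-decompose $v_1$ and $v_2$ into sorted unit masses, i.e.\ write $v_\alpha=\sum_{k=1}^{N_\alpha}{\rm e}_{d^\alpha_k}$ with $d^\alpha_1\ge d^\alpha_2\ge\dots$, observe that $v_1\preceq v_2$ is equivalent to $N_1\le N_2$ together with $d^1_k\le d^2_k$ for $k\le N_1$, and then compare $\sum_k c_{d^1_k}\preceq\sum_k c_{d^2_k}$ termwise in $k$ using the monotonicity $c_i\preceq c_{i'}$ for $i\le i'$. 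You instead establish the same reduction to that monotonicity and then transfer the upper-tail control via Abel summation on the scalar tail functionals $g_K$; this is an equally valid and arguably more transparent route, since it never requires sorting the histogram. Both arguments hinge on the same two facts you isolate and prove cleanly: that $\gamma^{\mathbf{p}}_{\min(i,p_1)}$ is $\preceq$-non-decreasing in $i$ (property 2 plus Lemma~\ref{lem:keepRelation}), and that $\textnormal{cl}_j(w)$ is $\preceq$-non-decreasing in $j$ for fixed $w$ — an auxiliary monotonicity the paper uses implicitly but never states, and which your tail-sum computation verifies correctly.
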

\subsection{Derivation of the main result}
For this section, we assume that $m,l_1,\ldots,l_m, r_1,\ldots,r_m$, $\mathbf{p}_1,\ldots,\mathbf{p}_m$ and $\gamma^{\mathbf{p}_1},\ldots,\gamma^{\mathbf{p}_m} $ are defined as in Section~\ref{sec:mainResult}. We first show that the activation histogram join is replicated for input dimension larger than the first hidden layer dimension of the subnetwork.
  \begin{lemma}
  \label{lem:replication}
  For $p_0,l\in\mathbb{N}_{+}$, $\mathbf{p}=(p_1,\dots,p_l)\in\mathbb{N}^l_{+}$ and $p_0> p_1$, it holds that 
  $\tau^{\mathbf{p}}_{p_0}=\tau^{\mathbf{p}}_{p_1}$.
  \begin{proof}Similarly to Lemma~\ref{lem:tauBoundCondition} it holds that $\tau^{\mathbf{p}}_{\tilde p_0}\preceq \tau^{\mathbf{p}}_{\tilde p_0+1}$ for all $\tilde p_0\in\mathbb{N}_+$, in particular $\tau^{\mathbf{p}}_{p_1}\preceq \tau^{\mathbf{p}}_{p_0}$. It remains to show that $\tau_{p_0}^{\mathbf{p}}\preceq\tau_{p_1}^{\mathbf{p}}$. To this end, take $\mathbf{h}=(h_1,\dots,h_l)\in\textnormal{RL}(p_0,\mathbf{p})$. By definition $h_1\in\textnormal{RL}(p_0,p_1)$. Since $h_1$ has only $p_1<p_0$ neurons there exists a $p_1$-dimensional affine subspace $U\subset \mathbb{R}^{p_0}$ such that for every $x\in\mathbb{R}^{p_0}$ there exists $x_U\in U$ with $h_1(x)=h_1(x_U)$. With a bijective affine linear map $\Phi:\mathbb{R}^{p_1}\to U$ and $\tilde{\mathbf{h}}=(h_1\circ \Phi,h_2,\dots,h_l)\in\textnormal{RL}(p_1,\mathbf{p})$ it follows that $\mathcal{S}_{\mathbf{h}}=\mathcal{S}_{\tilde{\mathbf{h}}}$. Since $\mathbf{h}$ was arbitrary, $\tau_{p_0}^{\mathbf{p}}\preceq \tau_{p_1}^{\mathbf{p}}$ as required.
  \end{proof}
\end{lemma}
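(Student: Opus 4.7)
The plan is to prove the two $\preceq$-inequalities $\tau^{\mathbf{p}}_{p_1} \preceq \tau^{\mathbf{p}}_{p_0}$ and $\tau^{\mathbf{p}}_{p_0} \preceq \tau^{\mathbf{p}}_{p_1}$ separately. The first inequality, namely that shrinking the input dimension cannot increase the join, is essentially a generalization of the argument used in Lemma~\ref{lem:tauBoundCondition}. Given any $\mathbf{h} = (h_1,\dots,h_l) \in \textnormal{RL}(p_1,\mathbf{p})$, I would extend the weight matrix $W_{h_1}\in\mathbb{R}^{p_1\times p_1}$ to some $\tilde W_{h_1}\in\mathbb{R}^{p_1\times p_0}$ by appending $p_0-p_1$ arbitrary columns (zeros are convenient), keeping the bias unchanged, to produce $\tilde h_1\in\textnormal{RL}(p_0,p_1)$. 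The extended subnetwork $\tilde{\mathbf{h}} = (\tilde h_1,h_2,\dots,h_l)\in\textnormal{RL}(p_0,\mathbf{p})$ then attains at least every tuple of activation patterns that $\mathbf{h}$ does, by restricting inputs to $\mathbb{R}^{p_1}\times\{0\}^{p_0-p_1}$. Taking the join over $\mathbf{h}$ yields the inequality.

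The interesting direction is $\tau^{\mathbf{p}}_{p_0}\preceq\tau^{\mathbf{p}}_{p_1}$, which uses the rank deficiency of the first layer when $p_0>p_1$. Given any $\mathbf{h}\in\textnormal{RL}(p_0,\mathbf{p})$, the affine map $x\mapsto W_{h_1}x+b_{h_1}$ factors through the quotient $\mathbb{R}^{p_0}/\ker(W_{h_1})$, a vector space of dimension at most $p_1$. I would fix any $p_1$-dimensional affine subspace $U\subset\mathbb{R}^{p_0}$ that is transversal to $\ker(W_{h_1})$ in the sense that $U+\ker(W_{h_1})=\mathbb{R}^{p_0}$; concretely, one can take a linear subspace complementary to $\ker(W_{h_1})$ and enlarge it to dimension $p_1$ if necessary. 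This ensures that $h_1(\mathbb{R}^{p_0})=h_1(U)$, because every fibre of $W_{h_1}$ meets $U$.

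Next, I would pick any bijective affine linear parametrization $\Phi:\mathbb{R}^{p_1}\to U$ and set $\tilde h_1:=h_1\circ\Phi$. Since $\Phi(x)=Ax+a$ for some $A$ and $a$, we have $\tilde h_1(x)=\textnormal{ReLU}\bigl(W_{h_1}Ax + (W_{h_1}a+b_{h_1})\bigr)$, so $\tilde h_1\in\textnormal{RL}(p_1,p_1)$. Defining $\tilde{\mathbf{h}}:=(\tilde h_1,h_2,\dots,h_l)\in\textnormal{RL}(p_1,\mathbf{p})$, the bijection $\Phi$ together with the identity $h_1(\mathbb{R}^{p_0})=h_1(U)$ implies that the set of $l$-tuples of activation patterns is identical for both networks, $\mathcal{S}_{\mathbf{h}}=\mathcal{S}_{\tilde{\mathbf{h}}}$, hence their histograms coincide. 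Since $\mathbf{h}$ was arbitrary, taking the join gives $\tau^{\mathbf{p}}_{p_0}\preceq\tau^{\mathbf{p}}_{p_1}$.

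The main obstacle I anticipate is the careful construction of $U$ and the verification that $\mathcal{S}_{\mathbf{h}}=\mathcal{S}_{\tilde{\mathbf{h}}}$: one must check that every activation pattern tuple attained at some $x\in\mathbb{R}^{p_0}$ is attained at the unique $x_U\in U$ in the same $W_{h_1}$-fibre (which gives $h_1(x)=h_1(x_U)$ and hence equal patterns in layers $2,\dots,l$ too), and conversely that the bijectivity of $\Phi$ prevents the smaller network from producing any new pattern. Both points follow from the rank-nullity theorem once stated, and then the lemma drops out without further work.
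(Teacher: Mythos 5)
Your proposal is correct and follows essentially the same route as the paper: the direction $\tau^{\mathbf{p}}_{p_1}\preceq\tau^{\mathbf{p}}_{p_0}$ by extending the first weight matrix with extra columns (the argument of Lemma~\ref{lem:tauBoundCondition}), and the direction $\tau^{\mathbf{p}}_{p_0}\preceq\tau^{\mathbf{p}}_{p_1}$ by choosing a $p_1$-dimensional affine subspace $U$ meeting every fibre of $W_{h_1}$ and reparametrizing via a bijective affine map $\Phi:\mathbb{R}^{p_1}\to U$ so that $\mathcal{S}_{\mathbf{h}}=\mathcal{S}_{\tilde{\mathbf{h}}}$. The only difference is that you make the existence of $U$ explicit through $\ker(W_{h_1})$ and rank--nullity, which the paper leaves implicit.
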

The above result makes clear why the first index in Definition \ref{def:generalBoundCondition} ranges only up to $p_1$: Since $\tau_{p_0}^{\mathbf{p}}$ is equalt to $\tau_{p_1}^{\mathbf{p}}$ for indices $p_0>p_1$ a collection of elementary bounds only needs to consider the input dimension up to $p_1$.
\begin{corollary}
  \label{cor:gammaMin}
  Assume that $l\in\mathbb{N}_+$. Then
\begin{equation*}
	\forall p_0\in\mathbb{N}_{+},\mathbf{p}=(p_1,\dots,p_l)\in\mathbb{N}_+^l, \gamma^{\mathbf{p}}\in\Gamma^{\mathbf{p}}:\quad \tau_{p_0}^{\mathbf{p}}\preceq \gamma^{\mathbf{p}}_{\min(p_0,p_1)}.
\end{equation*}
\begin{proof}
	We only need to consider the case where $p_0\ge p_1$. In this case, the result follows from Lemma~\ref{lem:replication}.
\end{proof}
\end{corollary}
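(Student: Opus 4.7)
The plan is to split into the two cases $p_0 \le p_1$ and $p_0 > p_1$ and dispatch each with one of the tools already available. Writing $q := \min(p_0, p_1)$, the inequality to prove is $\tau_{p_0}^{\mathbf{p}} \preceq \gamma^{\mathbf{p}}_q$, and in both regimes the right-hand side is something the bound condition of Definition~\ref{def:generalBoundCondition} can control directly.

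First I would handle the easy case $p_0 \le p_1$, where $q = p_0 \in \{1,\dots,p_1\}$. Then $\gamma^{\mathbf{p}}_{q} = \gamma^{\mathbf{p}}_{p_0}$ and the first property of the subnetwork bound condition gives $\tau_{p_0}^{\mathbf{p}} \preceq \gamma^{\mathbf{p}}_{p_0}$ immediately, which is exactly what is claimed.

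The remaining case is $p_0 > p_1$, where $q = p_1$ and $\gamma^{\mathbf{p}}_{p_0}$ is not even defined by the collection. The key observation is that Lemma~\ref{lem:replication} tells us $\tau_{p_0}^{\mathbf{p}} = \tau_{p_1}^{\mathbf{p}}$ whenever $p_0 > p_1$, i.e.\ once the input dimension exceeds the first subnetwork width, enlarging it further does not give any new activation patterns because the image of the first layer already lives in an affine subspace of dimension at most $p_1$. Combining this identity with the first property of the bound condition applied at the admissible index $p_1 \in \{0,\dots,p_1\}$ yields $\tau_{p_0}^{\mathbf{p}} = \tau_{p_1}^{\mathbf{p}} \preceq \gamma^{\mathbf{p}}_{p_1} = \gamma^{\mathbf{p}}_{q}$, completing this case.

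There is no real obstacle: the entire work has been done in Lemma~\ref{lem:replication}, and the corollary is essentially a bookkeeping statement explaining why it is harmless that the subnetwork bound condition only indexes $p_0$ up to $p_1$. The only thing to be slightly careful about is the convention on the range of the index $p_0$ in $\gamma^{\mathbf{p}}_{p_0}$, which is exactly what forces the appearance of $\min(p_0, p_1)$ on the right-hand side.
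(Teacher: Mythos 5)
Your proposal is correct and follows the same route as the paper: the case $p_0\le p_1$ is immediate from the first property of the subnetwork bound condition, and the case $p_0>p_1$ reduces to it via the identity $\tau_{p_0}^{\mathbf{p}}=\tau_{p_1}^{\mathbf{p}}$ from Lemma~\ref{lem:replication}. You have merely written out explicitly the two steps the paper compresses into one sentence.
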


To formalize the proofs below, we define the set of attained activation patterns in the first $l$ layers for $l\in\{1,\ldots,L\}$.
 \begin{definition}
	 For $l\in\left\{ 1,\dots,L \right\}$ let $ \mathcal{S}_{\mathbf{h}}^{(l)}=\left\{ (s_1,\dots,s_l)\middle\vert s\in\mathcal{S}_{\mathbf{h}} \right\}\subset \left\{ 0,1 \right\}^{n_1}\times\dots\times \left\{ 0,1 \right\}^{n_l}.$
 \end{definition}
 The following lemma is the basic building block for the proof of the main result Theorem~\ref{thm:generalMainResultPhi}. It bounds the activation histogram attained in subnetworks of the full network and makes use of the previously chosen collection of histograms satisfying the bound condition for this subnetwork. The non-tight estimate used here is responsible for the composition loss described in Section~\ref{sec:compositionLoss}. Below we write $\left\{ 0,1 \right\}^{\mathbf{p}_i}=\{0,1\}^{r_{i-1}+1}\times\cdots\times\{0,1\}^{r_i}$ for $i\in\{1,\ldots m\}$.
\begin{lemma}
  \label{lem:boundKappaByGammaNew}
  Assume $i\in \left\{ 1,\dots,m \right\}$ and fix $(s_1^*,\dots,s_{r_{i-1}}^*)\in\mathcal{S}_{\mathbf{h}}^{(r_{i-1})}$. Then
  \begin{align*}
	  &\;\sum_{\overset{(s_{r_{i-1}+1},\dots,s_{r_i})}{\in\left\{ 0,1 \right\}^{\mathbf{p}_i}}}\mathds{1}_{\mathcal{S}_{\mathbf{h}}^{(r_i)}}\left( (s^*_1,\dots,s^*_{r_{i-1}},s_{r_{i-1}+1},\dots,s_{r_i} )\right){\rm e}_{\min(|s_{r_{i-1}+1}|,\dots,|s_{r_i}|)} \\
  \preceq&\;  \gamma_{\min\left( n_0,|s_1^*|,\dots,|s_{r_{i-1}}^*|,n_{r_{i-1}+1} \right)}^{\mathbf{p}_i}.
  \end{align*}
  \begin{proof}
	  With $R=\big\{ x\in\mathbb{R}^{n_0}\vert\;S_{\mathbf{h}}(x)_j=s_j^*\textnormal{ for }j\in\{1,\dots,r_{i-1}\} \big\}$, $\mathbf{g}=(h_{r_{i-1}+1},\dots,h_{r_i})$ and $f: R\to \mathbb{R}^{n_{r_{i-1}}}, x\mapsto h_{r_i-1}\circ\dots\circ h_1(x)$ it holds that
  \begin{align*}
	  &\;\sum_{\overset{(s_{r_{i-1}+1},\dots,s_{r_i})}{\in\left\{ 0,1 \right\}^{\mathbf{n}_i}}}\mathds{1}_{\mathcal{S}_{\mathbf{h}}^{(r_i)}}\left( (s^*_1,\dots,s^*_{r_{i-1}},s_{r_{i-1}+1},\dots,s_{r_i} )\right){\rm e}_{\min(|s_{r_{i-1}+1}|,\dots,|s_{r_i}|)} \\
	  =&\; \sum_{
		  \overset{(s_{r_{i-1}+1},\dots,s_{r_i})\in}{\{(S_{\mathbf{h}}(x)_{r_{i-1}+1},\ldots,S_{\mathbf{h}}(x)_{r_i} )|x\in R\}}
  }{\rm e}_{\min(|s_{r_{i-1}+1}|,\dots,|s_{r_{i}}|)}\\
	  =&\; \sum_{
		  \overset{(s_{r_{i-1}+1},\dots,s_{r_i})\in}{\{(S_{\mathbf{g}}(f(x))_{1},\ldots,S_{\mathbf{g}}(f(x))_{l_i} )|x\in R\}}
  }{\rm e}_{\min(|s_{r_{i-1}+1}|,\dots,|s_{r_{i}}|)} \\
	  =&\; \sum_{
		  \overset{(s_{r_{i-1}+1},\dots,s_{r_i})\in}{\{(S_{\mathbf{g}}(f(x))_{1},\ldots,S_{\mathbf{g}}(f(x))_{l_i} )|x\in R\}}
	  }{\rm e}_{\min(|s_{r_{i-1}+1}|,\dots,|s_{r_{i}}|)}=:(*)
\end{align*} 
If we allow all inputs $x\in\mathbb{R}^{n_0}$ instead of $x\in R$ the resulting histogram can only get larger with respect to $\preceq$. Note that $f$ is affine linear because its domain is a region with constant activation pattern for the first $r_{i-1}$ layers. Let $\tilde f$ be its affine linear extension to $\mathbb{R}^{n_0}$. Since $\tilde f$ is affine linear, there exists a bijective affine linear map $\Phi:\mathbb{R}^{\textnormal{rank}(\tilde f)}\to \tilde f(\mathbb{R}^{n_0})$. With
$\tilde{\mathbf{g}}=(h_{r_{i-1}+1}\circ\Phi,h_{r_{i-1}+2}\dots,h_{r_i})$ it holds that
  \begin{align*}
	  (*)\preceq&\; \sum_{
		  \overset{(s_{r_{i-1}+1},\dots,s_{r_i})\in}{\{(S_{\mathbf{g}}(\tilde f(x))_{1},\ldots,S_{\mathbf{g}}(\tilde f(x))_{l_1} )|x\in \mathbb{R}^{n_0}\}}
	  }{\rm e}_{\min(|s_{r_{i-1}+1}|,\dots,|s_{r_{i}}|)}\\
	  =&\; \sum_{
		  \overset{(s_{r_{i-1}+1},\dots,s_{r_i})\in}{\{(S_{\tilde{\mathbf{g}}}(x)_{1},\ldots,S_{\tilde{\mathbf{g}}}(x)_{l_1} )|x\in \mathbb{R}^{\textnormal{rank}(\tilde f)}\}}
	  }{\rm e}_{\min(|s_{r_{i-1}+1}|,\dots,|s_{r_{i}}|)}=\sum_{s\in\mathcal{S}_{\tilde{\mathbf{g}}}}{\rm e}_{\min(|s_1|,\dots,|s_{l_i}|)}\preceq\tau_{ \textnormal{rank}(\tilde f)}^{\mathbf{p}_i},
\end{align*} 
where the last inequality holds by the Definition~\ref{def:multitau} of $\tau$ and $\tilde{\mathbf{g}}\in\textnormal{RL}(\textnormal{rank}(\tilde f),\mathbf{p}_i)$. By the first property of the bound condition from Definition~\ref{def:generalBoundCondition} and by Corollary~\ref{cor:gammaMin} it follows $\tau_{\textnormal{rank}(\tilde f)}^{\mathbf{p}_i}\preceq\gamma_{\min\left( \textnormal{rank}(\tilde f),n_{r_{i-1}+1} \right)}^{\mathbf{p}_i}$.
Now note that $\textnormal{rank}(\tilde f)$ is bounded by $ \min(n_0,|s^*_1|,\dots,|s^*_{r_{i-1}}|)$ which concludes the proof by the second property of the bound condition.
  \end{proof}
\end{lemma}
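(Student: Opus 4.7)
The plan is to recognize the left-hand histogram as the activation histogram of the $i$-th subnetwork $\mathbf{g} := (h_{r_{i-1}+1},\ldots,h_{r_i})$ restricted to the preimage region in which the earlier layers realize $(s^*_1,\ldots,s^*_{r_{i-1}})$, and then to dominate it via the join $\tau^{\mathbf{p}_i}$ after reparametrizing onto the correct dimension.

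First I would introduce the region $R := \{x \in \mathbb{R}^{n_0} \mid S_{\mathbf{h}}(x)_j = s^*_j \text{ for } j \le r_{i-1}\}$ together with the composition $f := h_{r_{i-1}} \circ \cdots \circ h_1$. On $R$ each $h_j$ coincides with its affine linear piece $\tilde h_j^{(s_j^*)}$, so $f|_R$ is affine linear, and iterating the rank bound~\eqref{eq:rankBound} across the first $r_{i-1}$ layers gives $\textnormal{rank}(f|_R) \le \min(n_0,|s_1^*|,\ldots,|s_{r_{i-1}}^*|)$. Next, I would rewrite the indicator sum on the left as a sum over those $(s_{r_{i-1}+1},\ldots,s_{r_i})$ attained by $\mathbf{g}$ on $f(R)$, then replace $f|_R$ by an affine linear extension $\tilde f: \mathbb{R}^{n_0} \to \mathbb{R}^{n_{r_{i-1}}}$; enlarging the domain only adds attained patterns, so the histogram can only grow with respect to $\preceq$.

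Because $\tilde f$ factors through a bijective affine linear map $\Phi: \mathbb{R}^{\textnormal{rank}(\tilde f)} \to \tilde f(\mathbb{R}^{n_0})$, the enlarged histogram equals the one induced by $\tilde{\mathbf{g}} := (h_{r_{i-1}+1}\circ\Phi, h_{r_{i-1}+2},\ldots,h_{r_i}) \in \textnormal{RL}(\textnormal{rank}(\tilde f), \mathbf{p}_i)$ on $\mathbb{R}^{\textnormal{rank}(\tilde f)}$. By Definition~\ref{def:multitau} this is $\preceq \tau_{\textnormal{rank}(\tilde f)}^{\mathbf{p}_i}$, which by Corollary~\ref{cor:gammaMin} is $\preceq \gamma_{\min(\textnormal{rank}(\tilde f),\, n_{r_{i-1}+1})}^{\mathbf{p}_i}$. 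Combining the rank bound from the first step with the monotonicity in the first subscript (second clause of Definition~\ref{def:generalBoundCondition}) then yields the stated inequality.

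The main obstacle is the bookkeeping in the second step: one must argue cleanly that passing from the attained patterns on $f(R)$ to those on $\tilde f(\mathbb{R}^{n_0})$ only enlarges the set (so that the minimum-based histogram genuinely grows under $\preceq$), and that pulling back through $\Phi$ preserves the attained-pattern set exactly since $\Phi$ is an affine bijection onto the relevant image. The iterated rank bound itself is straightforward from~\eqref{eq:rankBound}, but it is worth spelling out because the final monotonicity step on the subscript of $\gamma^{\mathbf{p}_i}$ relies on it.
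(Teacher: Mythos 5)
Your proposal is correct and follows essentially the same route as the paper's proof: the region $R$, the affine linearity of $f$ on $R$ with the iterated rank bound, the affine extension $\tilde f$ and domain enlargement, the reparametrization through the affine bijection $\Phi$ to land in $\textnormal{RL}(\textnormal{rank}(\tilde f),\mathbf{p}_i)$, the bound by $\tau_{\textnormal{rank}(\tilde f)}^{\mathbf{p}_i}$ via Definition~\ref{def:multitau} and Corollary~\ref{cor:gammaMin}, and the final monotonicity step. No gaps.
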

\begin{definition}
  For $l\in\left\{ 1,\dots,L \right\}$, define the \emph{dimension histogram} by
  \begin{equation*}
  {\mathcal{H}}^{(l)}_{\mathbf{h}}=\sum_{(s_1,\dots,s_l)\in S^{(l)}_{\mathbf{h}}}{\rm e}_{\min(n_0,|s_1|,\dots,|s_l|) }
  \end{equation*}
\end{definition}
Below we present the final two ingredients to prove Theorem~\ref{thm:generalMainResultPhi}. The following result plays a similar role as the anchor in an induction proof.
\begin{lemma} It holds that ${\mathcal{H}}^{(r_1)}_{\mathbf{h}}\preceq\varphi^{(\gamma^{\mathbf{p}_1})}({\rm e}_{n_0})$.
 \label{lem:main2New} 
   \begin{proof}
	   Note that $\mathbf{g}:=(h_1,\dots,h_{r_1})\in\textnormal{RL}(n_0,\mathbf{p}_1)$ such that Lemma~\ref{lem:keepRelation}
	 and Corollary~\ref{cor:gammaMin}
	 imply 
	 \begin{equation*}
		 {\mathcal{H}}^{(r_1)}_{\mathbf{h}}=\textnormal{cl}_{\min(n_0,n_1)}(\sum_{s\in\mathcal{S}^{(r_1)}_{\mathbf{h}}}{\rm e}_{\min(|s_1|,\dots,|s_{r_1}|)|})\preceq\textnormal{cl}_{\min(n_0,n_1)}\left( \gamma_{\min(n_0,n_1)}^{\mathbf{p}_1} \right),
	 \end{equation*}
	 where the last expression is equal to $\varphi^{(\gamma^{\mathbf{p}_1})}\left( {\rm e}_{n_0} \right)$ by definition in equation~\eqref{eq:generalPhiFunc}
   \end{proof}
 \end{lemma}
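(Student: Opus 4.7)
The plan is to reduce both sides to a common expression involving $\textnormal{cl}_{\min(n_0,n_1)}$ applied to $\gamma^{\mathbf{p}_1}_{\min(n_0,n_1)}$, by unpacking the definitions and invoking the subnetwork bound condition for $\mathbf{p}_1$. The key observation is that $\mathbf{g} := (h_1,\dots,h_{r_1}) \in \textnormal{RL}(n_0,\mathbf{p}_1)$ satisfies $\mathcal{S}_{\mathbf{g}} = \mathcal{S}^{(r_1)}_{\mathbf{h}}$, so the raw joint-activation histogram $\sum_{s \in \mathcal{S}_{\mathbf{g}}} {\rm e}_{\min(|s_1|,\dots,|s_{r_1}|)}$ is $\preceq$-dominated by $\tau^{\mathbf{p}_1}_{n_0}$ directly from Definition~\ref{def:multitau}.

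First I would rewrite the left-hand side using the fact that $\min(|s_1|,\dots,|s_{r_1}|) \le |s_1| \le n_1$, so clipping at the cap $\min(n_0,n_1)$ only truncates values that already exceed $n_0$, and one obtains
\[
	\mathcal{H}^{(r_1)}_{\mathbf{h}} = \textnormal{cl}_{\min(n_0,n_1)}\!\left(\sum_{s\in\mathcal{S}_{\mathbf{g}}}{\rm e}_{\min(|s_1|,\dots,|s_{r_1}|)}\right).
\]
Then I would invoke Corollary~\ref{cor:gammaMin} to get $\tau^{\mathbf{p}_1}_{n_0} \preceq \gamma^{\mathbf{p}_1}_{\min(n_0,n_1)}$ (valid for both $n_0 \le n_1$ and $n_0 > n_1$), and apply the monotonicity of clipping from Lemma~\ref{lem:keepRelation} to conclude $\mathcal{H}^{(r_1)}_{\mathbf{h}} \preceq \textnormal{cl}_{\min(n_0,n_1)}(\gamma^{\mathbf{p}_1}_{\min(n_0,n_1)})$.

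Finally I would match this to the right-hand side by unpacking equation~\eqref{eq:generalPhiFunc} at the input ${\rm e}_{n_0}$: since only the index $i = n_0$ contributes a nonzero coefficient, we get $\varphi^{(\gamma^{\mathbf{p}_1})}({\rm e}_{n_0}) = \textnormal{cl}_{\min(n_0,n_1)}(\gamma^{\mathbf{p}_1}_{\min(n_0,n_1)})$, which closes the argument. The only mild obstacle is the careful index bookkeeping in the first step, where invoking Corollary~\ref{cor:gammaMin} (rather than property~1 of Definition~\ref{def:generalBoundCondition} directly) is needed to uniformly handle the regime $n_0 > n_1$ in which the input dimension to the subnetwork exceeds the width of its first layer; beyond that, the proof is a routine composition of the already-established definitions and monotonicity lemmas.
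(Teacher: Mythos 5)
Your proposal is correct and follows essentially the same route as the paper's proof: identify $\mathbf{g}=(h_1,\dots,h_{r_1})\in\textnormal{RL}(n_0,\mathbf{p}_1)$, write $\mathcal{H}^{(r_1)}_{\mathbf{h}}$ as a clipping of the joint-activation histogram, bound that histogram by $\tau^{\mathbf{p}_1}_{n_0}\preceq\gamma^{\mathbf{p}_1}_{\min(n_0,n_1)}$ via Corollary~\ref{cor:gammaMin}, and push the bound through $\textnormal{cl}_{\min(n_0,n_1)}$ using Lemma~\ref{lem:keepRelation} before matching with equation~\eqref{eq:generalPhiFunc}. You merely spell out a few steps (the identity $\mathcal{S}_{\mathbf{g}}=\mathcal{S}^{(r_1)}_{\mathbf{h}}$ and the index bookkeeping for the clipping) that the paper leaves implicit.
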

The following second incredient plays a similar role as the inductive step in an induction proof.
 \begin{proposition}
   \label{prop:main2}
	For $i\in\left\{ 2,\dots,m \right\}$ it holds that ${\mathcal{H}}^{(r_i)}_{\mathbf{h}}\preceq \varphi^{(\gamma^{\mathbf{p}_i})}({\mathcal{H}}^{(r_{i-1})}_{\mathbf{h}})$.
   \begin{proof} The statement is implied by the following calculation, where the step indicated by $(*)$ follows from the Lemmas~\ref{lem:boundKappaByGammaNew}, \ref{lem:keepRelation} and \ref{lem:preceqMultipleSummands}.
\begin{align*}
{\mathcal{H}}^{(r_i)}_{\mathbf{h}}=& \sum_{\overset{(s_1,\dots,s_{r_{i-1}})}{\in\mathcal{S}_{\mathbf{h}}^{(r_{i-1})}}}^{}\sum_{\overset{(s_{r_{i-1}+1},\dots,s_{r_i})}{\in\left\{ 0,1 \right\}^{\mathbf{p}_i}}}^{}\mathds{1}_{\mathcal{S}_{\mathbf{h}}^{(r_i)}}\left( (s_1,\dots,s_{r_i} )\right) {\rm e}_{  \left( \min(n_0,|s_1|,\dots,|s_{r_i}|) \right)}\\
=&  \sum_{\overset{(s_1,\dots,s_{r_{i-1}})}{\in\mathcal{S}_{\mathbf{h}}^{(r_{i-1})}}}\textnormal{cl}_{\min(n_0,|s_1|,\dots,|s_{r_{i-1}}|,n_{r_{i-1}+1})}\bigg(\underbrace{ \sum_{\overset{(s_{r_{i-1}+1},\dots,s_{r_i})}{\in\left\{ 0,1 \right\}^{\mathbf{p}_i}}}\mathds{1}_{\mathcal{S}_{\mathbf{h}}^{(r_i)}}\left( (s_1,\dots,s_{r_i} )\right){\rm e}_{\min(|s_{r_{i-1}+1}|,\dots,|s_{r_i}|)}}_{\preceq \gamma_{\min(n_0,|s_1|,\dots,|s_{r{i-1}}|,n_{r_{i-1}+1})}^{\mathbf{p}_i}}  \bigg)\\
	\overset{(*)}{\preceq}&  \sum_{(s_1,\dots,s_{r_{i-1}})\in\mathcal{S}_{\mathbf{h}}^{(r_{i-1})}}^{}\textnormal{cl}_{\min(n_0,|s_1|,\dots,|s_{r_{i-1}}|,n_{r_{i-1}+1})}(\gamma_{\min(n_0,|s_1|,\dots,|s_{r_{i-1}}|,n_{r_{i-1}+1})}^{\mathbf{p}_i}  )\\
=&  \sum_{j=0}^{\infty}\left( {\mathcal{H}}^{(r_{i-1})}_{\mathbf{h}} \right)_j\textnormal{cl}_{\min(j,n_{r_{i-1}+1})}\left(\gamma_{\min(j,n_{r_{i-1}+1})}^{\mathbf{p}_i}\right)=  \sum_{j=0}^{\infty}\left( {\mathcal{H}}^{(r_{i-1})}_{\mathbf{h}} \right)_j \varphi^{(\gamma^{\mathbf{p}_i})}({\rm e}_j)\\
	=& \;  \varphi^{(\gamma^{\mathbf{p}_i})}\left( \sum_{j=0}^{\infty}\left( {\mathcal{H}}^{(r_{i-1})}_{\mathbf{h}} \right)_j{\rm e}_j \right)=\varphi^{(\gamma^{\mathbf{p}_i})}\left( {\mathcal{H}}^{(r_{i-1})}_{\mathbf{h}} \right).\qedhere
\end{align*}
   \end{proof}
 \end{proposition}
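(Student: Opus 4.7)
The plan is to expand the definition of $\mathcal{H}^{(r_i)}_{\mathbf{h}}$ as a nested sum---outer sum over activation patterns $(s_1,\dots,s_{r_{i-1}})\in\mathcal{S}^{(r_{i-1})}_{\mathbf{h}}$ of the first $i-1$ subnetworks and inner sum over $(s_{r_{i-1}+1},\dots,s_{r_i})$ extending a fixed outer pattern inside $\mathcal{S}^{(r_i)}_{\mathbf{h}}$---and then apply Lemma~\ref{lem:boundKappaByGammaNew}, which is precisely tailored to bound such an inner sum.

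First I would use the identity $\min(n_0,|s_1|,\dots,|s_{r_i}|)=\min\bigl(M,|s_{r_{i-1}+1}|,\dots,|s_{r_i}|\bigr)$ with $M:=\min(n_0,|s_1|,\dots,|s_{r_{i-1}}|)$, so that the inner contribution for a fixed outer tuple can be pulled under a single clipping: using ${\rm e}_{\min(M,j)}=\textnormal{cl}_M({\rm e}_j)$ and linearity of $\textnormal{cl}_M$, the inner part equals $\textnormal{cl}_M\bigl(\sum_{(s_{r_{i-1}+1},\dots,s_{r_i})}\mathds{1}_{\mathcal{S}^{(r_i)}_{\mathbf{h}}}(\cdot)\,{\rm e}_{\min(|s_{r_{i-1}+1}|,\dots,|s_{r_i}|)}\bigr)$.

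Next I would invoke Lemma~\ref{lem:boundKappaByGammaNew} to dominate the argument of $\textnormal{cl}_M$ by $\gamma^{\mathbf{p}_i}_{\min(M,n_{r_{i-1}+1})}$, push this $\preceq$-estimate through $\textnormal{cl}_M$ via Lemma~\ref{lem:keepRelation}, and sum the resulting inequalities over the outer tuples using Lemma~\ref{lem:preceqMultipleSummands}. Using $\textnormal{cl}_M\circ\textnormal{cl}_{n_{r_{i-1}+1}}=\textnormal{cl}_{\min(M,n_{r_{i-1}+1})}$ to merge the two clippings, this yields
\[\mathcal{H}^{(r_i)}_{\mathbf{h}}\preceq \sum_{(s_1,\dots,s_{r_{i-1}})\in \mathcal{S}^{(r_{i-1})}_{\mathbf{h}}}\textnormal{cl}_{\min(M,n_{r_{i-1}+1})}\!\bigl(\gamma^{\mathbf{p}_i}_{\min(M,n_{r_{i-1}+1})}\bigr).\]

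Finally, I would regroup the outer sum by the value $j:=M$. By definition of $\mathcal{H}^{(r_{i-1})}_{\mathbf{h}}$, the number of outer tuples with $M=j$ is exactly $(\mathcal{H}^{(r_{i-1})}_{\mathbf{h}})_j$, so the right-hand side becomes $\sum_{j}(\mathcal{H}^{(r_{i-1})}_{\mathbf{h}})_j\,\textnormal{cl}_{\min(j,n_{r_{i-1}+1})}\bigl(\gamma^{\mathbf{p}_i}_{\min(j,n_{r_{i-1}+1})}\bigr)$, which is precisely $\varphi^{(\gamma^{\mathbf{p}_i})}(\mathcal{H}^{(r_{i-1})}_{\mathbf{h}})$ by \eqref{eq:generalPhiFunc}. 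The main obstacle is keeping the nested clippings and min-operations aligned so that the outer clipping at $M$ combines cleanly with the index of $\gamma^{\mathbf{p}_i}$ to reproduce the $\textnormal{cl}_{\min(\cdot,n_{r_{i-1}+1})}$ factor in $\varphi^{(\gamma^{\mathbf{p}_i})}$; however, since Lemma~\ref{lem:boundKappaByGammaNew} already carries out the genuinely geometric step (extending the intermediate affine map and invoking the subnetwork join bound $\tau^{\mathbf{p}_i}$), what remains is essentially bookkeeping with $\preceq$, summation, and clipping, all of which is supplied by Lemmas~\ref{lem:preceqSum}, \ref{lem:preceqMultipleSummands}, and~\ref{lem:keepRelation}.
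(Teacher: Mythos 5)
Your proposal is correct and follows essentially the same route as the paper: expand $\mathcal{H}^{(r_i)}_{\mathbf{h}}$ as an outer sum over $\mathcal{S}^{(r_{i-1})}_{\mathbf{h}}$ and an inner sum over the $i$-th subnetwork's patterns, absorb the outer minimum into a clipping at $\min(n_0,|s_1|,\dots,|s_{r_{i-1}}|,n_{r_{i-1}+1})$, bound the inner histogram by Lemma~\ref{lem:boundKappaByGammaNew}, push through the clipping with Lemmas~\ref{lem:keepRelation} and~\ref{lem:preceqMultipleSummands}, and regroup by the value $j$ of the outer minimum to recognize $\varphi^{(\gamma^{\mathbf{p}_i})}(\mathcal{H}^{(r_{i-1})}_{\mathbf{h}})$. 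The only (cosmetic) difference is that you introduce the clipping at $M$ first and merge with $\textnormal{cl}_{n_{r_{i-1}+1}}$ afterwards, using that the inner histogram is supported on indices at most $n_{r_{i-1}+1}$, whereas the paper clips at $\min(M,n_{r_{i-1}+1})$ from the outset.
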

  \begin{proof}[Proof of Theorem~\ref{thm:generalMainResultPhi}]
	  First note that $L=r_m$ and that $\vert\mathcal{S}_{\mathbf{h}}^{(r_m)}|=\|{\mathcal{H}}^{(r_m)}_{\mathbf{h}}\|_1$. Now the above Proposition~\ref{prop:main2} and Lemmas~\ref{lem:preceqNorm}, ~\ref{lem:monotonicityPhi} and \ref{lem:main2New} imply that
 \begin{align*}
	 |\mathcal{S}_{\mathbf{h}}|&=  |\mathcal{S}_{\mathbf{h}}^{(r_m)}|=\|{\mathcal{H}}^{(r_m)}_{\mathbf{h}}\|_1\le\|\varphi^{(\gamma^{\mathbf{p}_m})}({\mathcal{H}}^{(r_{m-1})}_{\mathbf{h}})\|_1\le\dots\\
	 &\le \|\varphi^{(\gamma^{\mathbf{p}_m})}\circ\dots\circ\varphi^{(\gamma^{\mathbf{p}_2})}({\mathcal{H}}^{(r_1)}_{\mathbf{h}})\|_1\le\|\varphi^{(\gamma^{\mathbf{p}_m})}\circ\dots\circ\varphi^{(\gamma^{\mathbf{p}_1})}({\rm e}_{n_0})\|_1.
 \end{align*}The matrix formulation easily follows from the insight that all information of the map $\varphi^{(\gamma^{\mathbf{p}_k})}$ can be encoded in the $(p_{r_{k-1}+1}+1)\times(p_{r_{k-1}+1}+1)$ matrix $B^{(\gamma^{\mathbf{p}_k})}$ (in equation~\eqref{eq:generalPhiFunc} the minima  bound the number or columns needed, the clipping function ``$\textnormal{cl}$'' bounds the number of rows needed). The ``+1'' stems from the fact that indexing starts with $0$ in $V$ but with $1$ for matrices.
 \end{proof}
\newpage

\bibliography{obereExtension} 
\bibliographystyle{ieeetr}
  \end{document}